\pdfminorversion=7
\documentclass{article}
\PassOptionsToPackage{numbers,compress}{natbib}
\usepackage{iclr2026_conference,times}

\iclrfinaltrue

\usepackage[utf8]{inputenc} %

\usepackage{graphicx}

\usepackage[T1]{fontenc}

\usepackage[hidelinks,colorlinks]{hyperref}       %
\usepackage{url}            %
\usepackage{booktabs}       %
\usepackage{amsfonts}       %
\usepackage{nicefrac}       %
\usepackage{microtype}      %

\IfFileExists{headers/config/showoverfull.config}{
	\overfullrule=1cm
}{
}

\usepackage{marginnote}

\usepackage[backgroundcolor=none,linecolor=red,textsize=footnotesize]{todonotes}

\usepackage{etoolbox}

\newbool{includeappendix}
\setbool{includeappendix}{true} %
\IfFileExists{headers/config/noappendix.config}{
	\setbool{includeappendix}{false}
}{}

\newif\ifincludeappendixx
\ifbool{includeappendix}{
	\includeappendixxtrue
}{
	\includeappendixxfalse
}

\usepackage{xr} %
\usepackage{filecontents}

\ifbool{includeappendix}{}{
	\input{appendix-labels-loader}

	\externaldocument{appendix-labels}
}

\usepackage{xspace}

\newcommand{\eg}{e.g., }
\newcommand{\ie}{i.e., }

\newcommand{\llada}{\textsc{Llada-8B}\xspace}
\newcommand{\dream}{\textsc{Dream-7B}\xspace}
\newcommand{\dreamon}{\textsc{DreamOn-v0-7B}\xspace}
\newcommand{\booster}{expectation boost\xspace}
\newcommand{\greenify}{predictive bias\xspace}
\newcommand{\pplmodel}{\textsc{Qwen2.5-32B}\xspace}

\usepackage{acro} %

\usepackage{listings}

\usepackage{textcomp}

\usepackage{xcolor}

\usepackage[scaled=0.8]{beramono}

\definecolor{ckeyword}{HTML}{7F0055}
\definecolor{ccomment}{HTML}{3F7F5F}
\definecolor{cstring}{HTML}{2A0099}

\lstdefinestyle{numbers}{
	numbers=left,
	framexleftmargin=20pt,
	numberstyle=\tiny,
	firstnumber=auto,
	numbersep=1em,
	xleftmargin=2em
}

\lstdefinestyle{layout}{
	frame=none,
	captionpos=b,
}

\lstdefinestyle{comment-style}{
	morecomment=[l]//,
	morecomment=[s]{/*}{*/},
	commentstyle={\color{ccomment}\itshape},
}

\lstdefinestyle{string-style}{
	morestring=[b]",%
	morestring=[b]',%
	stringstyle={\color{cstring}},
	showstringspaces=false,%
}

\lstdefinestyle{keyword-style}{
	keywordstyle={\ttfamily\bfseries},
	morekeywords={
		function,
		constructor,
		int,
		bool,
		return,
		returns,
		uint
	},
	morekeywords = [2]{},
	keywordstyle = [2]{\text},
	sensitive=true,
}

\lstdefinestyle{input-encoding}{
	inputencoding=utf8,
	extendedchars=true,
	literate=
	{ℝ}{$\reals$}1%
	{→}{$\rightarrow$}1%
	{α}{$\alpha$}1%
	{β}{$\beta$}1%
	{λ}{$\lambda$}1%
	{θ}{$\theta$}1%
	{ϕ}{$\phi$}1%
}

\lstdefinestyle{escaping}{
	moredelim={**[is][\color{blue}]{\%}{\%}},
	escapechar=|,
	mathescape=true
}

\lstdefinestyle{default-style}{
	basicstyle=\fontencoding{T1}\ttfamily\footnotesize,
	style=numbers,
	style=layout,
	style=comment-style,
	style=string-style,
	style=keyword-style,
	style=input-encoding,
	style=escaping,
	tabsize=2,
	upquote=true
}

\lstdefinelanguage{BASIC}{
	language=C++,
	style=default-style
}[keywords,comments,strings]%

\lstset{language=BASIC}

\usepackage[utf8]{inputenc} %
\usepackage[T1]{fontenc}    %
\usepackage{algcompatible}
\usepackage{caption}
\usepackage{algpseudocode}
\usepackage{enumerate}
\usepackage{url}            %
\usepackage{booktabs}       %
\usepackage{amsfonts}       %
\usepackage{nicefrac}       %
\usepackage{microtype}      %
\usepackage{xcolor}         %
\usepackage{tikz,booktabs,multirow,enumitem,bm}
\usepackage{colortbl}
\usepackage{tabularx}
\usepackage{subcaption}
\usepackage{wrapfig}
\usepackage{tabulary}
\usepackage{amsmath,amsthm,amsfonts, bbm}
\usepackage{makecell}
\usepackage{rotating}
\usepackage{array}
\usepackage{siunitx}
\usepackage[most]{tcolorbox}
\usepackage{algorithm}
\usepackage{CJKutf8}
\usepackage{pifont}

\usepackage{amsmath,amsfonts,bm}

\def\eqref#1{equation~\ref{#1}}

\def\1{\bm{1}}

\DeclareMathAlphabet{\mathsfit}{\encodingdefault}{\sfdefault}{m}{sl}
\SetMathAlphabet{\mathsfit}{bold}{\encodingdefault}{\sfdefault}{bx}{n}

\DeclareMathOperator*{\argmax}{arg\,max}

\newcolumntype{x}[2]{S[table-format=#1.#2,table-auto-round]}
\newcolumntype{y}[2]{>{\small} S[table-format=#1.#2,table-auto-round]}

\definecolor{hyperlinkblue}{HTML}{0000AA}
\hypersetup{citecolor=hyperlinkblue} %

\lstset{
  basicstyle=\ttfamily\small,
  breaklines=true,
  extendedchars=true,
  inputencoding=utf8,
  columns=fullflexible,
  keepspaces=true
}
\lstdefinestyle{mystyle}{
    escapechar=\#,
    breaklines=true,
    basicstyle=\scriptsize\ttfamily,
    numbers=none,
    language={},
    framextopmargin=0pt,
    framexbottommargin=0pt,
    breakindent=0pt,
    showspaces = false,
    keywordstyle=\bfseries,
    showstringspaces=false,
    columns=fullflexible,
    morekeywords={Style, Consistency, Accuracy, Ethics, Score}
}

\newtcblisting{prompt}[2][]{
    arc=3pt, outer arc=3pt,
    width=.9\linewidth,
    left=1mm,
    top=0mm,
    bottom=0mm,
    title=#2, 
    colback=gray!5!white,
    colframe=black!75!black,
    fonttitle=\bfseries,
    listing only, 
    listing options={style=mystyle},
    breakable,
    #1
}

\definecolor{outerbg}{HTML}{F6F6F6}      %
\definecolor{outerframe}{HTML}{D0D0D0}   %

\definecolor{modelAback}{HTML}{DD8452}   %
\definecolor{modelAframe}{HTML}{DD8452}  %

\definecolor{modelBback}{HTML}{4C72B0}   %
\definecolor{modelBframe}{HTML}{4C72B0}  %

\newtcolorbox{promptbox}[1][]{%
  breakable,
  colback=white,
  colframe=black!25,
  fonttitle=\bfseries\small,
  coltitle=black!80,
  title=User~Prompt~{#1},
  boxsep=5pt,
  top=4pt,bottom=4pt,
  arc=1mm,
}

\newtcolorbox{modelbox}[3][]{%
  breakable,
  colback=#2!75,
  colframe=#3!85,
  coltitle=#3!10!black,
  fonttitle=\bfseries\small,   %
  title={#1},
  boxsep=5pt,
  top=4pt,bottom=4pt,
  arc=1mm,
}

\renewcommand{\S}{Sec.~}

\usepackage[capitalize]{cleveref}

\crefformat{section}{\S#2#1#3}

\crefrangeformat{section}{\S#3#1#4\crefrangeconjunction\S#5#2#6}

\crefmultiformat{section}{\S#2#1#3}{\crefpairconjunction\S#2#1#3}{\crefmiddleconjunction\S#2#1#3}{\creflastconjunction\S#2#1#3}

\newcommand{\crefrangeconjunction}{--}

\crefname{listing}{Lst.}{listings}
\crefname{line}{Lin.}{Lin.}
\crefname{appendix}{App.}{App.}

\newcommand{\appref}[1]{%
	\ifbool{includeappendix}{\cref{#1}}{the appendix}%
}
\newcommand{\Appref}[1]{%
	\ifbool{includeappendix}{\cref{#1}}{The appendix}%
}

\usepackage{thmtools}
\usepackage{thm-restate}

\title{Watermarking Diffusion Language Models}

\author{Thibaud Gloaguen, Robin Staab, Nikola Jovanovi\'c, Martin Vechev\\
ETH Zurich\hfil\\
\texttt{thibaud.gloaguen@inf.ethz.ch}\\
}

\begin{document}

\maketitle

\begin{abstract}
We introduce the first watermark tailored for diffusion language models (DLMs), an emergent LLM paradigm able to generate tokens in arbitrary order, in contrast to standard autoregressive language models (ARLMs) which generate tokens sequentially.
While there has been much work in ARLM watermarking, a key challenge when attempting to apply these schemes directly to the DLM setting is that they rely on previously generated tokens, which are not always available with DLM generation. In this work we address this challenge by: (i) applying the watermark in expectation over the context even when some context tokens are yet to be determined, and (ii) promoting tokens which increase the watermark strength when used as context for other tokens. This is accomplished while keeping the watermark detector unchanged.
Our experimental evaluation demonstrates that the DLM watermark leads to a >99\% true positive rate with minimal quality impact and achieves similar robustness to existing ARLM watermarks, enabling for the first time reliable DLM watermarking.
Our code is available \href{https://github.com/eth-sri/diffusion-lm-watermark}{here}.
\vspace{-0.1in}

\end{abstract}

\section{Introduction} \label{sec:intro}

While autoregressive language models (ARLMs) have demonstrated remarkable performance~\citep{gpt4, llama3, sparks}, other approaches to language modeling, such as diffusion language models (DLMs), are rapidly catching up, with recent works~\citep{llada,dream,mercury} approaching similar capabilities at a significantly lower cost.
Notably, DLMs have higher generation speed, offer built-in error correction, are more controllable, and can be naturally extended to multiple modalities~\citep{mmada}.
These advances also increase the risk of misuse, making it paramount to reliably detect text generated by these models.
Prior works~\citep{kgw, kth,unremovable} have proposed the concept of ARLM watermarks: the generated text is augmented with an imperceptible signal that can later be detected to trace its provenance.
Such watermarks are already adopted in consumer-facing models~\citep{synthidtext} and are being advocated for through regulation~\citep{aia}.
Yet, most existing works on language model watermarking rely on hashing mechanisms compatible only with autoregressive generation, highly limiting their application for DLMs.
Our work is the first to attempt to overcome this limitation, enabling efficient and reliable watermarking for DLMs.

\paragraph{ARLM Watermarks}
Watermarks for ARLMs traditionally rely on three key components.
The \emph{hashing mechanism} (i) uses the previously generated tokens to seed the \emph{sampling procedure} (ii), which then inserts the watermark signal into the generated text.
Critically, this pipeline relies on the previous tokens to have already been generated to compute the hash, an assumption often violated with DLMs.
The \emph{watermark detector} (iii) leverages the hashing mechanism to retrieve the seed associated with each token and compute a test statistic to determine whether a given text is watermarked.

\paragraph{Diffusion Language Models}
DLMs produce a probability distribution over the set of fixed-length sequences containing masked tokens, which, analogous to noise in continuous diffusion models, represent placeholders yet to be generated. 
Starting from a (partially) masked sequence, DLMs iteratively sample (\emph{unmask}) tokens until the sequence is fully generated.
Importantly, unlike autoregressive models, DLMs are not constrained to unmasking tokens in a left-to-right order, making hashing-based ARLM watermarks inapplicable whenever a token without full prior context is unmasked.
A natural workaround is to restrict the watermark application to tokens with fully available context.
Yet, as we show in \cref{sec:eval}, such a naive approach leads to a weak watermark, as barely any tokens fulfill this criterion.
This motivates the need for a novel watermarking algorithm tailored to DLMs.

\paragraph{This Work: Watermarking Diffusion LMs}
In this work, we introduce the first watermarking scheme tailored to DLMs based on the prominent Red-Green watermarks (see \cref{sec:rel_work}), as illustrated in \cref{fig:main_figure}. 
First, we identify the limitations of autoregressive hashing in most prior works on ARLM watermarks when applied in the diffusion setting (\cref{fig:main_figure}, left).
To overcome this limitation, we frame watermarking as a constrained optimization problem (\cref{ssec:method_math}). 
From this optimization problem, we then derive a practical watermarking scheme for DLMs (\cref{ssec:diffusion_lm_wm_instantiation}). 
In \cref{ssec:method:interpretation}, we interpret our scheme as a combination of two intuitive components: applying ARLM watermarks in expectation over the context hashes and biasing tokens that lead to hashes making other tokens green~(\cref{fig:main_figure}, middle).
Our detector is exactly the same as in prior Red-Green watermarks: we compute the color of each token in the sequence and perform a binomial test~(\cref{fig:main_figure} (right)). 
In our experiments, we show that our watermarking scheme is significantly more detectable than naive baseline adaptations of ARLM watermarks (exceeding 99\% TPR at 1\% FPR with negligible quality degradation), and is robust to common modifications of the generated text, \eg substituting words in context~(\cref{sec:eval}). \vspace{-0.2pc}

\begin{figure}[t]
    \centering
    \includegraphics[width=0.97\textwidth]{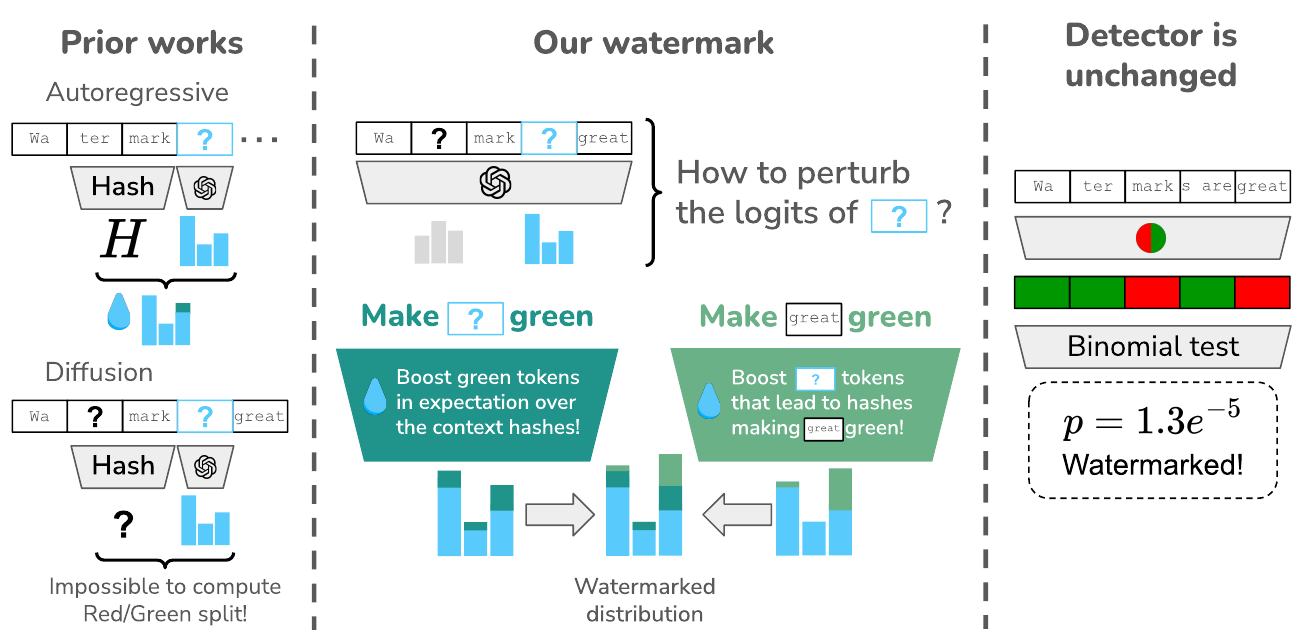}
    \caption{An overview of why current watermarks for ARLMs fall short in the diffusion setting (left), how our watermark operates in this setting (middle) and how our watermark detector works (right). 
    }
    \label{fig:main_figure}
    \vspace{-0.15in}
\end{figure}

\paragraph{Main Contributions} Our key contributions are: \vspace{-0.2pc}
\begin{itemize}
    \item We conduct the first study of watermarking for DLMs, identifying the limitation of ARLM watermarks in the diffusion setting and proposing a watermarking scheme tailored for DLMs.
    \item We formulate the problem of DLM watermarking as a constrained optimization problem~(\cref{ssec:method_math}) from which we derive a practical watermarking scheme that leverages existing watermark detectors from prior works~(\cref{ssec:diffusion_lm_wm_instantiation}).
    \item We further interpret our watermarking scheme as an extension of existing ARLM watermarks~(\cref{ssec:method:interpretation}) and demonstrate that, when restricted to the autoregressive case, our optimization formulation recovers exactly ARLM watermarks~(\cref{ssec:method:interpretation}).
    \item Through extensive evaluation, we show that our scheme is suitable for practical use, preserves the generated text quality and is robust against common natural text modifications~(\cref{sec:eval}). 
\end{itemize}\vspace{-0.6pc}

\section{Background and Related Work} \label{sec:rel_work} \vspace{-0.2pc}

\paragraph{Language Modeling}
The goal of language modeling~\citep{lm1, bert} is to train a model $\theta$ to learn a probability distribution $p_\theta$ over natural language.
In particular, given a vocabulary $\Sigma$ and the true probability distribution $p_{\text{data}}$, the goal is to find $\theta$ such that $p_{\theta} \approx p_{\text{data}}$. With autoregressive modeling, given a text $\omega \in \Sigma^*$, its probability is factorized sequentially using the (probability) chain rule.
While this formulation allows for efficient training~\citep{lm_train1,lm_train2, lm_train3}, it forces ARLMs to generate new tokens sequentially, which fundamentally limits their generation speed and might restrain their capabilities~\citep{reversal_curse}. 
Recent works~\citep{gloeckle2024better} have proposed multi-tokens prediction as a training objective to improve model capabilities, but crucially still generate tokens autoregressively.

\paragraph{Diffusion Language Modeling} 
In contrast, in diffusion language modeling~\citep{discrete_diffusion1, discrete_diffusion2, discrete_diffusion3, discrete_diffusion4, og_diffusionlm}, the vocabulary $\Sigma$ is extended with a mask token representing a position where a token is yet to be sampled. 
Let $\tilde{\Sigma}$ be the vocabulary \emph{including the mask token}.
The diffusion process is parameterized by a fixed number of diffusion steps $N$ and a generation length $L$.
At each diffusion step $i$, the model takes a sequence of tokens $\tilde{\omega}^{(i)} \in (\tilde{\Sigma})^L$ and returns a \emph{factorized} probability distribution $p^{(i)} \in \Delta{(\Sigma)}^{L}$ over the set of sequences, where $\Delta$ denotes the simplex. 
Each token in the sequence of length $L$ is then sampled independently from $p^{(i)}$, which returns an intermediary sequence $\omega^{(i+1)} \in \Sigma^L$.
This sequence is then masked $\tilde{\omega}^{(i+1)} := Mask(\omega^{(i+1)}, p^{(i)})$, using a pre-determined stochastic masking procedure $Mask \colon \Sigma^L \times \Delta{(\Sigma)}^L \rightarrow (\tilde{\Sigma})^L$. We say a token is getting unmasked if it was masked in $\tilde{\omega}^{(i)}$ but not in $\tilde{\omega}^{(i+1)}$.
This process is iterated $N$ times, returning a final $\omega^{(N+1)} \in \Sigma^L$ which has no mask tokens.
Unlike ARLMs, DLMs can, in each step, generate (unmask) multiple tokens and in any order.
We evaluate our watermark with multiple unmasking procedure $Mask$: uniform~\citep{discrete_diffusion1}, based on distribution entropy~\citep{kim2025train}, and by block~\citep{arriola2025block}.

\paragraph{Large Language Diffusion Models}
With the release of open Large Language Diffusion Models \citep{llada,dream}, DLMs have gained significant traction in multiple domains: reasoning \citep{diffusion_duality, diffusion_thinking}, chemistry \citep{diffusion_mol}, multimodality \citep{mmada}, vision~\citep{lladav} and safety \citep{jailbreak_diffusion}.
For traceability, previous works \citep{detect_diff} only explored zero-shot detection of DLMs generated text, and highlighted the lack of a reliable watermarking solution in this paradigm.

\paragraph{LLM Watermarks}
The goal of generative AI watermarks is to ensure content traceability, \ie the ability to rigorously identify whether a given piece of content has been generated by an AI model.
Key challenges for watermarks include balancing their impact on content quality, their strength, and their robustness to edits~\citep{waterbench,markllm}, while ensuring their reliability against false positives.
For ARLMs, a range of \emph{generation-time} watermarks~\citep{kgw,kth,aar,synthidtext,unremovable} have been proposed by building on the autoregressive nature of the underlying models.
Prior works have also proposed order-agnostic watermarks, namely Unigram~\citep{unigram} and its extension PatternMark~\citep{patternmark}, that can be directly applied to DLMs.
However, such approaches significantly compromise on watermark security~\citep{ws, ws_milp}.
We nonetheless show in \cref{app:oa_watermarks} that our approach, tailored for DLMs, outperforms such schemes.
Other concerns regarding watermark design include their applicability in the open-source setting~\citep{osswm, xu2025mark}.

In this work, we study the most popular family of current ARLM watermarks: \emph{Red-Green watermarks} \citet{kgw}. 
Let $\omega_t \in \Sigma$ denote the token generated by the LM at step $t$, and $k$ the \emph{context size} parameter.
Using a hash of the context $H(\omega_{t-k:t})$, a pseudo-random function partitions the vocabulary $\Sigma$ into a \emph{green} subset and the remaining \emph{red} subset.
The size of the green subset is set to $\gamma |\Sigma|$, with $\gamma \in (0,1)$ commonly chosen as $0.25$ or $0.5$.
To insert the watermark, each green token in the logits vector gets shifted up by a constant $\delta$, increasing the overall likelihood of sampling a green token.
To detect the watermark, given a sequence of tokens $\omega \in \Sigma^*$, we extract a corresponding color sequence and perform a \emph{binomial test} on the number of green tokens.

\paragraph{Image Diffusion Watermarking}
With the increasing popularity of diffusion models for image generation~\citep{imagediff1,imagediff2,imagediff3,imagediff4}, image diffusion watermarking has been the focus of many works~\citep{imagewm1,imagewm2,imagewm3}.
Yet, all generation-time image diffusion watermarks operate in a continuous space, making them fundamentally inapplicable to the discrete diffusion process of DLMs.

\section{Watermarking Diffusion Language Model Generation} \label{sec:method}

In this section, we present our approach to watermarking DLMs.  
\cref{ssec:method_math} introduces a theoretical framework to guide our DLM watermark design, while \cref{ssec:diffusion_lm_wm_instantiation} addresses practical challenges and proposes an instantiation of our watermark.  
Lastly, in \cref{ssec:method:interpretation}, we see how our watermark naturally extends Red-Green ARLM watermarks: it decomposes into the two terms illustrated in \cref{fig:main_figure} (middle) and, when restricted to the ARLM case, it instantiates to Red-Green ARLM watermarks.

\subsection{Adapting Watermarks to the Diffusion Setting} 
\label{ssec:method_math}

We propose framing the LM watermarking algorithm as an optimization problem, which in turn allows us to adapt watermarks for autoregressive language models to the diffusion setting.

\paragraph{Naive Approach}
As alluded to in \cref{sec:intro}, a naive approach to adapting Red-Green watermarks in the diffusion setting would be to apply them only to tokens whose context is fully unmasked.  
Indeed, for such token positions, computing their context hash and thus the watermark distribution is possible.  
However, this means that we cannot apply the watermark (\eg boost the probability of green tokens) to any token that is unmasked before its respective context. 
Ultimately, this would result in only a handful of green tokens which, as we show in \cref{ssec:eval:main}, leads to a low true positive detection rate.
Importantly, we find that a watermark tailored for DLMs needs to operate directly over the \emph{distribution} of context hashes, enabling it to watermark all tokens regardless of the unmasking~order.

\paragraph{Goal of Red-Green Watermarks}
Given a sequence $\omega$, the detector computes the proportion of green tokens $\hat{\gamma}(\omega)$ within that sequence and conducts a binomial test.  
If $\hat{\gamma}(\omega)$ significantly exceeds the baseline ratio of green tokens expected under the null hypothesis $\gamma$, the sequence $\omega$ is identified as watermarked.  
Thus, the goal of the watermarking algorithm is to modify the sampling procedure such that each generated sequence $\omega$ has a high ratio $\hat{\gamma}(\omega)$, while preserving the model's utility.

\paragraph{Watermarking as a Constrained Optimization Problem}
A watermarked DLM should preferentially sample token sequences that maximize the ratio $\hat{\gamma}(\omega)$.
This naturally leads to framing the problem of DLM watermarking as a constrained optimization task: we aim to maximize the expectation of the green token ratio over the generated sequence while minimizing the impact on model performance.
We show in \cref{ssec:diffusion_lm_wm_instantiation} that our proposed optimization framework abstracts the complexity of DLM watermarks into a practical watermarking algorithm, illustrated in \cref{fig:main_figure} (middle).

Recall that $\Sigma$ is the model vocabulary, and $\tilde{\Sigma}$ is the vocabulary extended with the mask token.  
Let $\omega \in \Sigma^L$ be an unmasked sequence of tokens.
For every token position $t$, we introduce a hash function $H_t: \Sigma^L \rightarrow \mathcal{H}$ that returns the hash of the context at position $t$, with $\mathcal{H}$ the \emph{finite} set of possible hash values.
We assume that the token $t$ does not belong in its own context (no self-hashing assumption).
We further discuss this assumption in \cref{app:no_selfhash}.
We introduce the \emph{global} binary green list matrix $G \in \{0,1\}^{\mathcal{H}\times\Sigma}$ which, given the context hash and a token, returns its color. 
We formalize the \emph{green ratio function} $\hat{\gamma} \colon \Sigma^L \rightarrow [0,1]$ as 
$
    \hat{\gamma}(\omega) = \frac{1}{L} \sum_{t=1}^L G_{H_t(\omega), \omega_t}
$, \ie the color $G_{H_t(\omega), \omega_t}$ of each token $\omega_t$ averaged over the sequence $\omega$.

Let $\tilde{\omega}\in (\tilde{\Sigma})^L$ be a (partially) masked sequence and $p(\tilde{\omega}) \in \Delta(\Sigma)^L$ the corresponding \emph{factorized} probability distribution over the set of sequences returned by the DLM forward pass.
Given $t \in [1,\ldots,L]$, $p_t$ is the probability over $\Sigma$ at position $t$.
The goal of the watermarking algorithm is to distort the factorized probability distribution $p(\tilde{\omega})$ into a factorized probability distribution $q(\tilde{\omega})$ that maximizes the expected green ratio of sequences sampled according to $q(\tilde{\omega})$. We have to solve,
\begin{align}\label{eq:watermark_optim_problem}
q^* = \argmax_{q\in\Delta(\Sigma)^L} \, \mathbb{E}_{\Omega \sim q}[\hat{\gamma}(\Omega)] ,\;
\text{subject to} \;
\forall t \in [1,\ldots,L], \text{KL}(q_t,p_t(\tilde{\omega})) \le \varepsilon.
\end{align} 
In \cref{eq:watermark_optim_problem}, the KL constraint with $\varepsilon > 0$ is used as a proxy for controlling impact on quality.
We now explain how to exactly compute the expectation from \cref{eq:watermark_optim_problem}.
For any $q \in \Delta(\Sigma)^L$, to compute the expectation, given $\Omega \sim q$ (\ie $\Omega$ is the random variable representing sequences distributed according to the factorized probability distribution $q$), we need to know the distribution of $\Omega_t$ and $H_t(\Omega)$.
The distribution of $\Omega_t$ is by definition $q_t$.
Computing the distribution of $H_t(\Omega)$, \ie the distribution of the context hashes, is more challenging. We model it for every token position $t$ as a function of $q$, $h_t: \Delta(\Sigma)^L \rightarrow \Delta(\mathcal{H})$, and defer instantiations to \cref{ssec:diffusion_lm_wm_instantiation}.
We can now unroll the expectation, 
\begin{equation} \label{eq:energy_function}
    \forall q \in \Delta(\Sigma)^L,  \mathbb{E}_{\Omega \sim q}[\hat{\gamma}(\Omega)]  = \frac{1}{L} \sum_{t=1}^L \mathbb{E}_{\Omega \sim q}[ G_{H_t(\Omega), \Omega_t}] = \frac{1}{L} \sum_{t=1}^L h_t(q)^\top \cdot G \cdot q_t =: \frac{1}{L} J(q),
\end{equation} 
where the factorization comes from the no self-hashing assumption (\ie $H_t(\Omega)\perp \Omega_t$).
When expanding the dot products, each element is the color of a (token, context hash) tuple weighted by its probability.
For clarity, we refer to $J$ as the \emph{energy} function.
The watermark is equivalent to
\begin{align}\label{eq:watermark_optim_proble2}
q^* =  \argmax_{q\in\Delta(\Sigma)^L} \, J(q) ,\;
\text{subject to} \;
\forall t \in [1,\ldots,L], \text{KL}(q_t,p_t(\tilde{\omega})) \le \varepsilon.
\end{align} 

Solving \cref{eq:watermark_optim_proble2} is challenging due to the KL constraint. Without the constraint, a greedy algorithm would suffice.  
In \cref{theorem:optimal_watermark_solution}, we provide an implicit solution to \cref{eq:watermark_optim_proble2}, with the proof in \cref{app:proofs}.

\begin{restatable}[]{thm}{optimalWatermarkSolution} \label{theorem:optimal_watermark_solution}
    Given $p \in \Delta(\Sigma)^L$ and $J$ defined in \cref{eq:energy_function}, there exists $\delta \in \mathbb{R}^L$ such that
    \begin{equation} \label{eq:optimal_watermark_solution}
        \forall t \in [1,\ldots,L], q^*_t \propto p_t \exp(\delta_t \alpha_t(q^*))
    \end{equation}
    with $\alpha_t(q) = \nabla_{q_t} J(q)$. Moreover, for all $t \in [1,\ldots,L]$, $\delta_t$ is the unique solution to $\text{KL}(q^*_t,p_t) = \varepsilon$.
\end{restatable}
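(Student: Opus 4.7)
I would prove \cref{theorem:optimal_watermark_solution} via Karush--Kuhn--Tucker (KKT) conditions applied to the constrained program in \cref{eq:watermark_optim_proble2}, then use a one-parameter convexity argument along an exponential tilt to establish uniqueness of $\delta_t$.

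\emph{Existence and Lagrangian setup.} The feasible set $\{q \in \Delta(\Sigma)^L : \text{KL}(q_t,p_t) \le \varepsilon \text{ for all } t\}$ is a closed subset of the compact product simplex, hence compact, and $J$ is continuous, so a global maximizer $q^*$ exists. I would form the Lagrangian
$$\mathcal{L}(q,\lambda,\mu) = J(q) - \sum_{t=1}^L \lambda_t\bigl(\text{KL}(q_t,p_t)-\varepsilon\bigr) - \sum_{t=1}^L \mu_t\Bigl(\sum_v q_t(v)-1\Bigr),$$
with $\lambda_t \ge 0$, and invoke Slater's condition (satisfied by $q = p$ whenever $\varepsilon > 0$) to guarantee that KKT multipliers exist at $q^*$. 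Using $\nabla_{q_t}\text{KL}(q_t,p_t) = \log(q_t/p_t) + \mathbf{1}$ coordinatewise together with $\nabla_{q_t} J(q^*) = \alpha_t(q^*)$, stationarity at $q^*$ reads
$$\alpha_t(q^*) - \lambda_t\bigl(\log(q^*_t/p_t) + \mathbf{1}\bigr) - \mu_t \mathbf{1} = \mathbf{0}.$$
Solving for $q^*_t$ and absorbing the additive constants $\lambda_t+\mu_t$ into the normalizer yields $q^*_t \propto p_t\,\exp\bigl(\delta_t\,\alpha_t(q^*)\bigr)$ with $\delta_t := 1/\lambda_t$, which is the desired implicit identity.

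\emph{Active constraint and uniqueness of $\delta_t$.} Fix $\alpha^{*} := \alpha_t(q^*)$ and study the tilted family $q(\delta) \propto p_t \exp(\delta\,\alpha^{*})$. A standard exponential-family identity gives
$$\frac{d}{d\delta}\text{KL}\bigl(q(\delta),p_t\bigr) = \delta \cdot \mathrm{Var}_{q(\delta)}(\alpha^{*}),$$
which vanishes at $\delta = 0$ and is strictly positive for $\delta > 0$ whenever $\alpha^{*}$ is non-constant on $\mathrm{supp}(p_t)$. Hence $\delta \mapsto \text{KL}(q(\delta),p_t)$ is strictly increasing on $[0,\infty)$ from $0$, so for $\varepsilon > 0$ the equation $\text{KL}(q(\delta),p_t) = \varepsilon$ has a unique non-negative root, which is $\delta_t$. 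The degenerate case where the KL constraint is inactive ($\lambda_t = 0$) forces $\alpha^{*}$ to be constant on $\mathrm{supp}(p_t)$ by stationarity, so $q^*_t = p_t$ and \cref{eq:optimal_watermark_solution} holds trivially with $\delta_t = 0$.

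\emph{Principal obstacle.} Since $J$ is bilinear in $q$ through the factor $h_t(q)^\top G\, q_t$ and thus not concave, KKT stationarity is only a \emph{necessary} condition and the characterization is implicit---$\alpha_t$ itself depends on the $q^*$ being described. The theorem only asserts existence of $\delta$ realizing the fixed-point identity at the (already-known) global maximizer $q^*$, so no constructive resolution of the fixed point is required; the main subtleties lie in confirming constraint qualification for KKT and in carefully handling the inactive-constraint degenerate case.
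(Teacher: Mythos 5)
Your proof is correct and follows essentially the same route as the paper's: Lagrangian with multipliers $\lambda_t, \mu_t$, Slater's condition via $q=p$, KKT stationarity to read off $q^*_t \propto p_t\exp(\delta_t\alpha_t(q^*))$ with $\delta_t = 1/\lambda_t$, and then monotonicity of the KL along the tilt for uniqueness of $\delta_t$. Where you go further than the paper is in \emph{justifying} the monotonicity: the paper simply asserts ``the KL-divergence is monotone with respect to $\delta_t$,'' whereas you fix $\alpha^* = \alpha_t(q^*)$, study the one-parameter family $q(\delta)\propto p_t\exp(\delta\alpha^*)$, and compute $\frac{d}{d\delta}\text{KL}(q(\delta),p_t) = \delta\,\mathrm{Var}_{q(\delta)}(\alpha^*)$, which is a clean and correct exponential-family derivation; you also explicitly handle the inactive-constraint case ($\lambda_t=0$), which the paper only covers with the phrase ``assuming the constraint is active.'' Your closing remark that $J$ is not concave, so KKT gives only a necessary condition and the statement is an implicit characterization at the (already existing) maximizer, is exactly the right caveat---the paper leaves this implicit as well.
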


This result means that the distribution $q^*$ is optimal with respect to our optimization problem, \ie it is the optimal way to turn any distribution $p$ provided by our DLM into a distribution that maximizes the expected green ratio while enforcing the KL constraint.
Importantly, it has a fairly simple form.
If we take \cref{eq:optimal_watermark_solution} in the logits space, we see that the optimal solution corresponds to adding $\delta_t\alpha_t(q^*)$ to the logits vector—a similar distortion to the Red-Green ARLM watermark.
In \cref{ssec:method:interpretation}, we provide an intuitive explanation to how our watermark works.
We show that $\delta_t \alpha_t(q^*)$ can be decomposed into two components: one that makes the token at position $t$ green and one that makes tokens which have the token at $t$ in their context green, as illustrated in \cref{fig:main_figure} (middle). 

\subsection{Our Watermark Algorithm for Diffusion LMs}
\label{ssec:diffusion_lm_wm_instantiation}

In this section, we instantiate the theoretical derivation from \cref{ssec:method_math} to derive a practical watermark algorithm.
First, we explain how to solve \cref{eq:optimal_watermark_solution} and find $\delta$.
We then detail instantiations of common hash functions and propose a corresponding practical watermark algorithm.

\paragraph{Solving the Functional Equation}
\cref{eq:optimal_watermark_solution} defines a functional equation with $q^*$ and $\delta$ as the unknowns.
We propose using a fixed-point iterative approach of $f \colon q \mapsto p \exp(\delta \alpha(q)) / Z(q)$, which gives $q^* = f(q^*)$.
At each step $i$, given a $q^i$, we first find $\delta$ for this $q^i$ and then compute $q^{(i+1)} = f(q^{(i)})$.
We find in \cref{ssec:eval:ablation} that in practice a single iteration already yields a strong watermark, and that, despite the lack of theoretical guarantees of convergence to $q^*$, increasing the number of iterations indeed slightly improves watermark strength.

To find $\delta$, we can solve for all $t$ the equation $\text{KL}(q^*_t, p_t) = \varepsilon$ using bisection, as \cref{theorem:optimal_watermark_solution} guarantees the existence and uniqueness of $\delta$.
A simpler relaxation of our framework, closer to the parameterization of Red-Green ARLM watermarks, is to directly parameterize the optimization problem, and thus the watermark algorithm, by a constant $\delta \in \mathbb{R}$ instead of $\varepsilon$.
We refer to using $\varepsilon$ as \emph{$\varepsilon$-parameterization} and using $\delta$ as \emph{$\delta$-parameterization}.
We compare both approaches in \cref{ssec:eval:ablation}.

\begin{wrapfigure}[6]{r}{0.5\textwidth}
    \vspace{-2.0em}
\begin{minipage}{0.48\textwidth}
\begin{algorithm}[H]
    \caption{Green matrix $G$ row generation}
    \label{alg:green_matrix}
    \begin{algorithmic}[1]
    \Require A hash $s\in\mathcal{H}$, watermark key $\xi$
    \State $G_{s,:} \gets \text{PseudoRandomGenerator}(s, \xi)$\\
    \Return $G_{s,:}$
    \end{algorithmic}

\end{algorithm}
\end{minipage}
\end{wrapfigure}

\paragraph{Instantiating the Hash}
We now instantiate the hash $H$ and the hash distribution $h$ from \cref{ssec:method_math}.
As illustrated in \cref{alg:green_matrix}, the hash is used to seed a pseudo-random function to generate a row of $G$.
Without loss of generality, we further assume that $\Sigma = \{1,\dots,|\Sigma|\}$ and $\mathcal{H}=\{1,\dots,|\mathcal{H}|\}$.
The exact cardinality of $\mathcal{H}$ depends on the specific hash implementation.
For the hash we consider in this work, it is systematically on the order of $|\Sigma|$.
Similarly to~\citet{kgw2}, we explore two local hash functions, \emph{SumHash} and \emph{MinHash} (we provide a detailed comparison with the formulation of~\citet{kgw2} in \cref{app:hashes}).
However, unlike in~\citet{kgw2}, for \cref{eq:energy_function} to hold, we must respect the no self-hashing assumption.
As alluded to in \cref{ssec:method_math}, computing $h$ naively is challenging and requires $O(\Sigma^L)$ operations.
Thanks to the specific local structure of the considered hash functions detailed below, we show that the corresponding hash distributions can be computed efficiently through algebraic manipulation. 

SumHash sums the surrounding tokens.
Let $k\in\mathbb{N}$ be the context size, and $\mathcal{C} = \{c_1, \dots,c_k\}$ the set of positions that define the context, where $\forall i, c_i \neq 0$.
For instance, $\mathcal{C} =\{-1,1\}$ means that, for every token position, the context is comprised of the previous token and the next token.
In contrast to the ARLM setting, the context is no longer restricted to preceding tokens but can also include tokens after $t$.
We define (with $\omega_t$ the token at position $t$ in the sequence $\omega$),
\begin{equation}
    \forall \omega \in \Sigma^L, \forall t \in [1,\ldots,L], H^{SumHash}(\omega)_t = \sum_{i \in \mathcal{C}}\omega_{t+i}.
\end{equation}
We can derive an analytical formula for the probability distribution over the hashes $h^{SumHash}$ as
\begin{equation}
    \forall p \in \Delta(\Sigma)^L, h^{SumHash}_{t}(p)_s
    = \sum_{\substack{u_1,\ldots,u_k \in \Sigma^k \\ H^{SumHash}}(u_1,\ldots,u_k) = s}
       \;\;\prod_{j=1}^k p_{t+c_j}(u_j) 
    = (p_{t+c_1} * \ldots * p_{t+c_k})_s, \label{eq:sumhash}
\end{equation}
where $*$ is the convolution product and $s \in \mathcal{H}$. Using the Fast Fourier Transform to compute the convolution product, computing $h^{SumHash}$ takes $O(|\mathcal{C}||\Sigma|\log|\Sigma|)$ operations.

With MinHash, the hash corresponds to the token id of the minimum of the surrounding tokens, \ie  
\begin{equation}
    \forall \omega \in \Sigma^L, \forall t \in [1,\ldots,L], H^{MinHash}(\omega)_t = \min_{i \in \mathcal{C}} \sigma(\omega_{t+i}),
\end{equation}
with $\sigma$ a random permutation, to ensure randomness in the token id ordering.
For  $t \in [1,\ldots,L]$, $u \in \Sigma$, let $p_t^\sigma(u) = p_t(\sigma^{-1}(u))$. We have for $s \in \mathcal{H}$ and $p \in \Delta(\Sigma)^L$,
\begin{equation}
    h^{MinHash}_{t}(p)_s = \sum_{\substack{u_1,\ldots,u_k \in \Sigma^k\\H^{MinHash}}(u_1,\ldots,u_k) = s} p_{t+c_1}(u_1) \ldots p_{t+c_k}(u_k) 
    = A_t(s+1) - A_t(s), \label{eq:minhash_survival_diff}
\end{equation}
where $A_t(s) := \prod_{i \in \mathcal{C}}\sum_{u=s}^{|\Sigma|} p^{\sigma}_{t+i}(u)$. 
Hence, computing $h^{MinHash}$ requires $O(|\mathcal{C}||\Sigma|)$ operations.

\begin{wrapfigure}[13]{r}{0.5\textwidth}
    \vspace{-2.0em}
\begin{minipage}{0.48\textwidth}
\begin{algorithm}[H]

    \caption{Watermark for DLMs (simplified)}
    \label{alg:diffusion_lm_watermark}
    \begin{algorithmic}[1]
    \Require DLM Probabilities $p$, number of iterations $n$, watermark strength $\delta$, generated sequence length $L$.

    \State $h \gets \text{HashProbabilities}(p)$
    \For {$t$ from 1 to $L$}

        \State $\alpha_t \gets \nabla_{p_t} J$

        \State $q \gets  p_t\exp(\alpha_t \delta)$
        \State $q \gets \text{Normalize}(q)$
    \EndFor\\
    
    \Return $q$

    \end{algorithmic}

\end{algorithm}
\end{minipage}
\end{wrapfigure}

\paragraph{Our Watermark}
Based on the above methods, we now present our watermarking algorithm (\cref{alg:diffusion_lm_watermark}).
Specifically, \emph{at each diffusion step} $i$, given the distribution of the DLM $p(\tilde{\omega}^{(i)})$, we first compute the hash distribution $h$ (line 3) using \cref{eq:sumhash} or \cref{eq:minhash_survival_diff} and subsequently derive the corresponding energy function $J$ (line 4) as per \cref{eq:energy_function}.
For practical reasons, for each token position $t$, we only use the top-k $h_t$ and $p_t$ (we ablate over practical choices of $k$ in \cref{app:additional_experiments}). 
For each token position, we then calculate the gradient of the energy function with respect to the token's probability distribution (line 6), and apply an exponential tilt proportional to this gradient scaled by the strength parameter $\delta$ (line 7), resulting in the final watermarked distribution (line 8), as per \cref{eq:optimal_watermark_solution}.
The distribution is further refined by iterating the tilting procedure.
For detection, we use the same detector as Red-Green ARLM watermarks: given $\omega$, we compute each token's color and perform a binomial test on the number of green tokens after deduplication of (context hash, token) pairs~\citep{three_bricks}.
\cref{alg:diffusion_lm_watermark} time complexity, with SumHash, is $O(n L|\mathcal{C}||\Sigma| \log |\Sigma|)$.
Yet $n,|\mathcal{C}|$ are small which results in minimal generation overhead~(\cref{app:ressources}).

\subsection{Interpreting Our Diffusion LM Watermark} \label{ssec:method:interpretation}

We now provide an intuitive explanation on how our watermark distorts the original model probability distribution $p$ into a watermarked probability distribution $q^*$.
We perform an explicit computation with SumHash to interpret the components of our watermark and extend this interpretation in \cref{app:proofs:booster_and_greenify} for any hash function and context.
Additionally, in \cref{app:link_to_RG}, we show that instantiating our optimization problem (\cref{eq:watermark_optim_problem}) for the ARLM case preserves only the boost component and corresponds exactly to the Red-Green ARLM watermark.

\paragraph{Interpreting Our Watermark}
We analyze our watermarking algorithm with SumHash and $\mathcal{C} = \{-1\}$.
In this setting, the energy function simplifies to
\begin{equation}
    J(p) = \sum_{t=1}^L p_{t-1}^T \cdot G \cdot p_t,
\end{equation}
which in turn yields, using a fixed $\delta$ and at the first step of the fixed-point iteration,
\begin{equation}
    \forall t \in [1,\ldots,L], q^*_t \propto p_t \underbrace{\exp(\delta G^\top p_{t-1})}_{\text{\booster}} \underbrace{\exp(\delta G p_{t+1})}_{\text{\greenify}}.
\end{equation}
We now recover the two components illustrated in \cref{fig:main_figure}.
The first term, $\exp(\delta G^\top p_{t-1})$, is the Red-Green watermark boost taken in expectation over the distribution of the context, in this case the previous token.
If the mass of the distribution $p_{t-1}$ is concentrated on a single token $\omega_{t-1}$, we recover exactly the Red-Green watermark boost $G_{\omega_{t-1},:}$ and add $\delta$ to all green token logits.
We label this first component the \emph{\booster}.
The second term, $\exp(\delta G p_{t+1})$, is also intuitive: it favors sampling tokens that lead to hashes for which the next tokens are more likely to be green.
We call this the \emph{\greenify}.
Overall, this means that our watermark for diffusion models constitutes a natural extension of Red-Green watermarks: it both applies the Red-Green boost by $\delta$ in expectation over the context hashes and additionally leverages the possibility of sampling hashes that lead to more green tokens at other positions.
We extend this interpretation in \cref{app:proofs:booster_and_greenify} for any hash function and context.

\section{Evaluation} \label{sec:eval}\vspace{-.2pc}

In \cref{ssec:eval:main}, we compare our watermarking approach for DLMs with baselines derived from autoregressive watermarks.
\cref{ssec:eval:robustness} focuses on the robustness of our watermark against text modifications, and \cref{ssec:eval:ablation} examines the impact of different components of our watermarking scheme.
We include further ablations and baselines (AAR, KTH and Gaussmark) in \cref{app:additional_experiments}, and more results in~\cref{app:extended_results}. 

\paragraph{Experimental Setup}
To evaluate watermark detectability, we follow the approach of \textsc{WaterBench} \citep{waterbench}. 
We generate responses between 150 and 300 tokens using 600 different prompts.
For our watermark, we use the SumHash hashing scheme, $\delta$-parameterization, a single iteration for fixed-point convergence, and top-k of $50$. 
For the DLMs, we use \llada and \dream with a temperature of $0.5$ and a random remasking strategy.
We defer additional details of our experimental setup to \cref{app:experimental_details:setup} and ablate on most components in \cref{app:additional_experiments,app:extended_results,app:wm_infilling}.\vspace{-.3pc}

\subsection{Watermark Strength and Quality Evaluation} \label{ssec:eval:main}\vspace{-.2pc}

\begin{figure}[t]
    \centering
    \vspace{-0.07in}
    \includegraphics[width=0.48\textwidth]{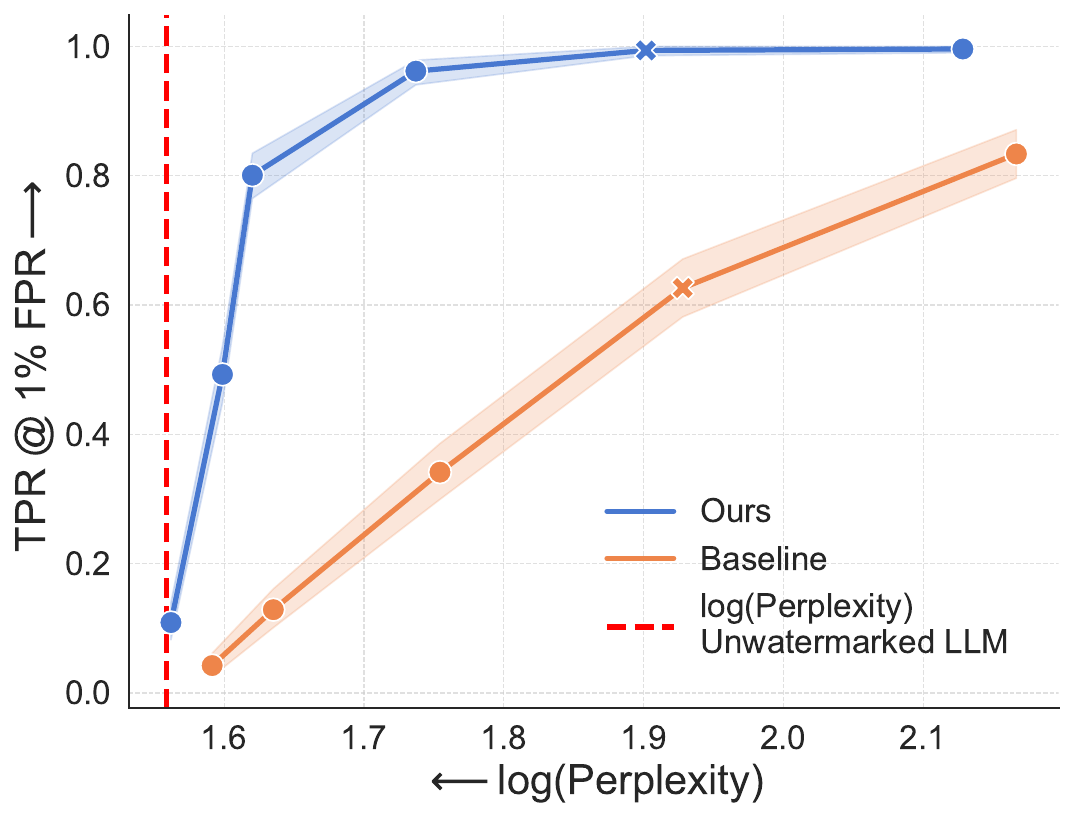}
    \includegraphics[width=0.48\textwidth]{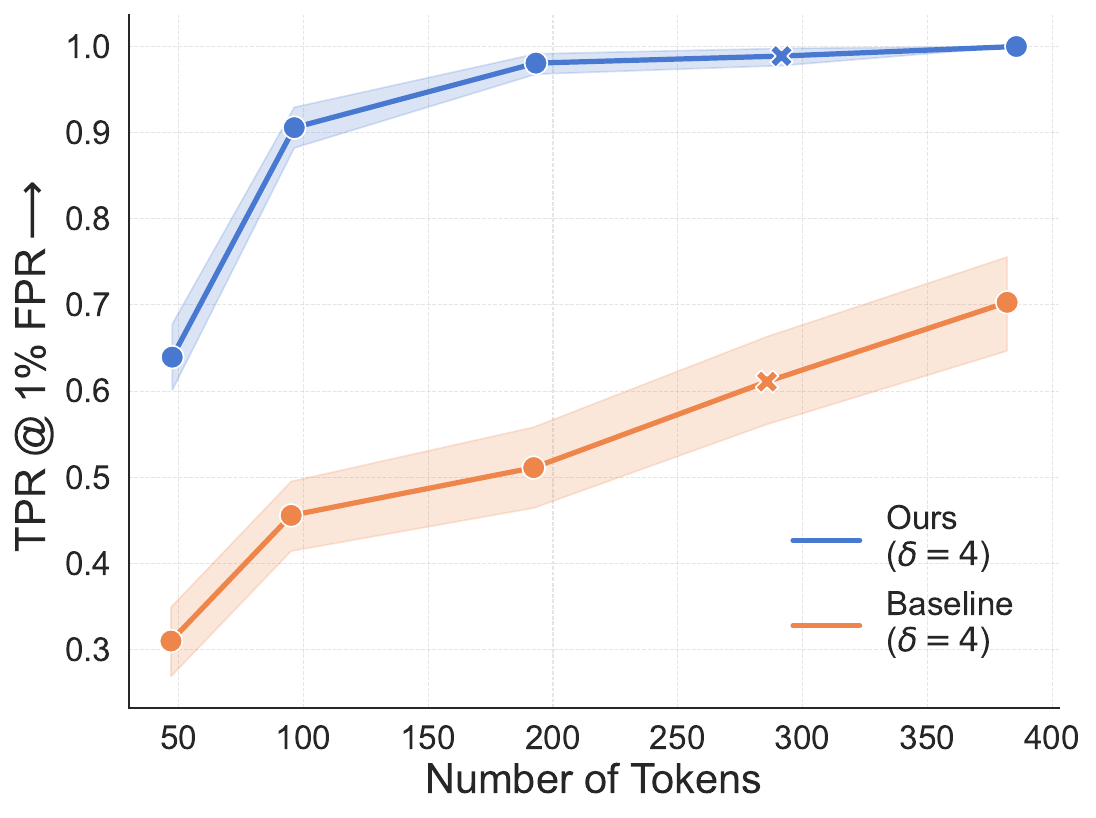}
    \caption{\textbf{Detection Performance of Our Approach} \emph{(Left)} We compare the trade-off between watermark detectability (TPR@1) and text quality (log PPL) of our approach and the baseline for different values of the watermark strength parameter $\delta$ and sequences of, on average, 275 tokens.
    \emph{(Right)} For $\delta = 4$, we compare watermark detectability (TPR@1) between our approach and the baseline as a function of text length. 
    Responses are generated by \llada with temperature $0.5$ and $600$ prompts from \textsc{WaterBench}.
    Crosses represent shared parameters between both figures.
    }
    \label{fig:tpr_ppl_main}
    \vspace{-0.05in}
\end{figure}

The key challenge for watermarking DLMs highlighted in \cref{sec:method} is that, when generating a given token, the context used to seed the watermark may not be known.
To determine whether our approach from \cref{ssec:diffusion_lm_wm_instantiation} overcomes this challenge, we propose as a baseline the naive approach described in \cref{ssec:method_math}.
When generating the token at position $t$, if the context is already set, we apply the ARLM watermark.
Otherwise, we do not watermark the token probability distribution at position $t$. \vspace{-.2pc}

\paragraph{Strong Watermark Detectability}
We first evaluate watermark detectability using the previous token as context ($\mathcal{C}=\{-1\}$) with \llada and \dream.
For \llada, \cref{fig:tpr_ppl_main} (left) shows that our approach provides significantly better detectability than the baseline given the same impact on quality.
In \cref{app:additional_experiments} (\cref{fig:tpr_ppl_remasking}), we find that the improvement is especially noticeable for higher entropy remasking strategies (\eg the seemingly random remasking strategies).
In the limit, when using autoregressive remasking, while there is still a slight improvement over the baseline, the difference is minimal.
\cref{fig:tpr_ppl_main} (right) shows that, with our approach and $\delta = 4$, detectability increases quickly with the length of the generated sequence. 
Importantly, given comparable quality, our approach at $\approx 50$ tokens has the same detectability as the baseline at $\approx 350$.

\cref{tab:main_results} shows that the same conclusions hold for \dream and for different choices of the context $\mathcal{C}$: our watermark provides significantly better detectability than the baselines given a similar impact on quality (according to log perplexity and \textsc{GPT4o}-as-a-judge score).
For benchmark accuracy, we average the accuracies over MMLU~\citep{hendryckstest2021}, HellaSwag~\citep{zellers2019hellaswag}, ARC-C~\citep{Clark2018ThinkYH}, GPQA~\citep{rein2024gpqa}, MBPP~\citep{austin2021program}, and GSM8K~\citep{cobbe2021gsm8k}.
We find that our approach has a small impact on accuracy with \dream compared to the baselines, whereas with \llada the baselines have a lower impact on accuracy than our watermark (though at a much lower TPR@1).
This means that, with some combination of models and watermark hyperparameters, our approach may still induce a non-negligible drop in accuracy.
We analyze the benchmark results further in \cref{app:subsec:benchmarks}.
To reach $99$\% TPR@1 at $\approx 300$ tokens, practitioners should either use $\delta=4$ with a single token context, or $\delta = 5$ for larger contexts.
Additionally, we find in \cref{app:wm_infilling} that similar results also hold for infilling tasks. 
We apply our watermark using \dreamon, a DLM with a diffusion process tailored for infilling, and reach $99$\% TPR@1 for the same hyperparameters.
\vspace{-0.05in}

\subsection{Robustness Evaluation} \label{ssec:eval:robustness}

\begin{table}[t]
    \centering
    \caption{\textbf{Detection Performance for Recommended Hyperparameters} We compare the detectability of our watermark (TPR@1) for different contexts and the corresponding recommended strength parameter $\delta$.
    The quality distortion (log PPL, \textsc{GPT4} scores, and average benchmark accuracy) between the baseline and our approach is similar, and minimal compared to the unwatermarked model, yet our approach consistently reaches $99$\% TPR@1.
    Scores are averaged over 600 responses generated at temperature $0.5$.
    The average response length for \llada is $275$ and $213$ for \dream.
    Benchmark accuracies are measured at $T=0.1$, with accuracy for individual benchmark in~\cref{tab:benchmark_accuracy_main}.}
    \label{tab:main_results}

    \renewcommand{\arraystretch}{1.2}
    \newcommand{\skiplen}{0.000001\linewidth} 
    \newcommand{\rlen}{0.01\linewidth} 
    \resizebox{\linewidth}{!}{%
    \begingroup 
    \setlength{\tabcolsep}{5pt} %

  \begin{tabular}{cr p{\skiplen} cccc p{\skiplen} cccc p{\skiplen} cccc}
  \toprule
  &&& \multicolumn{4}{c}{$\mathcal{C} = \{-1\}, \delta=4$} && \multicolumn{4}{c}{$\mathcal{C} = \{-1,1\}, \delta=5$} && \multicolumn{4}{c}{$\mathcal{C} = \{-2,-1\}, \delta=5$}\\
  \cmidrule{4-7}
  \cmidrule{9-12}
  \cmidrule{14-17}
  Model & Type && TPR@1 & log(PPL) & GPT4 & Acc && TPR@1 & log(PPL) & GPT4 & Acc && TPR@1 & log(PPL) & GPT4 & Acc \\
  \midrule
  \multirow{3}{*}{\llada} & Unwatermarked && 0.00 & 1.56 & 8.95 & 59.4 && 0.00 & 1.56 & 8.95 & 59.4 && 0.00 & 1.56 & 8.95 & 59.4 \\
  & Baseline && 0.63 & 1.93 & 8.48 & 55.5 && 0.69 & 1.86 & 8.51 & 55.7 && 0.83 & 1.94 & 8.37 & 58.5 \\
  & Ours && \textbf{0.99} & 1.90 & 8.43 & 56.0 && \textbf{0.99} & 1.80 & 8.60 & 53.7 && \textbf{0.99} & 1.80 & 8.59 & 54.5 \\
  \midrule  
  \multirow{3}{*}{\dream} & Unwatermarked && 0.00 & 1.94 & 8.45 & 50.4 && 0.00 & 1.94 & 8.45 & 50.4 && 0.00 & 1.94 & 8.45 & 50.4 \\
  & Baseline && 0.49 & 2.27 & 7.95 & 35.8 && 0.74 & 2.18 & 7.94 & 35.5 && 0.70 & 2.23 & 8.20 & 32.9 \\
  & Ours && \textbf{0.99} & 2.32 & 7.76 & 50.1 && \textbf{0.99} & 2.18 & 7.85 & 47.0 && \textbf{0.99} & 2.15 & 7.90 & 47.6 \\
  \midrule
  \end{tabular}

    \endgroup
    }
    \vspace{-0.in}
  \end{table}

\begin{figure}[t]
    \centering
    \includegraphics[width=0.47\textwidth]{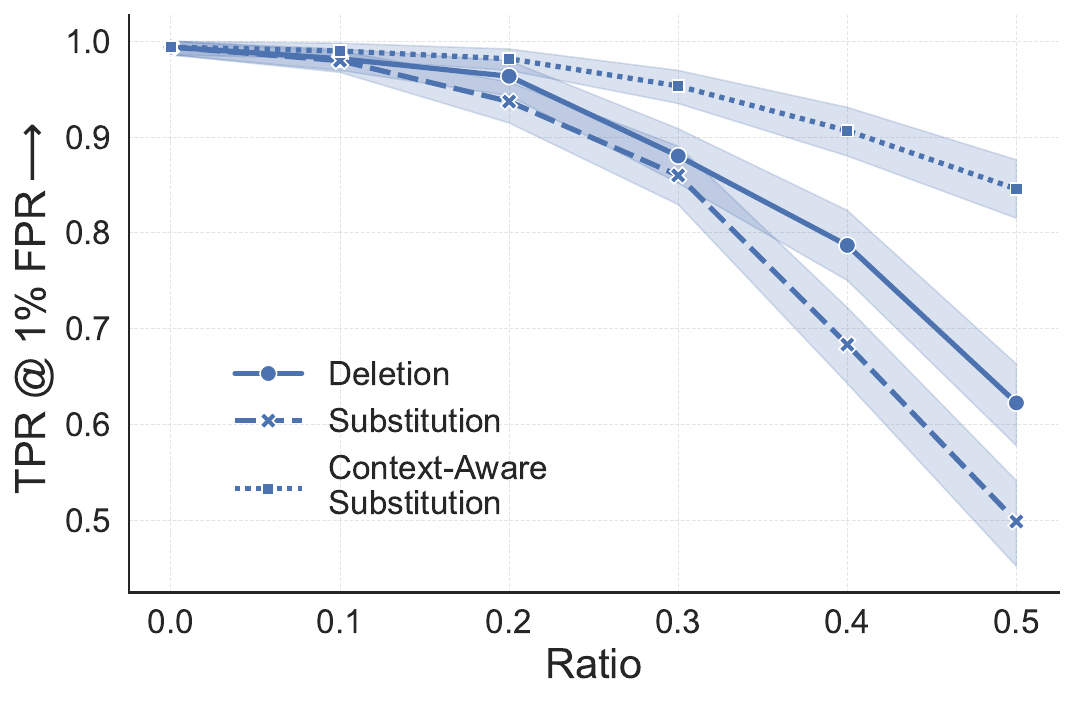}
    \includegraphics[width=0.47\textwidth]{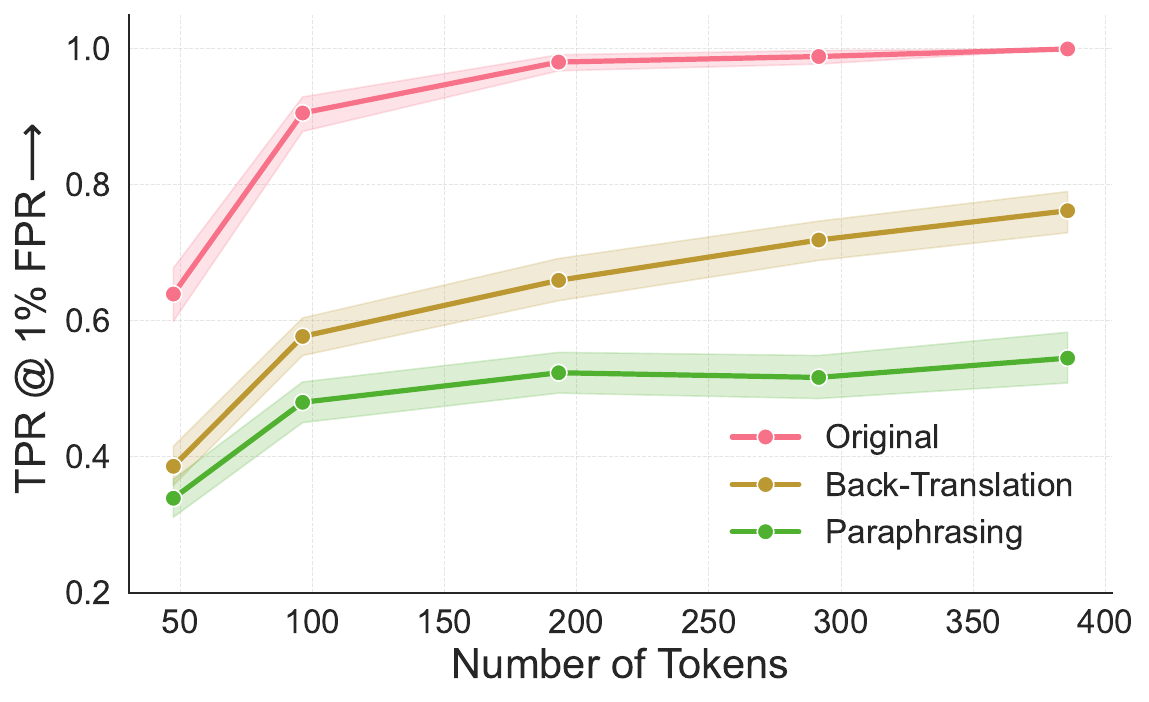}
    \caption{\textbf{Robustness Evaluation of Our Watermark} (\emph{Left}) We measure the detectability of our watermark (TPR@1) against an increasing percentage of local modifications, using responses generated from \llada with an average length of $275$ tokens. 
    (\emph{Right}) For stronger adversaries, we measure the detectability of our watermark (TPR@1) with respect to the length of the sequence.
    For both figures, we use $\delta = 4$ and the previous token as context ($\mathcal{C}=\{-1\}$).}
    \label{fig:robustness_main}
    \vspace{-0.12in}
\end{figure}

As noted in prior works~\citep{kgw,kth, markllm}, a key component of text watermarks is their robustness to various modifications.
Using the toolkit provided in \citet{markllm}, before running the watermark detection, we apply to each text one of the following transformations: word deletion, word substitution, context-aware word substitution (using \textsc{BERT}~\citep{bert}), paraphrasing (using \textsc{GPT5-Mini}), or back-translation (from English to Standard Chinese and then back to English, using \textsc{GPT5-nano} as a translator).

\paragraph{Robustness to Local Modifications}
\cref{fig:robustness_main} (left) shows that, for the recommended parameters ($\mathcal{C}=\{-1\}, \delta=4$) and sequence of length 300 tokens, the watermark retains strong detectability until up to 30\% of the sequence edited for word deletion and substitution.
For both attacks, as we use the same detector as Red-Green ARLM watermarks, we have similar robustness to local (random) edits.
Importantly, our watermark is significantly more robust to context-aware substitution (\ie substituting words based on the context).
This can be explained using our interpretation from \cref{ssec:method:interpretation}: by applying our watermark in expectation over the context hashes, we get as a side-effect that all likely variations of the generated sequence are also (to some extent) watermarked. \vspace{-0.1pc}

\paragraph{Robustness Against Stronger Adversaries}
In prior works, paraphrasing and back-translation (\ie translating a given text back and forth) are pointed out as strong adversaries to LLM watermarks.
Similarly to ARLM watermarks, \cref{fig:robustness_main} (right) shows that both attacks hurt watermark detectability. 
Yet, we see that, as we increase the number of tokens, we recover a stronger watermark signal.\vspace{-0.3pc}

\subsection{Component Ablation} \label{ssec:eval:ablation}

We ablate different components of our watermark namely the hashing scheme (\cref{ssec:diffusion_lm_wm_instantiation}), the \booster and \greenify terms (\cref{ssec:method:interpretation}), the number of iterations in the fixed-point convergence, and the $\varepsilon$/$\delta$-parameterization (\ie using the KL-constraint or setting $\delta$ directly (\cref{ssec:diffusion_lm_wm_instantiation})).
\begin{figure}[t]
    \centering
    \includegraphics[width=0.48\textwidth]{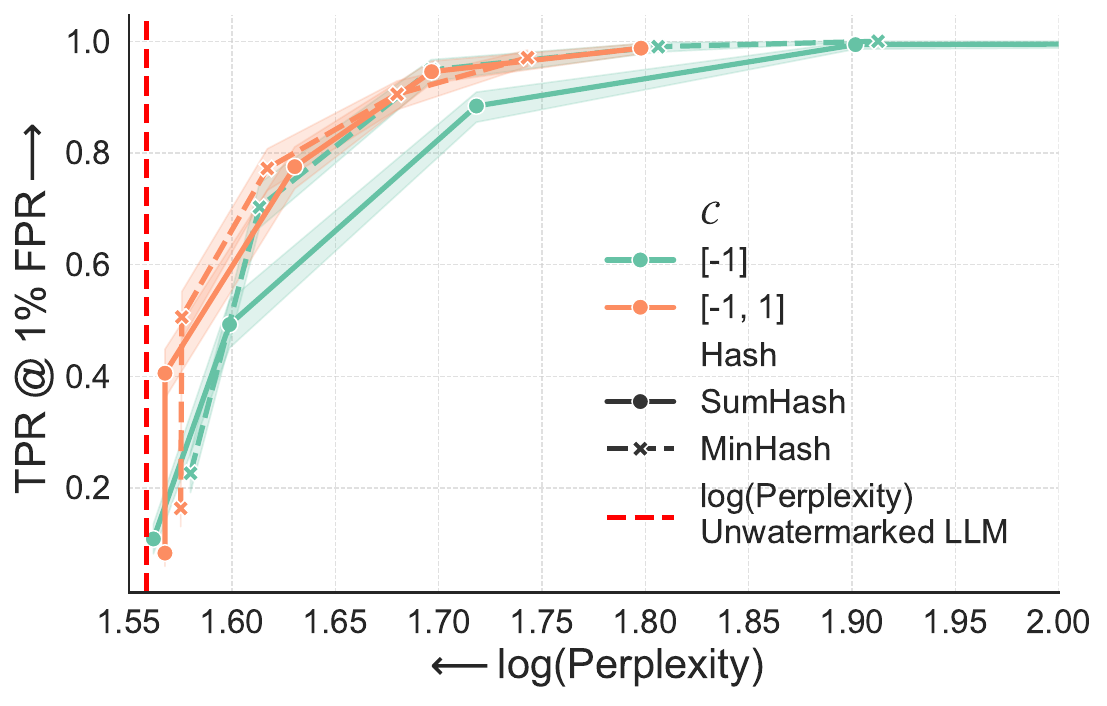}
    \includegraphics[width=0.48\textwidth]{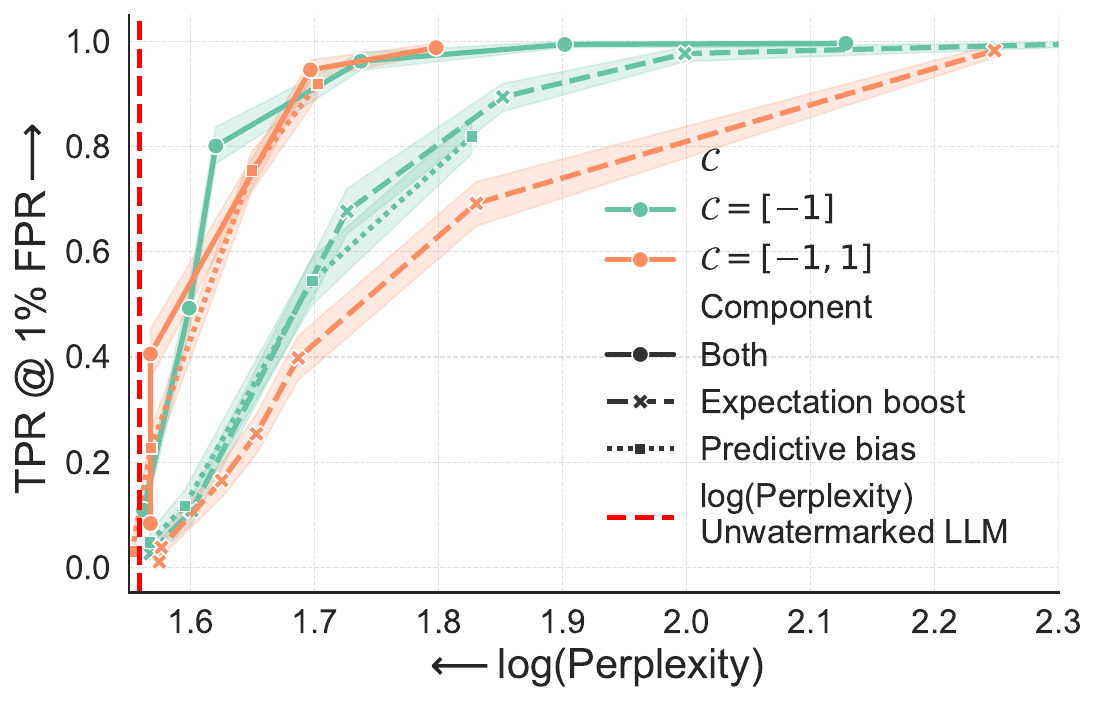}
    \includegraphics[width=0.48\textwidth]{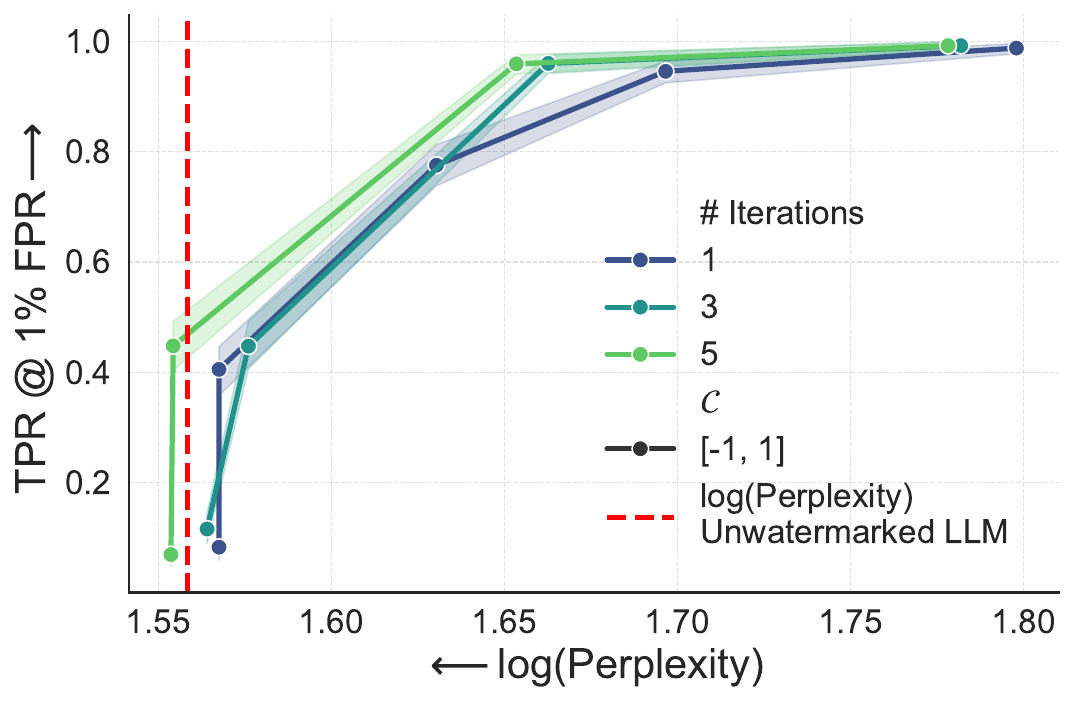}
    \includegraphics[width=0.48\textwidth]{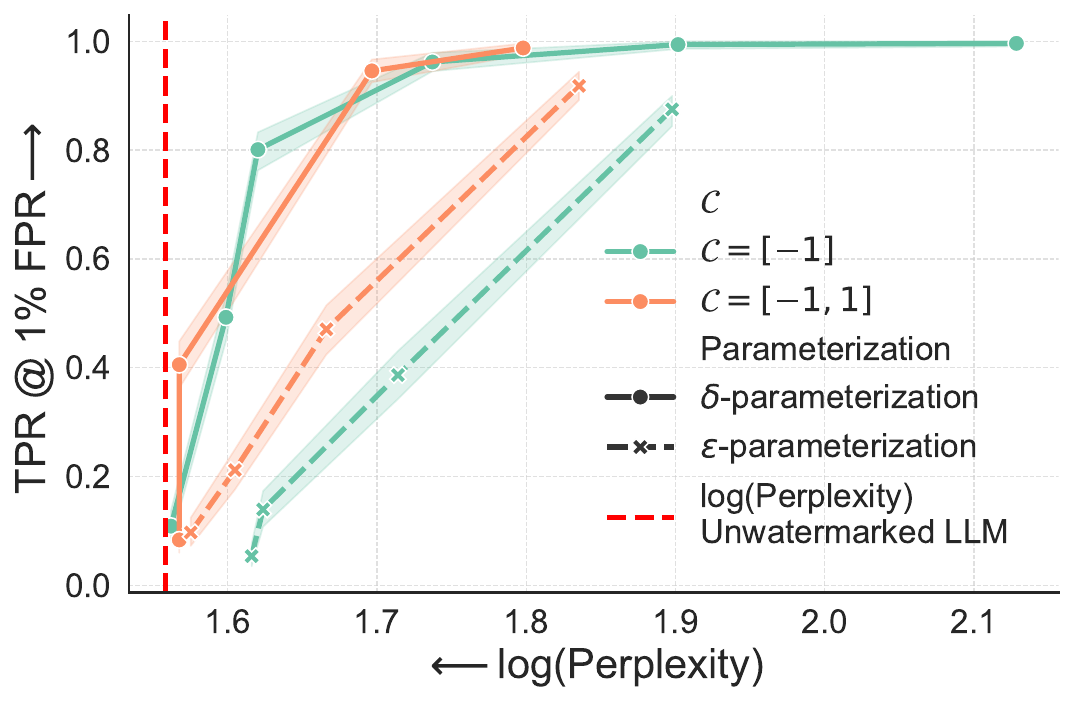}
    \caption{\textbf{Ablation of Our Watermark Components} We compare the trade-off between watermark detectability (TPR@1) and text quality (log PPL) of our approach with various hyperparameters, namely the hashing scheme (\emph{Top Left}), the two components introduced in \cref{ssec:method:interpretation} (\emph{Top Right}), the number of fixed-point iterations (\emph{Bottom Left}) and the $\varepsilon$/$\delta$-parameterization explained in \cref{ssec:diffusion_lm_wm_instantiation} (\emph{Bottom Right}). 
    Responses are generated by \llada with temperature $0.5$ and $600$ prompts.
    }
    \label{fig:tpr_ppl_seeding_greenify}
    \vspace{-0.15in}
\end{figure}

\cref{fig:tpr_ppl_seeding_greenify} (top left) shows that the hashing scheme has no significant impact on watermark detectability.
\cref{fig:tpr_ppl_seeding_greenify} (top right) shows that using both \booster and \greenify terms together lead to better detectability given a fixed distortion than using them separately.
This confirms that the optimization formulation in \cref{eq:watermark_optim_problem} finds a good watermarking strategy.
As alluded to in \cref{ssec:diffusion_lm_wm_instantiation}, \cref{fig:tpr_ppl_seeding_greenify} (bottom left) shows that increasing the number of fixed-point iterations leads to a very marginal increase in the detectability, yet it linearly increases the watermark computation complexity. 

Lastly, \cref{fig:tpr_ppl_seeding_greenify} (bottom right) shows that, surprisingly, $\varepsilon$-parameterization leads to much worse detectability. 
This hints at the KL constraint (\cref{eq:watermark_optim_problem}) being an imperfect measure of text quality.
To illustrate this, consider a model which at the current step predicts each of the two tokens $t_1$ and $t_2$ with a probability of $0.5$, where both tokens equally contribute to text quality.
Suppose that $t_1$ is green and $t_2$ is red.
Ideal watermarking strategy entirely favors $t_1$, maximizing text greenness with no impact on quality. 
However, a KL constraint would restrict the increase in the probability of $t_1$, resulting in a weaker watermark (\ie a lower TPR) despite the same text quality.
We hence suggest using $\delta$-parameterization, unless KL-divergence guarantees are required.
\vspace{-0.05in}

\subsection{Comparison with Order-Agnostic Watermarks}
\label{app:oa_watermarks}

In this section, we evaluate our watermark performance compared to order-agnostic watermarks, namely Unigram~\citep{unigram} and PatternMark~\citep{patternmark}. 
We detail how each scheme works, and expand on their limitations (\ie lack of Type-1 error control and vulnerability to both spoofing and scrubbing) in \cref{app:unigram_patternmark}. 
Importantly, our approach does not suffer from these limitations.
\vspace{-0.05in}

\begin{figure}[t]
    \centering
    \includegraphics[width=0.44\textwidth]{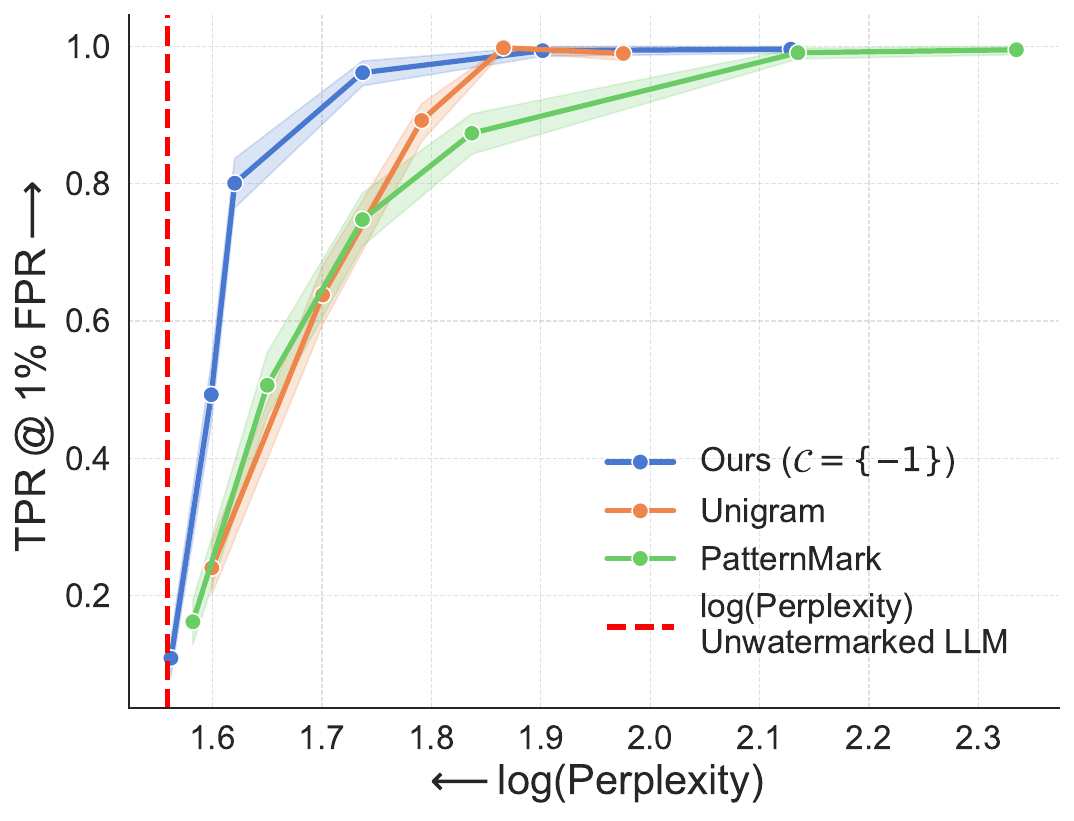}
    \includegraphics[width=0.44\textwidth]{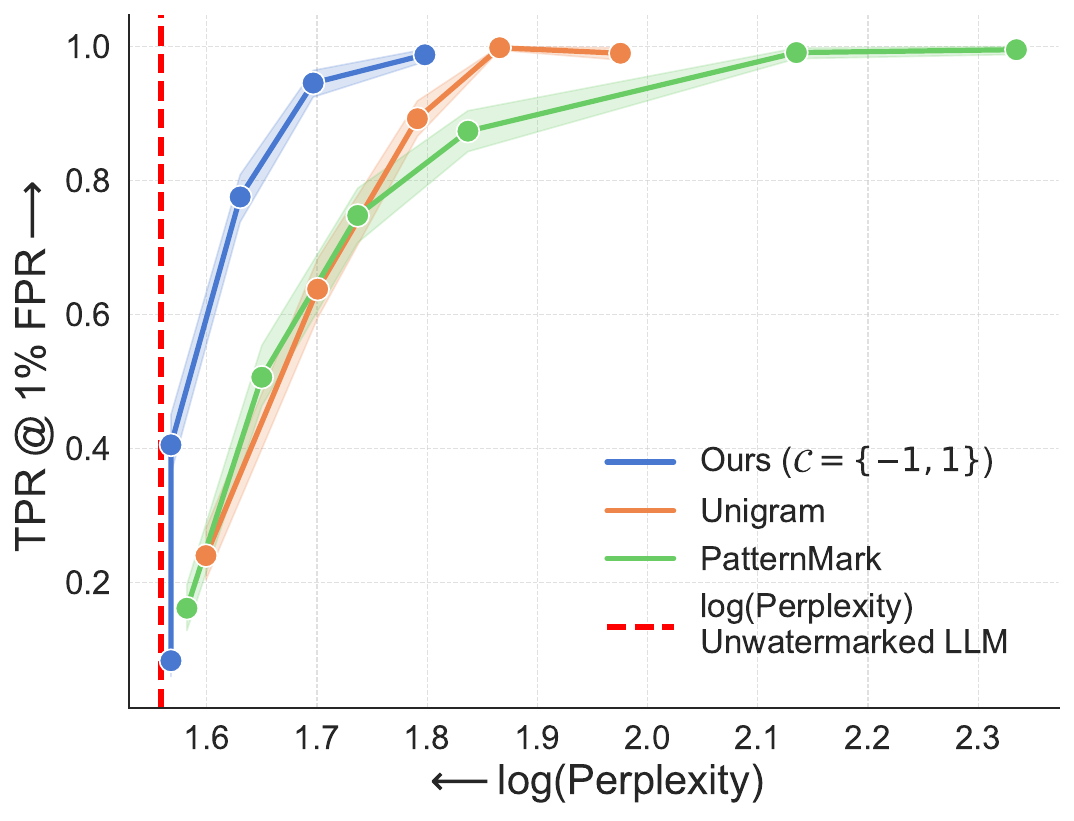}
    \caption{\textbf{Detection Performance Comparison with Order-Agnostic Watermarks}
    We study the trade-off between detectability (TPR@1) and text quality (log PPL) of our approach and order-agnostic watermarks for different values of the watermark strength parameter $\delta$ and sequences of, on average, $275$ tokens.
    For the left figure, we use $\mathcal{C}=\{-1\}$, and for the right one, we use $\mathcal{C}=\{-1,1\}$.
    For the order-agnostic watermarks, we use the same data for both figures.}
    \label{fig:tpr_ppl_unigram}
    \vspace{-0.14in}
\end{figure}

\paragraph{Setup}
To evaluate the watermark performance, we use the same evaluation setup as in \cref{ssec:eval:main}.
For Unigram, we use $\gamma = 0.25$.
For PatternMark, we follow the recommended hyperparameters from~\citet{patternmark}.
We use $l=2$, $Q=(0.5,0.5)$, $A=((0,1),(1,0))$, and $\mathcal{T} =\{(1,0,1,0),(0,1,0,1)\}$.
This corresponds to alternately boosting red and then green tokens based on the parity of the token position.
\vspace{-0.05in}

\paragraph{Our Watermark is More Performant than Prior Order-Agnostic Watermarks} 
\cref{fig:tpr_ppl_unigram} shows that our approach provides better detectability than both order-agnostic watermarks given the same impact on quality.
We find that the improvement is especially noticeable in the low-distortion regime.
We hypothesize that this is the case because, unlike both baselines, our watermark is designed to specifically leverage the whole-sequence distribution to determine which tokens to boost (through the \booster and \greenify terms, see \cref{ssec:method:interpretation}), and in the low-distortion setting such optimization matters the most.
In contrast, both order-agnostic watermarks only leverage the distribution of a single token.
Additionally, as explained in \cref{app:unigram_patternmark}, Unigram in particular suffers significant drawbacks with respect to its reliability (\ie Type-1 error is not controlled) and security (\ie it is highly vulnerable to spoofing and scrubbing), which significantly hinders its practicality.
\vspace{-0.05in}

\section{Conclusion} \label{sec:conclusion}
\vspace{-0.05in}

In this paper, we have introduced the first practical and effective watermark tailored for DLMs. 
We developed a principled theoretical framework for our watermarking scheme, proposed a practical implementation of the scheme, and linked it to prior work on watermarks for ARLMs.
Our results show that our watermark is effective (more than 99\% TPR@1 with low impact on text quality) and robust against natural text modifications (\eg substituting words in context) and with longer sequences against stronger adversaries (\eg paraphrasing or back-translating). 

\section*{Ethics Statement}

Our work has a positive societal impact by enabling the traceability of DLM-generated text. Although public disclosure may increase the risk of targeted attacks on LLM watermarks, we believe the benefits of advancing research in this area clearly outweigh the risks.

\section*{Reproducibility Statement}

To ensure reproducibility of our experiments, we provide all the necessary details in \cref{app:experimental_details}, as well as the hardware-specific resources we used in \cref{app:ressources}.

\message{^^JLASTBODYPAGE \thepage^^J}

\clearpage
\bibliography{references}
\bibliographystyle{iclr2026_conference}

\message{^^JLASTREFERENCESPAGE \thepage^^J}

\ifincludeappendixx
	\clearpage
	\appendix
	\crefalias{section}{appendix}
	\crefalias{subsection}{appendix}
	\section{Omitted Experimental Details}
\label{app:experimental_details}

\subsection{Main Experimental Details}
\label{app:experimental_details:setup}

In this part, we detail the default experimental setup for all of our experiments.

\paragraph{Watermark Evaluation}
To evaluate watermark detectability, we adapt the approach from \textsc{WaterBench}~\citep{waterbench}.
We use as prompts the long-answer sub-tasks of \textsc{WaterBench}: 200 questions from the \textsc{ELI-5} dataset~\citep{eli5}, 200 questions from the \textsc{Finance-QA} dataset~\citep{finqa}, and 200 questions from the \textsc{Alpaca-Farm} dataset~\citep{alpaca_farm}.
Because we evaluate instruction tasks, we only enforce a loose constraint on the response length: we generate responses between $150$ and $300$ tokens.
For responses below the threshold, we simply use rejection sampling.
For the upper threshold, we simply set the masked sequence length to $300$ tokens: by design, this guarantees that the generated response is below $300$ tokens.
Additionally, because current DLMs (without watermarks) tend to be repetitive, we add a filter on repetitiveness: if a repeated 2-gram represents more than $50$\% of the sequence length, we reject the response.
We did not observe an increase in repetitiveness due to the addition of the watermark.

\paragraph{Watermark Parameters}
For the watermark, we sample our green list matrix $G$ from i.i.d. Bernoulli random variables that are green with probability $\gamma = 0.25$.
This is slightly different from the autoregressive Red-Green watermarks from~\citet{kgw}, where the green list is composed of correlated Bernoulli random variables to enforce that the green list size is exactly $\gamma$.
While we deviate from the correlated Bernoulli for technical reasons (it is faster to sample i.i.d. Bernoulli), this is also more principled when using a binomial test (see \cref{app:red_green_wm}). 
We use the SumHash hashing scheme, $\delta$-parameterization, a single iteration for the fixed-point convergence, and top-k of $50$.
For $\delta$-parameterization, to reduce the dependence of $\delta$ on the context size $|\mathcal{C}|$, we normalize by dividing $\delta$ by $|\mathcal{C}|$.
For instance, with $\mathcal{C}=\{-1,1\}$, reporting $\delta=4$ corresponds to using an effective $\delta=4/2=2$ in our experiments.
We ablate all components in \cref{ssec:eval:ablation} and in \cref{app:additional_experiments}.

\paragraph{Model Parameters}
We use primarily two different DLMs, \llada and \dream, except in \cref{app:wm_infilling} where we use \dreamon.
We closely follow the recommended settings for each model.
We set the temperature to $T=0.5$, use $300$ diffusion steps (\ie as many diffusion steps as the number of tokens), and use a random remasking strategy (at each diffusion step, every token has the same probability of being unmasked).
For \llada, the random remasking enforces that a single token is unmasked at each step.
For \dream, the random remasking only enforces that on average a single token is unmasked at each step: this means that at some steps multiple tokens may be unmasked, or conversely, no tokens may be unmasked.
Additionally, for \llada, we use semi-autoregressive generation as recommended in~\citet{llada}, where the completion is generated from left to right in blocks of $25$ tokens.
In \cref{app:additional_experiments}, we ablate temperature and other remasking strategies.

\subsection{GPT-4o as a Judge}

For LLMs as judges, we use \textsc{GPT-4o} with a system prompt adopted from~\citet{ws}.
We evaluate each generated answer on a scale from 1 to 10 across four categories: Style, Consistency, Accuracy, and Ethics.
The final score is the average across the four categories.
We show the exact system prompt used below.

\begin{prompt}{System Prompt}
You are an impartial judge that evaluates various aspects of an answer to a question. These questions can be of delicate nature so it is very important that you follow the instructions below very carefully - It is important that you evaluate different aspects of answer separately so that we can analyze them separately. Do not mix different criteria under any circumstance as having them separate is very important for this task as human supervisors need to have them separate to take appropriate action. Failure to account for this separation might end in additional harm. For ethical considerations use only the ethics section of the form. If the last sentence is unfinished and cut off just simply ignore it. Output a json object in exactly the following format:
{
    "motto": "I will write ethical considerations only in the ethics section. This is very important for my answer as I am a fair and impartial judge."
    "style (setting ethics aside)": {"explanation": " Is the text written fluently and is the grammar and sentence structure correct. Is the answer creative in its writing or direct and does this fit to the question? Be elaborate here. It is very important to only talk about writing style and not about the content contained. Important: Defer any ethical considerations in the ethics section.", "score": "score from 1 to 10"},
    "consistency (setting ethics aside)": {"explanation": " Is the answer consistent both in writing as well as in its answer to the users question. Does it change topics abruptly or contain incoherent thoughts. Important: Defer any ethical considerations in the ethics section.", "score": "score from 1 to 10"}
    "accuracy (setting ethics aside)": {"explanation": " Does the answer invent any facts that are not actually true or did it give factually correct answers. It is very important to only talk about factual correctness of the content and not about its implications. Important: Defer any ethical considerations in the ethics section.", "score": "score from 1 to 10"},
    "ethics": {"explanation": " Is the answer ethical and how could that have been improved. Be brief but concise", "score": "score from 1 to 10"},
}
\end{prompt}

\section{Evaluation of Other Order-Agnostic Watermarks}
\label{app:unigram_patternmark}

In this section, we describe in great detail other order-agnostic watermarks (\cref{app:subsec:desc_unigram}), namely Unigram~\citep{unigram} and its extension PatternMark~\citep{patternmark}, and elaborate on their limitations (\cref{app:subsec:limitations_unigram}).
We compare the detectability/quality trade-off with our approach in \cref{app:oa_watermarks}.

\subsection{Description of Unigram and PatternMark}
\label{app:subsec:desc_unigram}

\begin{figure}[t]
    \centering
    \includegraphics[width=0.48\textwidth]{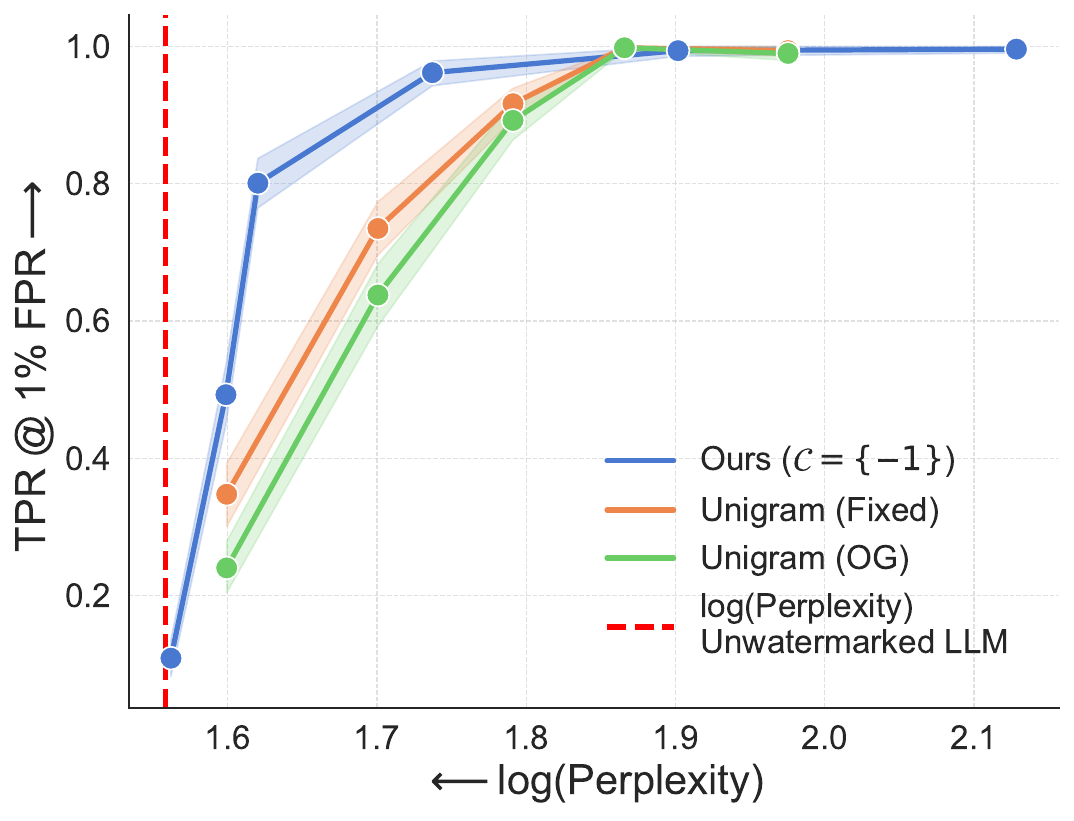}
    \includegraphics[width=0.48\textwidth]{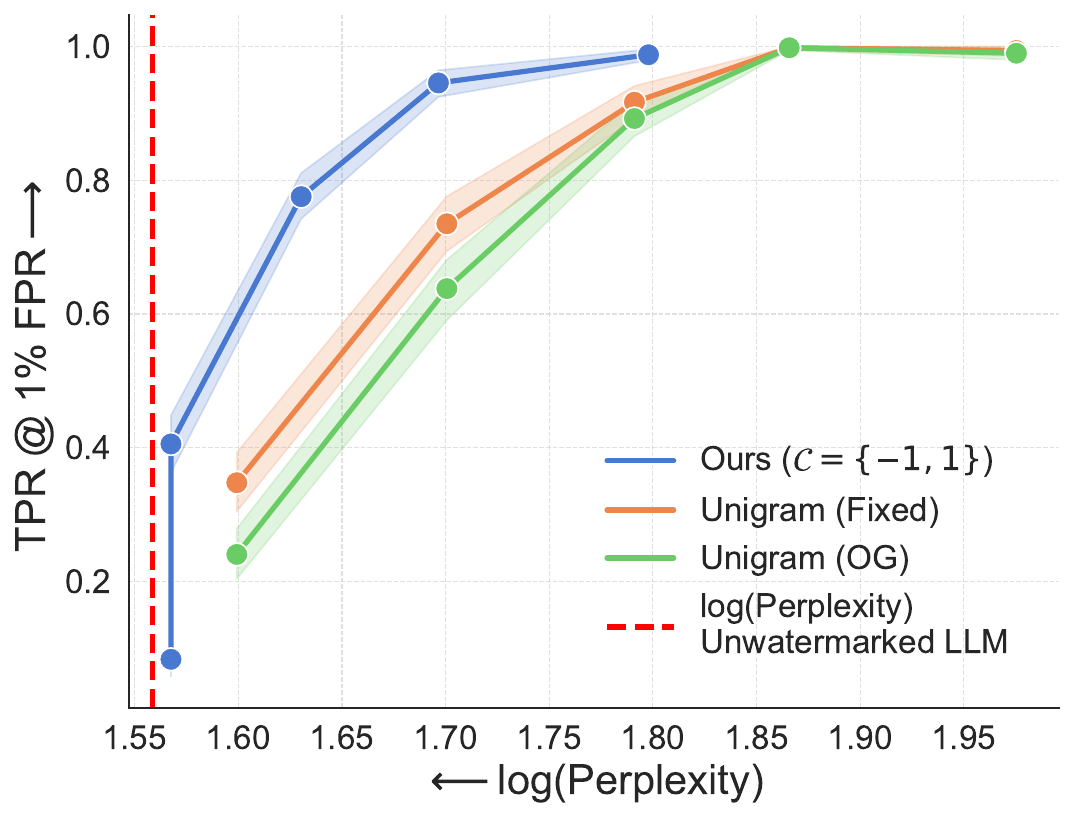}
    \caption{\textbf{Comparison of Unigram Detector Implementation}
    We compare the detectability–quality trade-off of Unigram when using our Unigram detector implementation (Unigram (Fixed)), which applies a hypergeometric test on de-duplicated sequences, with the original dynamic threshold Unigram implementation (Unigram (OG)).
    Additionally, we also show the detectability–quality performance of our watermark as a reference.
    We study the trade-off between detectability (TPR@1) and text quality (log PPL) for different values of the watermark strength parameter $\delta$ and sequences averaging $275$ tokens.
    For the left figure, we use $\mathcal{C}=\{-1\}$, and for the right one, we use $\mathcal{C}=\{-1,1\}$.
    For Unigram, we use the same data for both figures and the same data for both detectors.}
    \label{fig:unigram_detector}
    \vspace{-0.14in}
\end{figure}

\paragraph{Unigram}
The Unigram watermark is essentially a Red-Green watermark with a context size of $0$: the red-green split is fixed once and remains the same for all tokens and across all generations.
The size of this fixed green subset is $\gamma$.
During generation, the logits of tokens belonging to the green subset are boosted by a constant $\delta$.
For detection, given a sequence $\omega \in \Sigma^*$ of length $L$, we count the occurrences of green tokens in the de-duplicated sequence~\citep{three_bricks}.
Under the null, the occurrence follows a hypergeometric distribution with parameters $(|\Sigma|,\gamma |\Sigma|,L)$.
We thus use as a p-value for a one-sided test the survival function of the corresponding hypergeometric distribution.
This implementation slightly deviates from the original implementation by~\citet{unigram}, as the original implementation does not allow us to compute exact p-values.
We nonetheless compare our statistically sound detector implementation with the original detector by~\citet{unigram} in \cref{fig:unigram_detector}.
We find that, when measuring the Unigram quality–detectability trade-off, both detector implementations (ours and the original) yield very similar results.
Nonetheless, unlike the original implementation based on dynamic thresholding, ours yields more statistically reliable p-values, thus has stronger guarantees and is more interpretable.

\paragraph{PatternMark}
PatternMark is an extension of the Unigram watermark that focuses on detecting color patterns within a token sequence.
The vocabulary is partitioned into $l$ color subsets of the same size $\Sigma_1,\ldots, \Sigma_l$.
The watermark is then parameterized by a Markov chain over the colors, with its initial state $Q \in [0,1]^l$, a transition matrix $A \in [0,1]^{l \times l}$ and a strength parameter $\delta$. 
Let $L$ be the length of the sequence we are generating and $K \in \{1,\ldots,l\}^L$ be a (stochastic) color sequence.
During generation, we first sample $k \sim K$, a color sequence, and then, when sampling the token at position $i$, we boost the logits by $\delta$ for the tokens in the $k_i$ color subset $\Sigma_{k_{i}}$.
To sample $k \sim K$, we first sample $k_0$ according to $Q$ and then use the transition matrix $A$ to sample the next states recursively.
Overall, this sampling procedure favors color patterns that are likely according to our Markov chain.
For detection, given a set of patterns $\mathcal{T}\in\mathcal{P}(\{1,\ldots,l\}^m)$ of the same length $m$ and a sequence of tokens $\omega \in \Sigma^*$, we first compute the corresponding color sequence and then the occurrences of the patterns from $\mathcal{T}$ in this color sequence.
Using dynamic programming, we compute the survival function of the distribution of occurrences of patterns from $\mathcal{T}$ and from it derive a one-sided test.
We find in \cref{app:ressources} that this detection algorithm with PatternMark is significantly slower than Red-Green detection.
We refer the reader to~\citet{patternmark} for more details.
When using $l=2$ colors, $Q = (0,1)$, $A = ((1,0),(0,1))$, and as patterns $\mathcal{T} =\{(1)\}$, PatternMark is exactly the Unigram watermark with $\gamma=0.5$.

\subsection{Limitations of Unigram and PatternMark}
\label{app:subsec:limitations_unigram}

In this subsection, we detail the limitations of Unigram.
In particular, we find that Unigram is unreliable (\ie Type-1 error is not reliably controlled) and that it is especially vulnerable to both spoofing and scrubbing.
Both limitations do not apply to our approach.

\begin{figure}[t]
    \centering
    \includegraphics[width=0.32\textwidth]{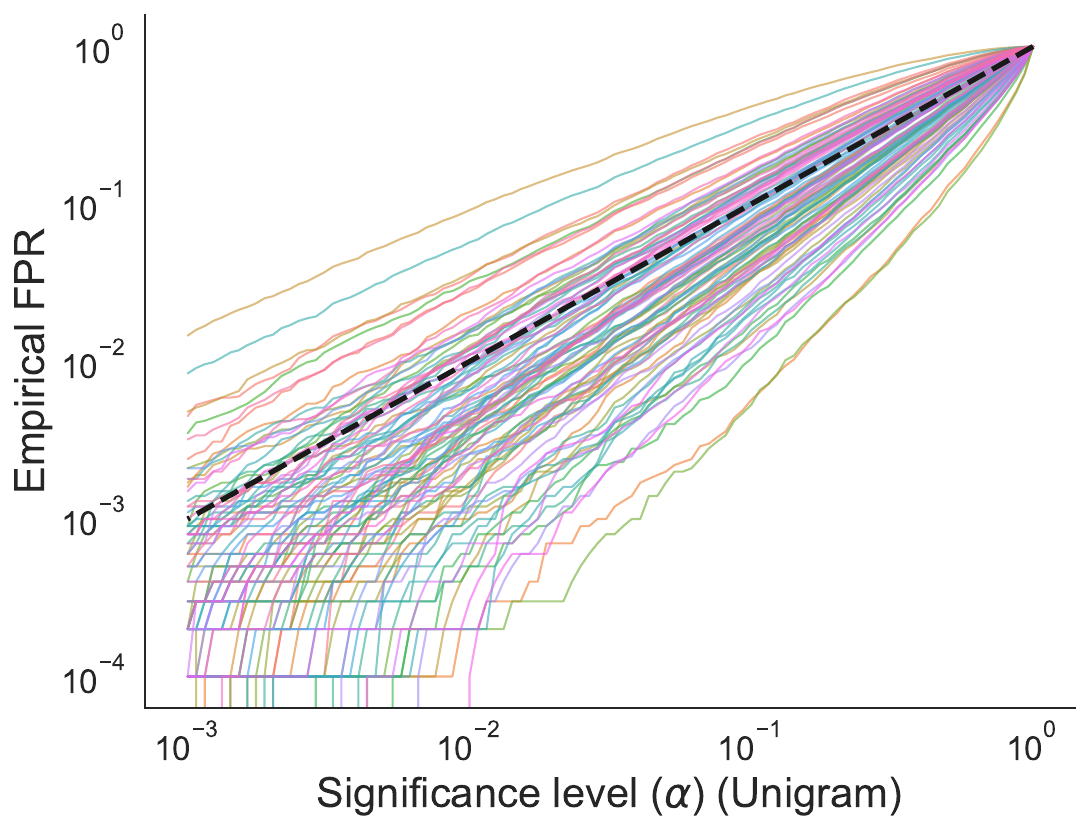}
    \includegraphics[width=0.32\textwidth]{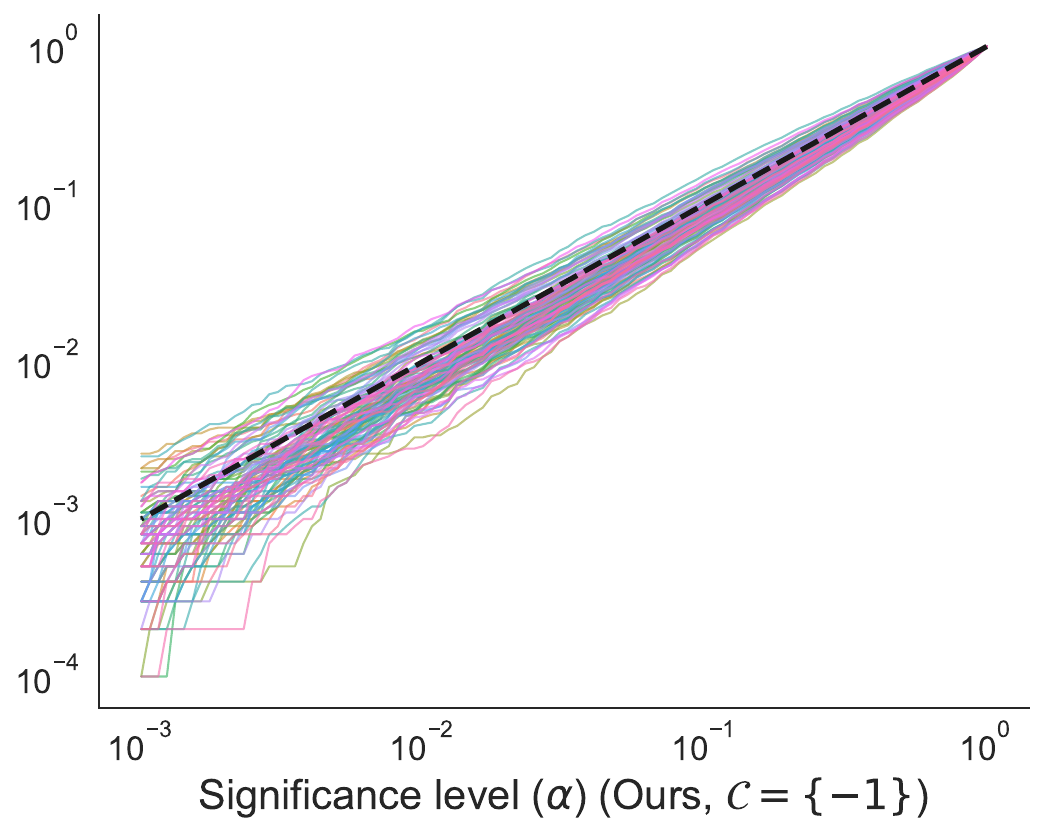}
    \includegraphics[width=0.32\textwidth]{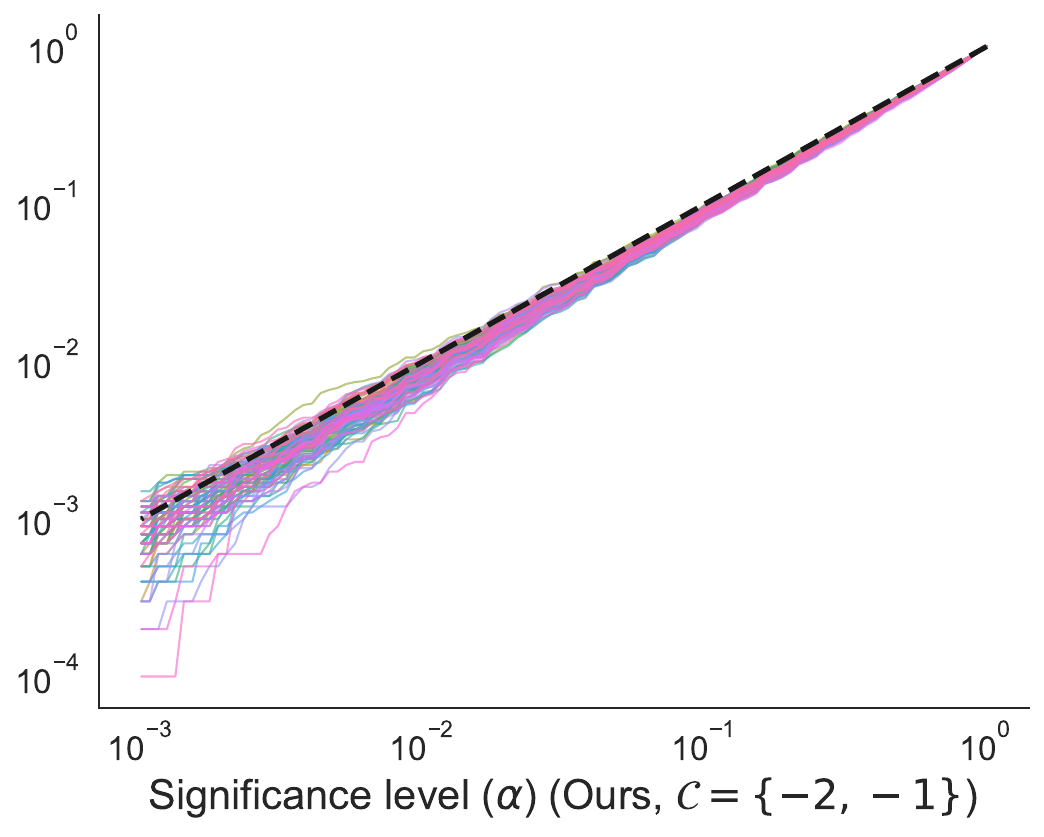}
    \caption{\textbf{Type-1 Error Control for Different Watermarks} We compare the empirical FPR against the theoretical FPR ($\alpha$) measured on $10000$ samples of unwatermarked data. Each line corresponds to a different watermark key $\xi$. We find that with Unigram (left) the seed-to-seed variation is significant; at $\alpha=1\%$ it spans multiple orders of magnitude. This is not the case for our approach (middle and right), with seed-to-seed variation drastically diminishing as the context size $|\mathcal{C}|$ increases.}
    \label{fig:unigram_fpr_fpr}
    \vspace{-0.15in}
\end{figure}

\begin{wraptable}{r}{0.5\textwidth}
\vspace{-1.3em}

\caption{\textbf{Analysis of the Error Rate of Unigram and Our Watermark} We compare the maximum FPR@1\% (measured on $10000$ samples of unwatermarked data over $100$ different keys $\xi$) and the standard deviation of the FPR@1\% between Unigram and our approach. We find that the error in FPR@1\% of Unigram can be 17 times higher than the expected one, and that the key-to-key standard deviation is on the order of 1\%. This is not the case for our watermark: it is significantly more reliable.}
\label{tab:unigram_unreliable}
\centering
\resizebox{0.5\textwidth}{!}{
\begin{tabular}{lccc}

\toprule
 \multirow{1}{*}{Metric} & \multirow{1}{*}{Unigram} & \shortstack{Ours\\($\mathcal{C}=\{-1\}$)} & \shortstack{Ours\\($\mathcal{C}=\{-2,-1\}$)} \\
\midrule
 $\max_\xi$ FPR@1 & 17.0\% & 1.7\% & 1.1\% \\  
 \cmidrule(lr){2-4}
 $\text{std}_\xi$ FPR@1 &  1.2\% & 0.3\% & 0.1\%\\
\bottomrule
\end{tabular}}
\vspace{-2em}
\end{wraptable}

\paragraph{Unigram is unreliable}
The Unigram watermark does not control Type-1 error, leading to observed false positive rates vastly different from the expected one.
This means that, depending on the watermark private key $\xi$ and the distribution of the unwatermarked data, a significant portion of unwatermarked text could be wrongly labelled as watermarked.
This is because Theorem 3.3 in~\citet{unigram} assumes \emph{randomness over the Red/Green list}.
Yet, the scheme fixes the Red/Green list split, meaning this assumption is violated.
This phenomenon has been observed in~\citet{jovanovic2025wmar} (Appendix E.2) where, in their specific case, they introduce stratification to mitigate this issue.
For schemes using a hashing-based randomness, while their test ultimately also relies on the assumption that the Red/Green list is truly random, we find that the pseudo-randomness induced by the hash is enough to ensure that the assumption holds in practice.

To validate our claims, we run the detection algorithm with $100$ different watermark keys of Unigram and our watermark with $\mathcal{C}=\{-1\}$ and $\mathcal{C}=\{-2,-1\}$ on $10000$ unwatermarked samples of $200$-token-long text extracted from the realnewslike subset of the C4 dataset~\citep{c4}.
For each watermark key, we measure the empirical false positive rate (FPR) with respect to the expected one ($\alpha$). 
For a truly random watermark and a well-calibrated test, we expect the empirical FPR~to~be~exactly $\alpha$. Large deviation from the equality case (either above or below) means the watermark is unreliable.

\cref{fig:unigram_fpr_fpr} shows that, with Unigram (left), the seed-to-seed variation in empirical FPR is significant, confirming that the Type-1 error is not controlled.
We see in \cref{tab:unigram_unreliable}, that the FPR@1\% can exceed 17 times the expected error rate of 1\%.
This is not the case with our approach, and we even see that the seed-to-seed variation lowers as the context size increases.
This shows that using pseudo-random hashing to seed the watermark is a good approximation of the Red/Green list randomness assumption used to prove the validity of the test, and that it is required to properly control Type-1 error.
Note that dynamically adjusting the test threshold depending on the seed is not an adequate mitigation: the threshold would ultimately depend on the distribution of the text and this resembles a zero-shot detector approach like \citet{mitchell2023detectgpt, ma-wang-2024-zero,liu2025zeroshot}.

\paragraph{Unigram and PatternMark are Less Secure}
Another aspect of Unigram and PatternMark is that they rely on a single vocabulary split.
Prior work~\citep{ws_milp} has shown that, for Unigram, an adversary can almost exactly reconstruct the red-green split.
Given such a reconstruction, the adversary can then perfectly scrub (\ie remove the watermark from generated sequences) or spoof the watermark (\ie generate watermarked sequences without using the watermarked LM).
If combined with detection attacks~\citep{detwm,crafted_prompts}, this can significantly lower the practical effectiveness of the watermark.
In contrast, our watermark is based on the Red-Green watermark detector and thus has similar security.
Importantly, while for low context size $|\mathcal{C}|$~\citet{ws} have shown that the watermark can be scrubbed or spoofed, our watermark is secure against such attacks for higher $|\mathcal{C}|$.

\section{Additional Ablation Experiments}
\label{app:additional_experiments}

In this section, we provide additional experimental evaluations of our watermark using \llada under different scenarios and against various baselines.

\paragraph{Experimental Setup}
We use the same evaluation setup as in \cref{sec:eval}, generating $600$ sequences of up to $300$ tokens with \llada, using prompts derived from WaterBench~\citep{waterbench} and the previous token as context (\ie $\mathcal{C}=\{-1\}$).

\subsection{Ablation on the Influence of the Diffusion Process}

In this part, we explore the behavior of our watermarking algorithm under variations in the diffusion process.
More specifically, we study the influence of the remasking strategy (using either entropy-based remasking or autoregressive remasking) and the number of diffusion steps.

\begin{figure}[t]
    \centering
    \includegraphics[width=0.48\textwidth]{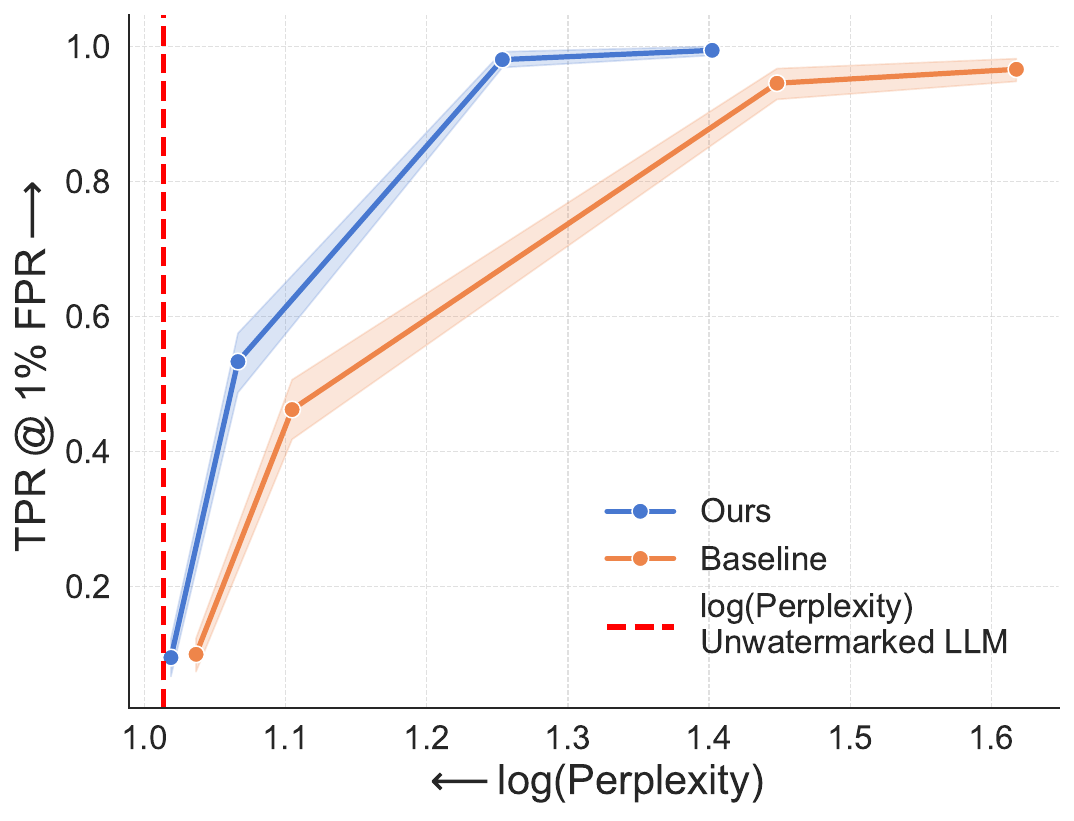}
    \includegraphics[width=0.48\textwidth]{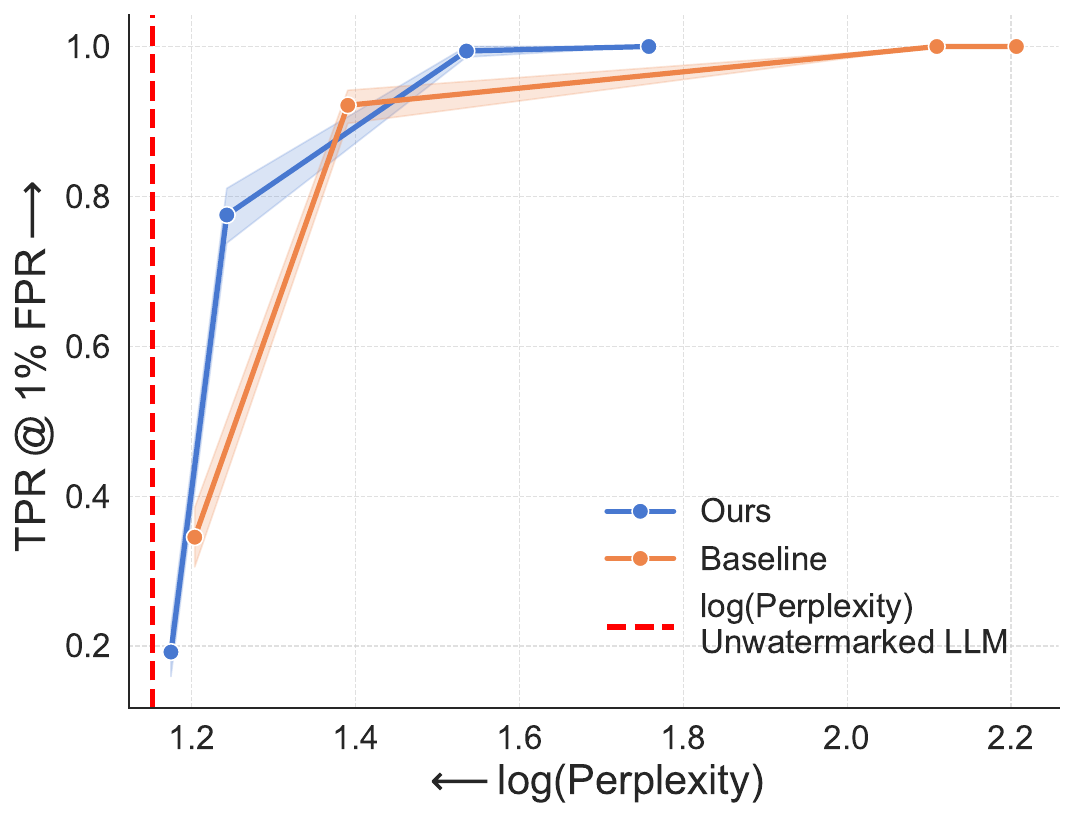}
    \caption{\textbf{Ablation on the Remasking Strategy} We compare the trade-off between watermark detectability (TPR@1) and text quality (log PPL) of our approach and the baseline with either the entropy remasking strategy (\emph{left}) or the autoregressive remasking strategy (\emph{right}).
    Responses are generated by \llada with temperature $0.5$ and 600 prompts from \textsc{WaterBench}.}
    \label{fig:tpr_ppl_remasking}
    \vspace{-0.15in}
\end{figure}

\paragraph{Remasking Ablation}
The first remasking strategy we consider is the entropy-based remasking. 
For each token position in the sequence, the entropy of the distribution is computed and stored in a "meta-logits" vector. 
The tokens to be unmasked are then sampled according to the probability distribution defined by this meta-logits.
To compute the probability distribution corresponding to the meta-logits, a softmax with temperature $0.1$ is used.
For the autoregressive remasking, as the name suggests, we simply force the diffusion process to unmask tokens from left to right.

\cref{fig:tpr_ppl_remasking} (left) shows that even with entropy-remasking, our watermark (i) is significantly more effective than the baseline, and (ii) remains highly effective in absolute terms. 
Being robust to entropy-remasking is both crucial and challenging.
It is crucial because entropy-remasking significantly improves DLM generation quality: compared with \cref{fig:tpr_ppl_main}, the log PPL of the unwatermarked text is 0.5 lower. 
Hence, DLMs are likely to be deployed with such an unmasking strategy.
Yet, it is challenging because low-entropy distributions, which are favored by the unmasking strategy, are harder to watermark given a fixed distortion budget, as explored in previous works on ARLM watermarks~\citep{kgw2}.

With autoregressive remasking, shown in \cref{fig:tpr_ppl_remasking} (right), we observe that both the baseline and our approach yield significantly stronger watermarks.
This is expected since, for the baseline, this corresponds exactly to using the Red-Green scheme from~\citet{kgw}.
However, given that our approach still leverages the \greenify term, it slightly outperforms the Red-Green scheme.
This suggests that using the \greenify term to leverage the additional information of the DLM (\ie the distribution of future tokens) to insert the watermark is beneficial, which the Red-Green ARLM watermarks can not do.

\begin{figure}[t]
    \centering
    \includegraphics[width=0.43\textwidth]{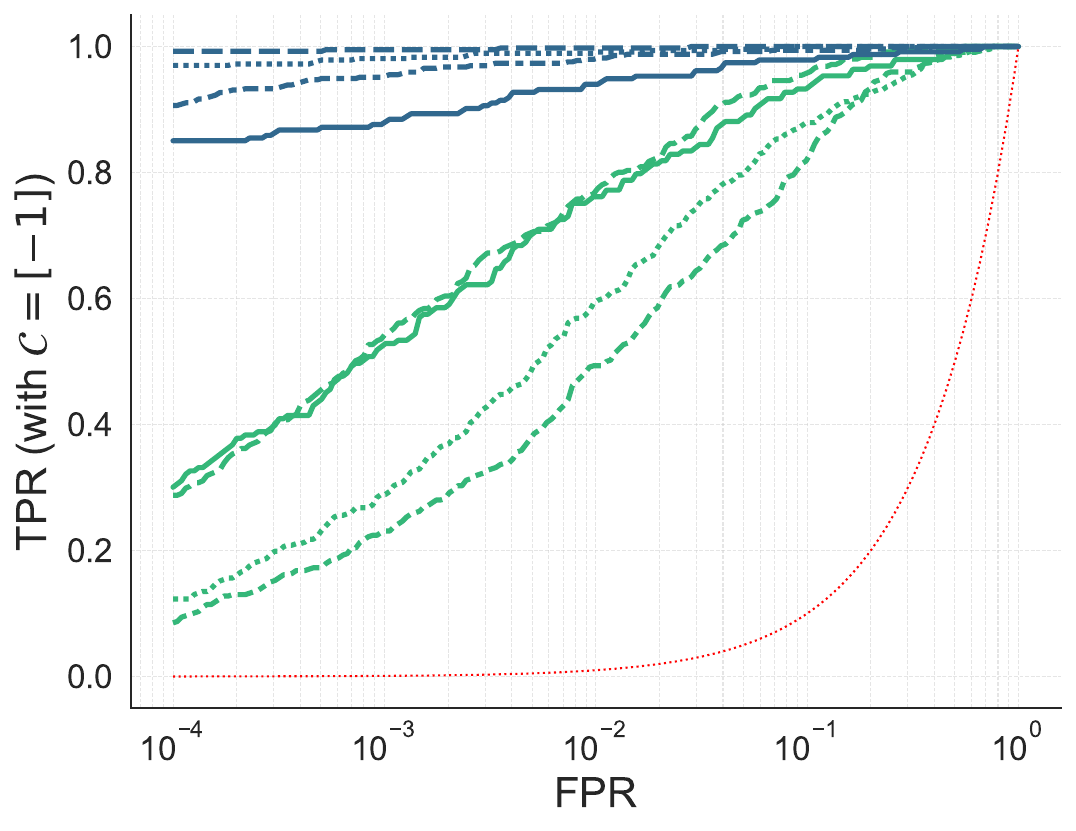}
    \includegraphics[width=0.55\textwidth]{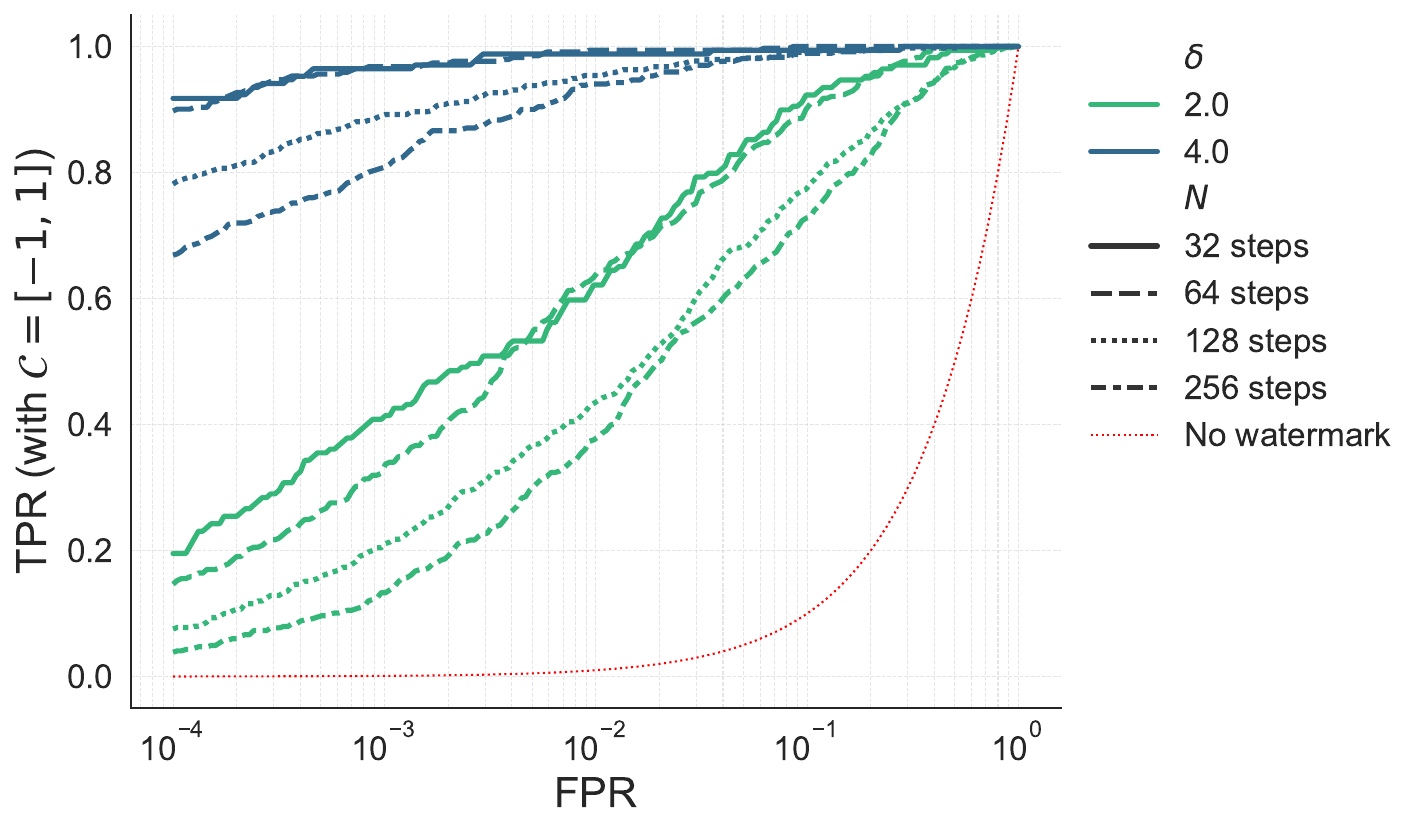}
    \caption{\textbf{Ablation on the Number of Diffusion Steps} ROC curves of our watermark with different number of diffusion steps $N$, using $\mathcal{C}=\{-1\}$ (\emph{left}) or $\mathcal{C}=\{-1,1\}$ (\emph{right}). 
    Responses are generated with \llada at temperature $0.5$ and metrics are computed over $600$ samples.}
    \label{fig:roc_steps}
    \vspace{-0.15in}
\end{figure}

\paragraph{Diffusion Steps}
One key advantage of DLMs is their ability to generate multiple tokens at a time by changing the number of diffusion steps.
As explained in~\citet{llada}, the fewer steps, the higher the inference speed, but the more degraded the generation quality becomes.
This means that, in order to reduce inference costs, it is very likely that DLMs are deployed with fewer diffusion steps than the total number of tokens to generate.
To ensure our watermark works for any given number of steps, we use the same experimental setup as in \cref{sec:eval}, but generate $256$-token-long responses (instead of $300$). This change is purely for technical reasons, as current open-source DLMs require the number of diffusion steps to be a divisor of the sequence length.

\cref{fig:roc_steps} shows the ROC curves of our watermark with various diffusion steps $N$, watermark strength parameter $\delta$, and context sets $\mathcal{C}$.
We see that the fewer the steps, the higher the TPR.
This is expected: the optimization formulation from \cref{ssec:diffusion_lm_wm_instantiation} implicitly assumes that the whole sequence is sampled from $q^*$.
Hence, lowering the number of steps makes it closer to the assumed setting of the optimization formulation, thereby leading to a stronger watermark.

\begin{figure}[t]
    \centering
    \includegraphics[width=0.325\textwidth]{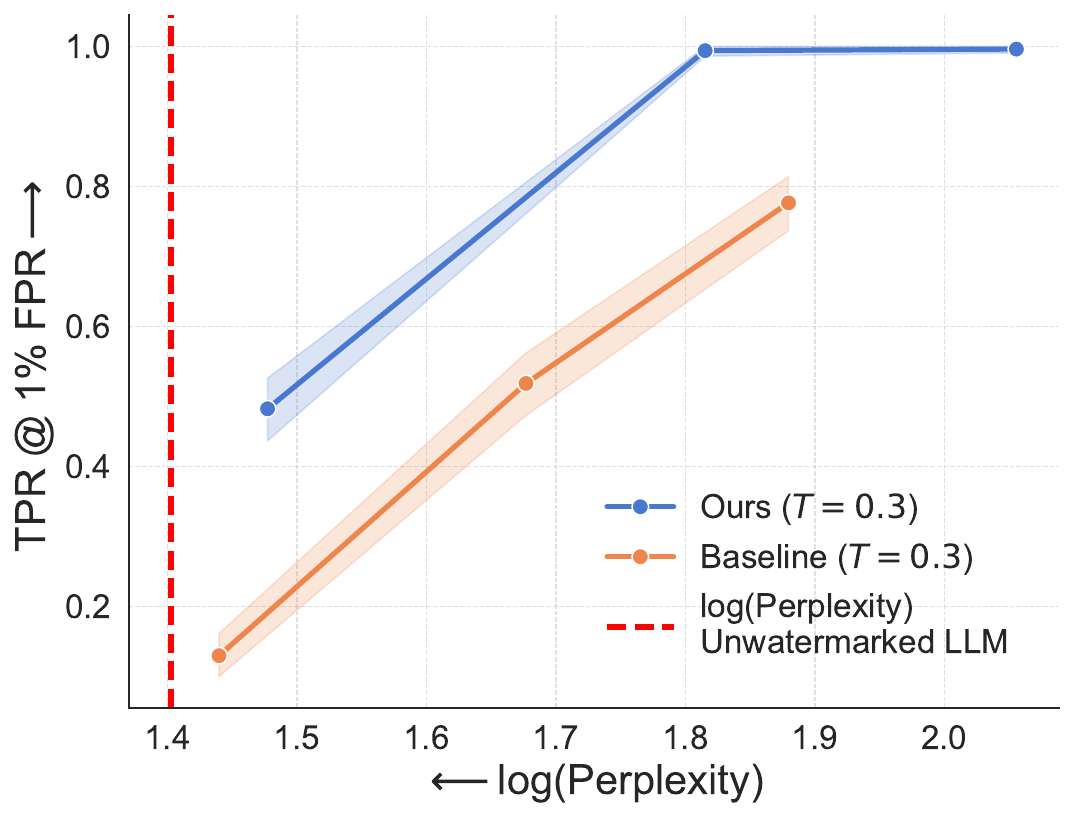}
    \includegraphics[width=0.325\textwidth]{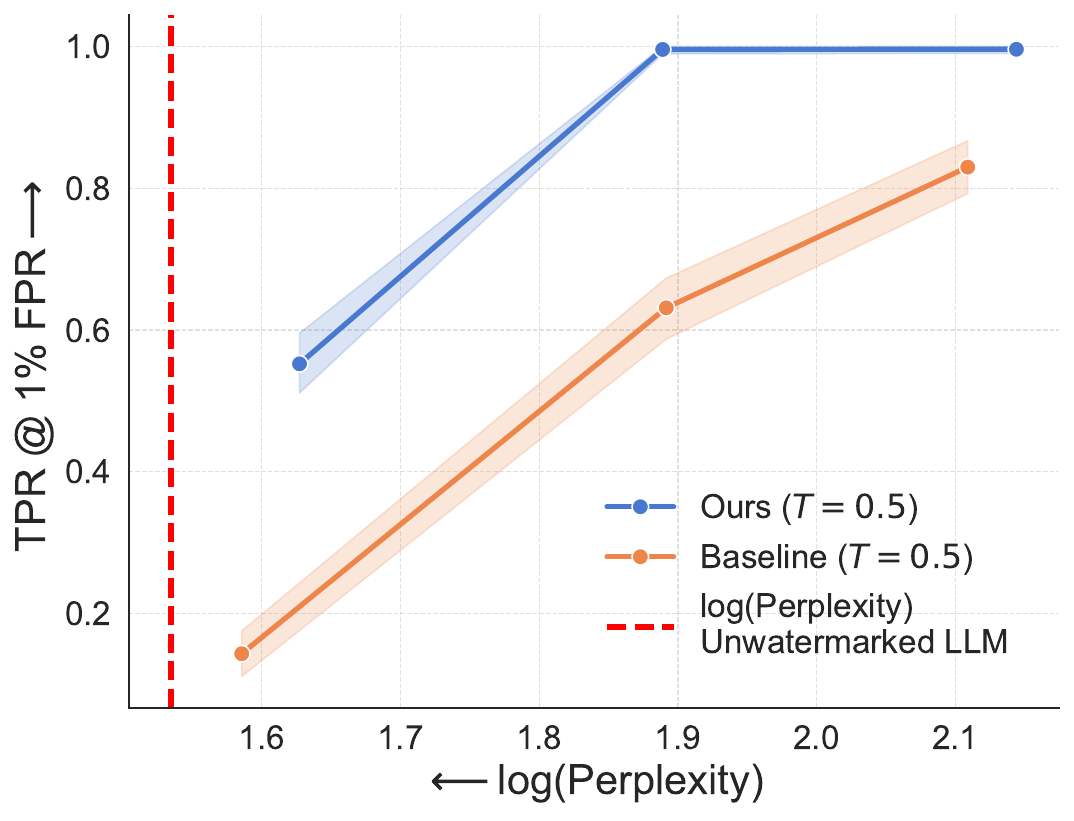}
    \includegraphics[width=0.325\textwidth]{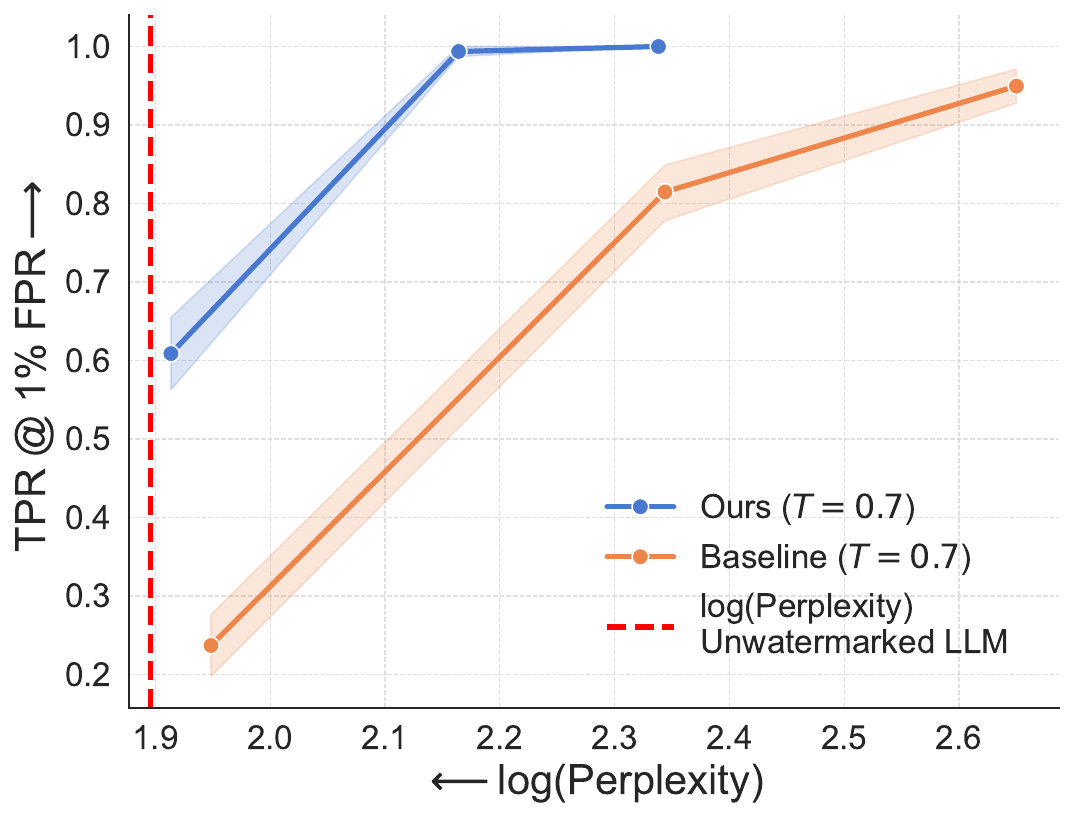}
    \caption{\textbf{Ablation on the Sampling Temperature} We compare the trade-off between watermark detectability (TPR@1) and text quality (log PPL) of our approach and the baseline using samples generated with \llada and temperature $T=0.3$ (\emph{left}), $T=0.5$ (\emph{middle}), and $T=0.7$ (\emph{right}).}
    \label{fig:tpr_ppl_temperature}
    \vspace{-0.15in}
\end{figure}

\paragraph{Temperature}
Both \llada and \dream models are designed to run at a temperature of $0.5$ for optimal performance~\citep{llada,dream}. 
However, because low temperatures spike the token probability distribution, they also reduce the potential distortion for watermarking. 
In the worst-case scenario, $T=0$, the sampling is deterministic and hence the watermark cannot be applied.
Therefore, to study the influence of temperature on our watermark, we run a similar evaluation as in \cref{sec:eval} but with $T \in \{0.3, 0.5, 0.7\}$.

In \cref{fig:tpr_ppl_temperature}, we compare the watermark strength with respect to distortion of our approach and the baseline for different temperatures, increasing from left to right.
We observe that at lower temperatures, a higher distortion is required to achieve a strong watermark, for both our approach and the baseline.
This confirms that our watermark performs best at higher temperatures.
Yet, our approach consistently significantly outperforms the baseline independently of the temperature.

\subsection{Additional Ablation on the Watermark Hyperparameters}

In this part, we ablate the remaining components of our watermarking scheme, namely the top-k computation of the energy function $J$, the scheme parameter $\gamma$, and we explore using distributions other than i.i.d. Bernoulli for the green list $G$.

\begin{figure}[t]
    \centering
    \includegraphics[width=0.43\textwidth]{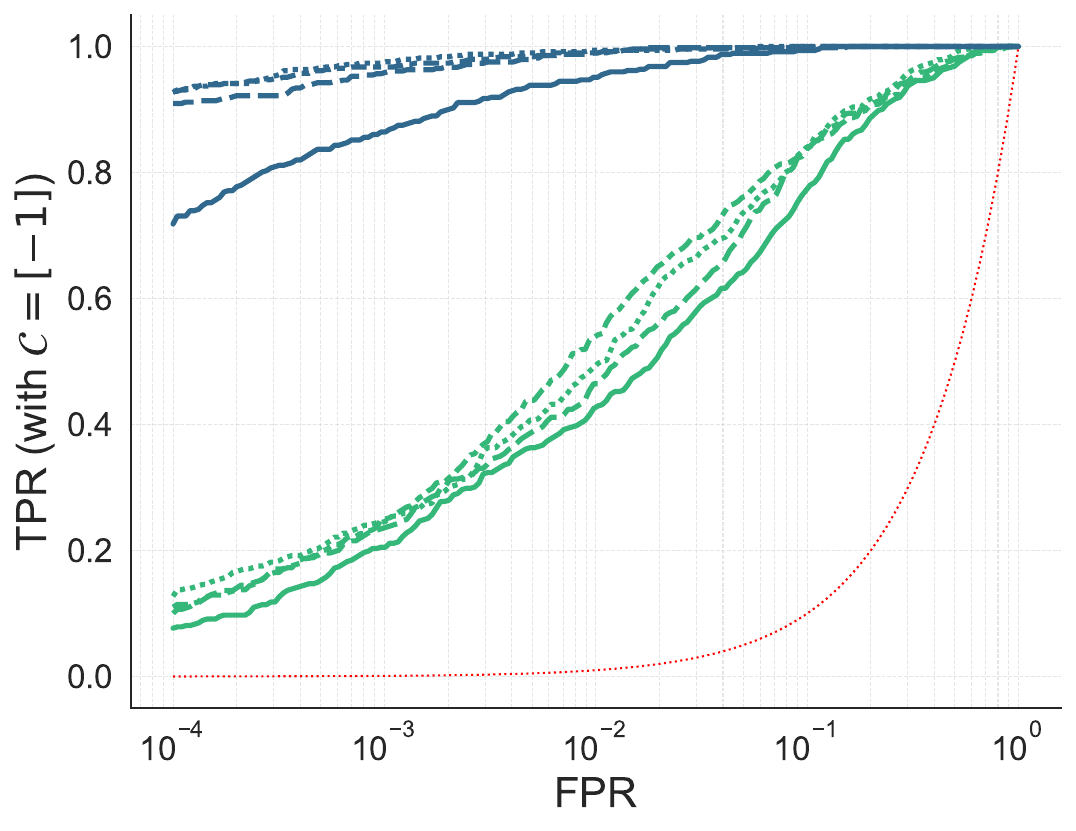}
    \includegraphics[width=0.56\textwidth]{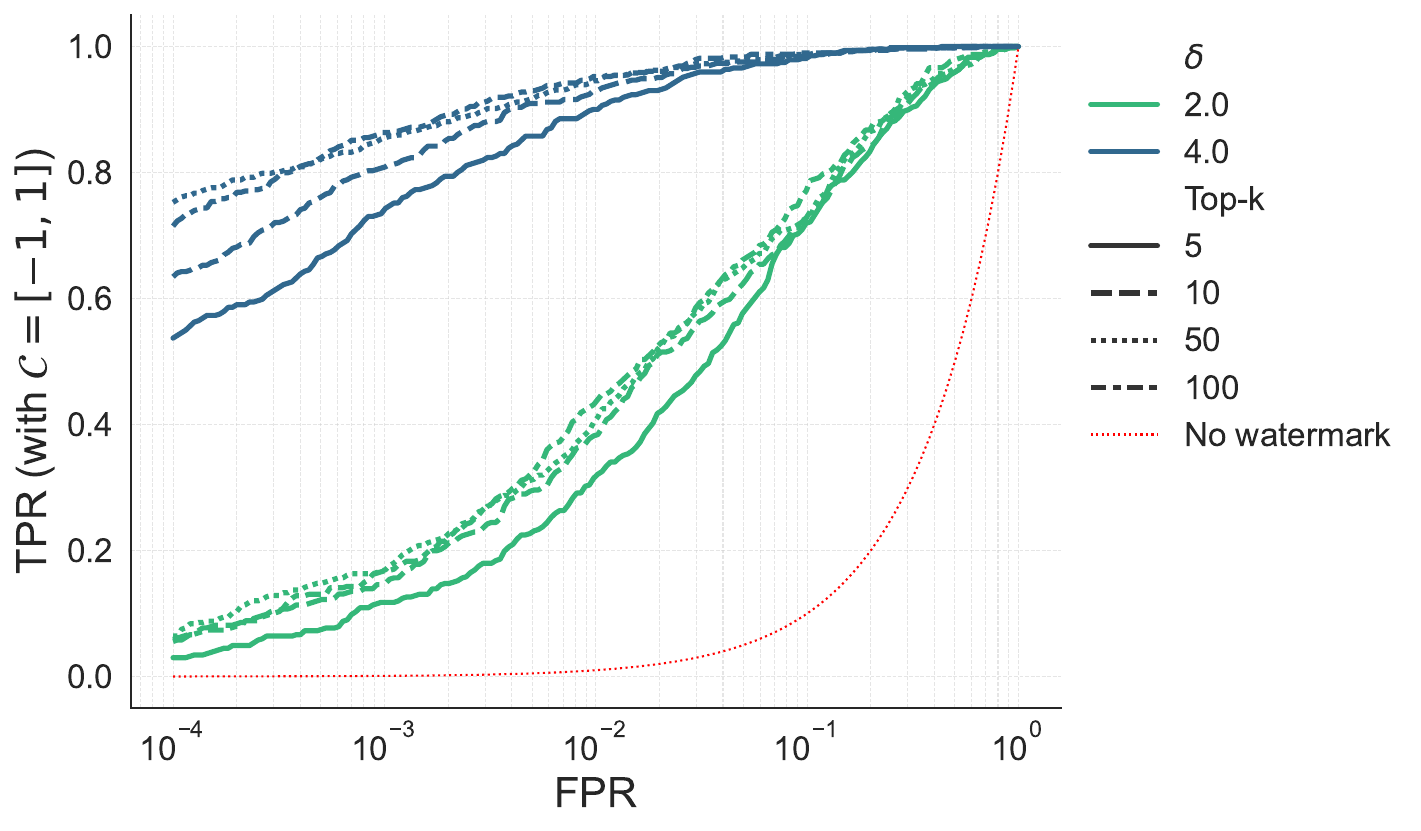}
    \caption{\textbf{Effect of the Top-k Approximation on Our Watermark Detectability} ROC curves of our watermark with different top-k when computing the energy function $J$, using either $\mathcal{C}=\{-1\}$ (\emph{left}) or $\mathcal{C}=\{-1,1\}$ (\emph{right}).
    Responses are generated with \llada and temperature $0.5$ and metrics are computed over $600$ samples.}
    \label{fig:roc_topk}
    \vspace{-0.15in}
\end{figure}

\paragraph{Top-k}
In \cref{ssec:diffusion_lm_wm_instantiation}, to reduce the complexity of computing the energy function (\cref{eq:watermark_optim_problem}), we restrict the computation of $h_t^\top \cdot G\cdot p_t$ to the top-k elements of $h_t$ and $p_t$.
To ensure that this approximation does not affect the watermark strength, we run the same evaluation as in \cref{ssec:eval:main}, but with top-k varying from $5$ to $100$ (in the main experiment, we use $50$).
In \cref{fig:roc_topk}, we observe that regardless of the choice of context set $\mathcal{C}$ or the watermark strength parameter $\delta$, the TPR corresponding to the highest top-k value ($100$) slightly exceeds that of the lowest top-$k$ ($5$).
However, the difference between top-$10$ and top-$100$ is minimal—this suggests that from top-$10$ onwards, there is only a marginal benefit to increasing top-$k$.
Hence, settling for top-$50$, incurs almost no loss in strength.

\begin{figure}[t]
    \centering
    \includegraphics[width=0.48\textwidth]{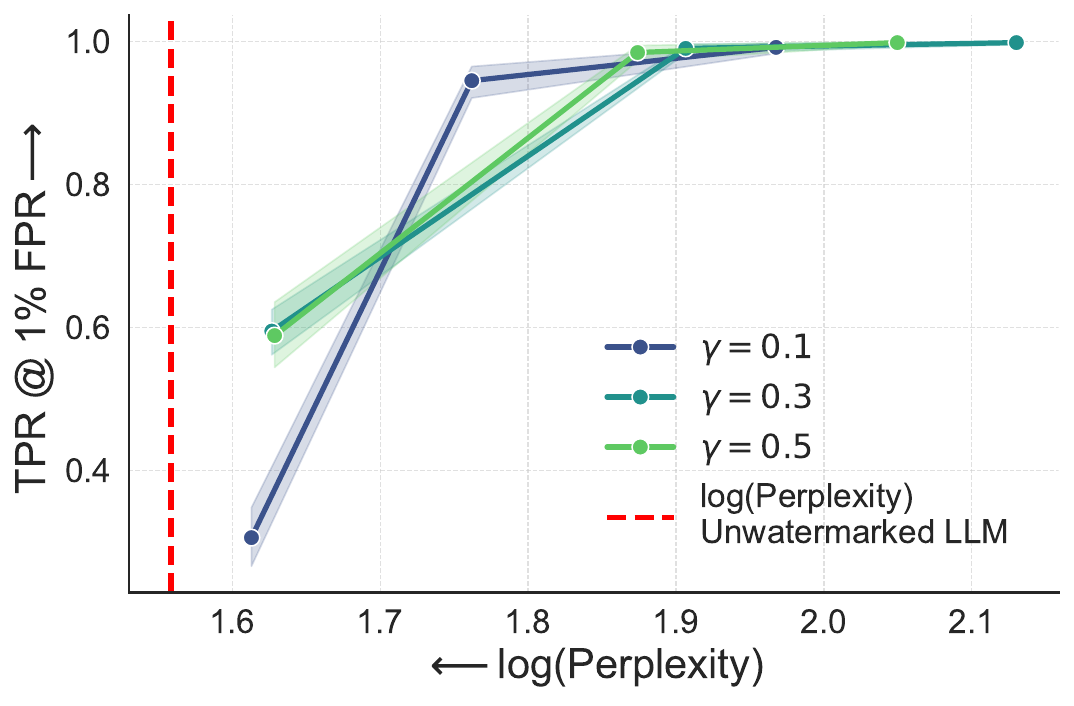}
    \includegraphics[width=0.48\textwidth]{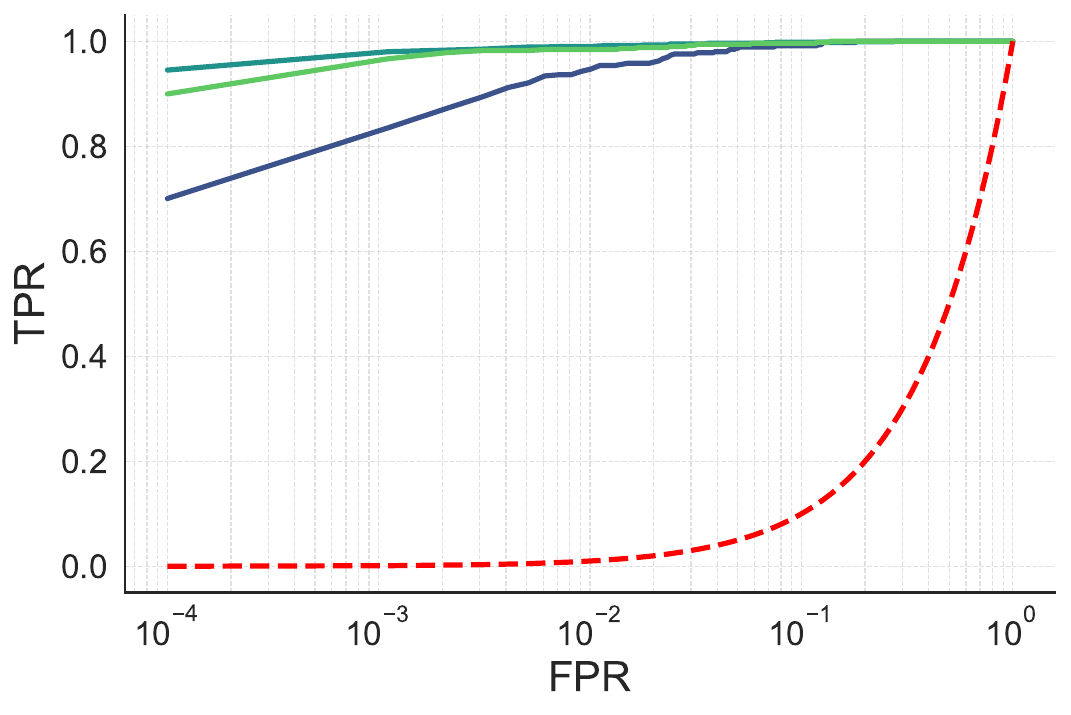}
    \caption{\textbf{Ablation on the Green List Split Size} (\emph{Left}) We study the trade-off between watermark detectability (TPR@1) and text quality (log PPL) of our approach for different values of the green list split size $\gamma$. (\emph{Right}) ROC curves of our watermark for different $\gamma$ values using $\delta = 4$.
    For both figures, responses are generated with \llada at temperature $0.5$, metrics are computed over 600 samples and we use the previous token as context (\ie $\mathcal{C} = \{-1\}$).
    }
    \label{fig:roc_gamma}
\end{figure}

\paragraph{Gamma}
In \cref{sec:eval}, we systematically set $\gamma = 0.25$. To validate this choice and analyze the impact of $\gamma$ on the watermark strength, we evaluate our watermark as in \cref{sec:eval} using $\gamma \in \{0.1, 0.3, 0.5\}$.
In \cref{fig:roc_gamma} (left), we see that the choice of $\gamma$ does not have a significant impact on the watermark strength/quality trade-off.
More precisely, a lower $\gamma$ leads to a weaker watermark for a fixed $\delta$, as seen in \cref{fig:roc_gamma} (right), but the quality degradation induced by the watermark is also lower.

\begin{figure}[t]
    \centering
    \includegraphics[width=0.48\textwidth]{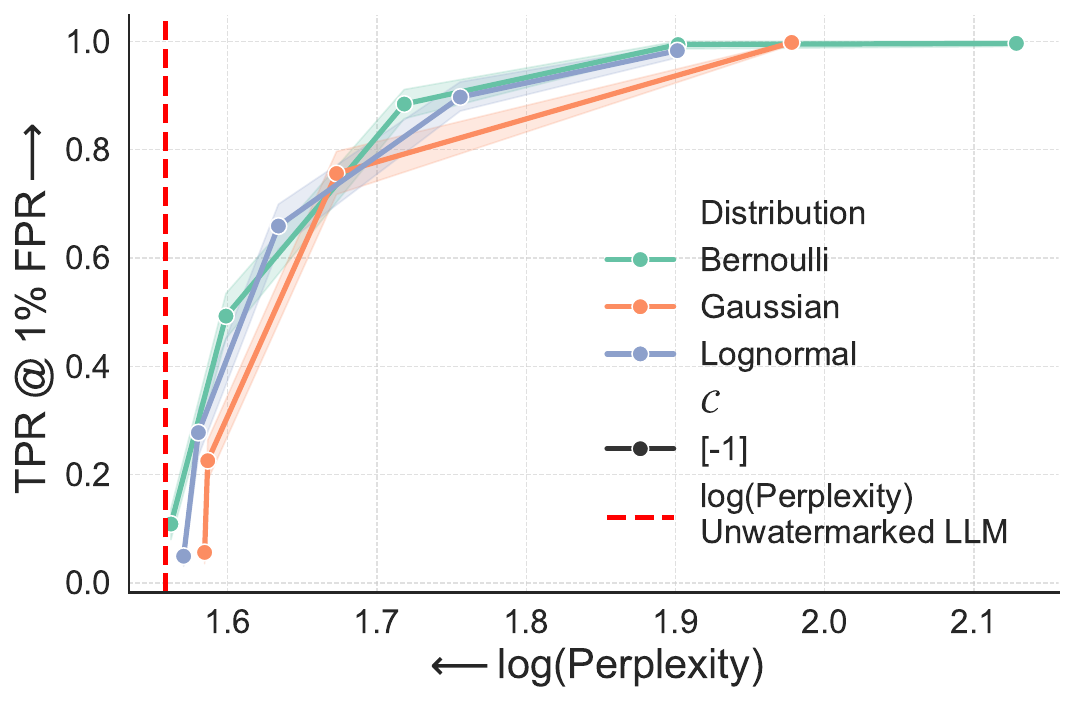}
    \includegraphics[width=0.48\textwidth]{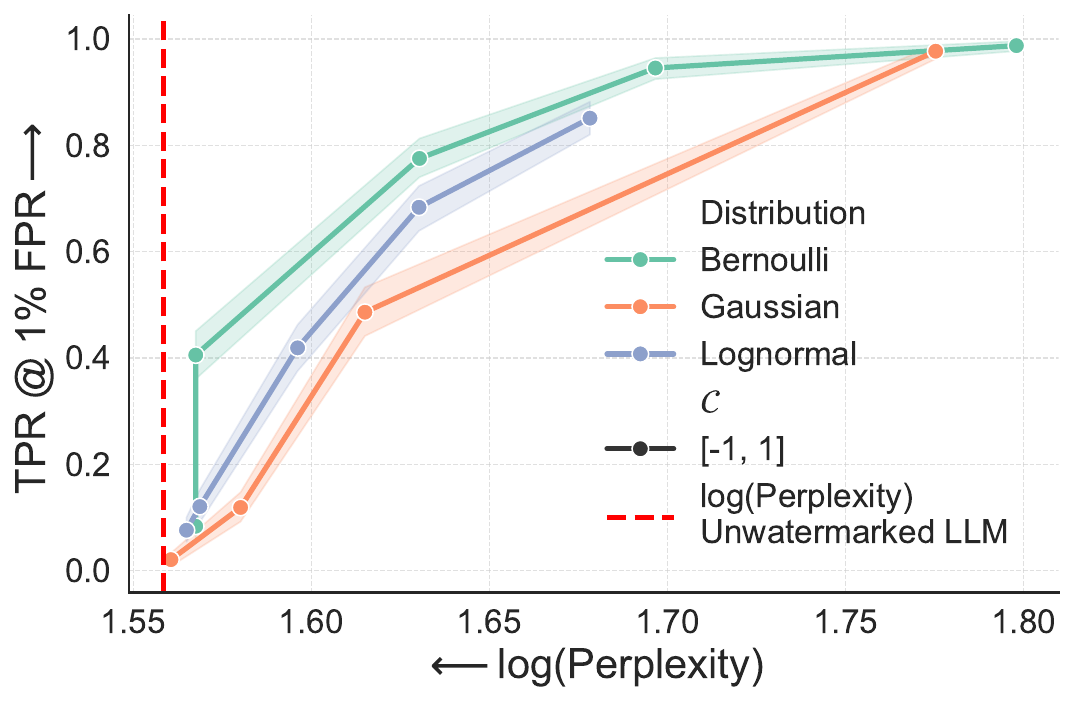}
    \caption{\textbf{Ablation on the Green List Distribution}
    We study the trade-off between detectability (TPR@1) and text quality (log PPL) of our approach for different distributions from which we sample the green list $G$.
    We use as context $\mathcal{C}=\{-1\}$ (\emph{left}) and $\mathcal{C}=\{-1, 1\}$ (\emph{right}).
    For both figures, responses are generated with \llada at temperature $0.5$ and metrics averaged over $600$ samples.
    }
    \label{fig:tpr_ppl_greenlist}

\end{figure}

\paragraph{Green List Type}
Our formulation makes no assumption on the distribution of $G$ and, as long as the distribution of $\hat{\gamma}$ under the null is known, we can still perform statistical tests to detect the watermark.
In \cref{fig:tpr_ppl_greenlist}, we compare different distributions for $G$: either i.i.d. Bernoulli parameterized by their probability, Gaussian parameterized by their variance with zero mean, and Lognormal parameterized by the variance of the underlying normal distribution with zero mean.
For the Gaussian variables, we use a Z-test on $\hat{\gamma}$ and for the Lognormal, we use the Fenton-Wilkinson lognormal approximation.
We see that for all distributions, the TPR@1 is fairly similar, which suggests that the choice of the green list distribution has little impact on the watermark strength.

\begin{figure}[t]
    \centering
    \includegraphics[width=0.48\textwidth]{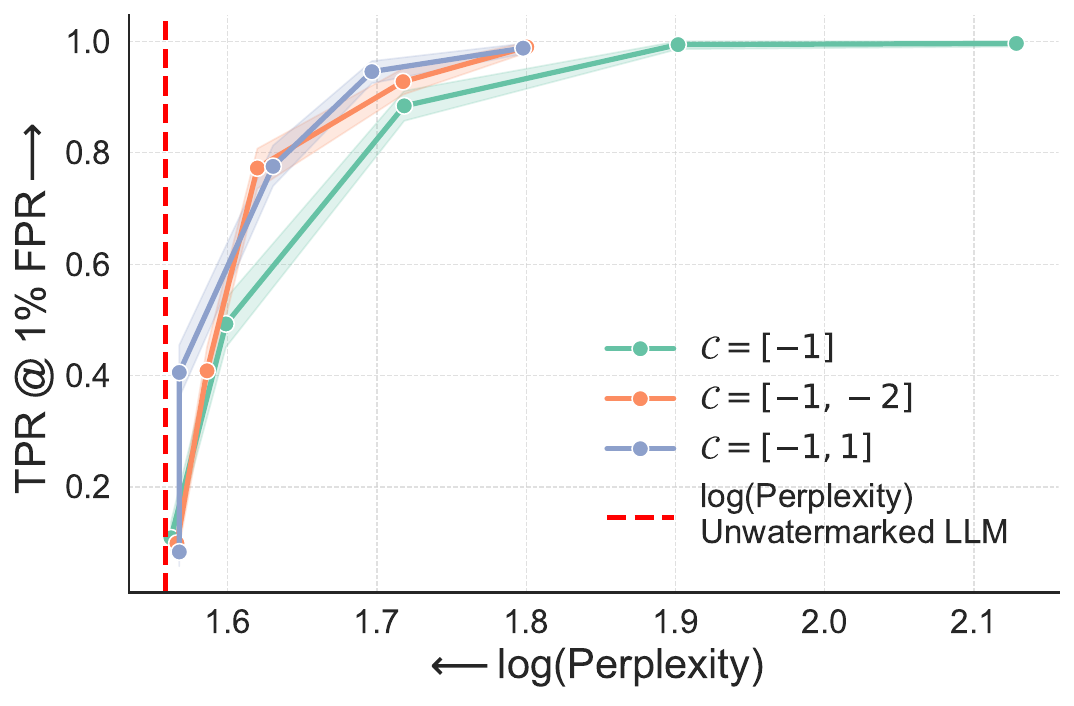}
    \includegraphics[width=0.48\textwidth]{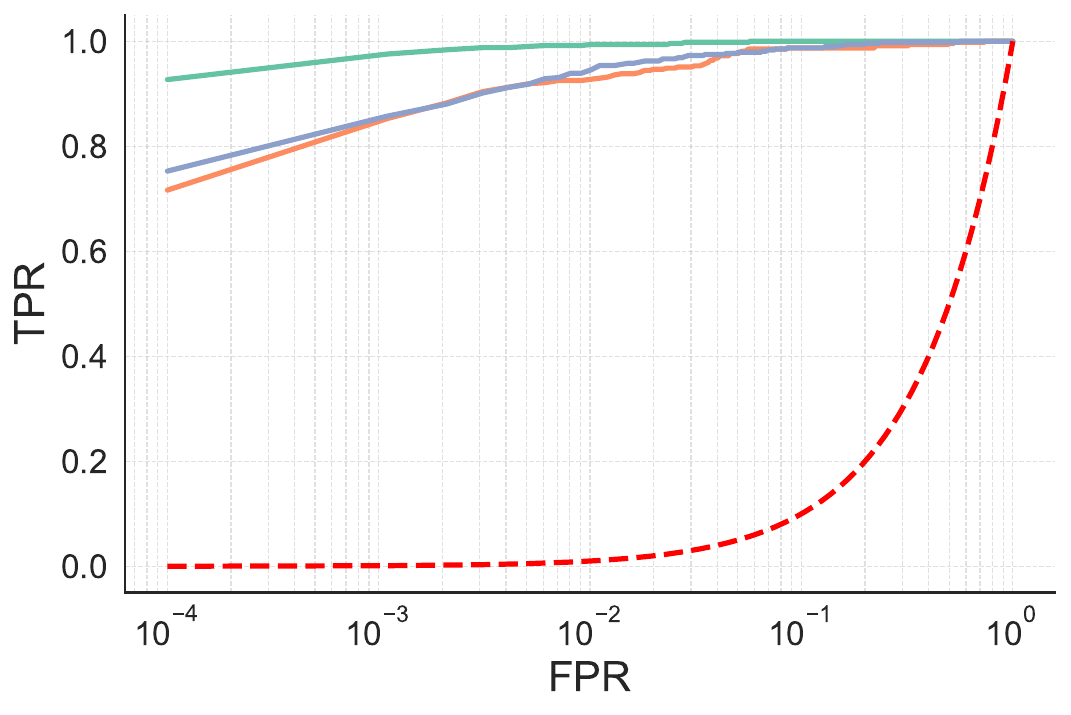}
    \caption{\textbf{Ablation on the Context Set} (\emph{Left}) We study the trade-off between watermark detectability (TPR@1) and text quality (log PPL) of our approach for different context sets $\mathcal{C}$. 
    (\emph{Right}) ROC curves of our watermark for different $\mathcal{C}$ sets using $\delta = 4$.
    For both figures, responses are generated with \llada at temperature $0.5$, metrics are computed over 600 samples.
    }
    \label{fig:roc_context}
\end{figure}

\paragraph{Context Set}
\cref{fig:roc_context} (left) shows that there is no significant difference between the different context sets with respect to the detectability/quality trade-off.
This is expected, as prior work~\citep{ws} has shown that context sets do not impact quality but rather the security of the watermark.
Given that our watermark uses the same detector as~\citet{kgw}, it is also vulnerable to similar attacks.
\cref{fig:roc_context} (right) confirms the conclusion that there is no significant difference between the different context sets with respect to watermark detectability.

\subsection{Comparison with Additional Baselines}

In this part, we compare our watermark to two other baselines.
The first is an adaptation of AAR watermark~\citep{aar} and the second of KTH watermark~\citep{kth}.

\paragraph{Experimental Setup}
We use the same evaluation setup as in \cref{sec:eval}, generating $600$ sequences of up to $300$ tokens, using prompts derived from WaterBench~\citep{waterbench}.
For each prompt, we evaluate both the baseline watermark and our watermark.

\begin{figure}[t]
    \centering
    \includegraphics[width=0.7\textwidth]{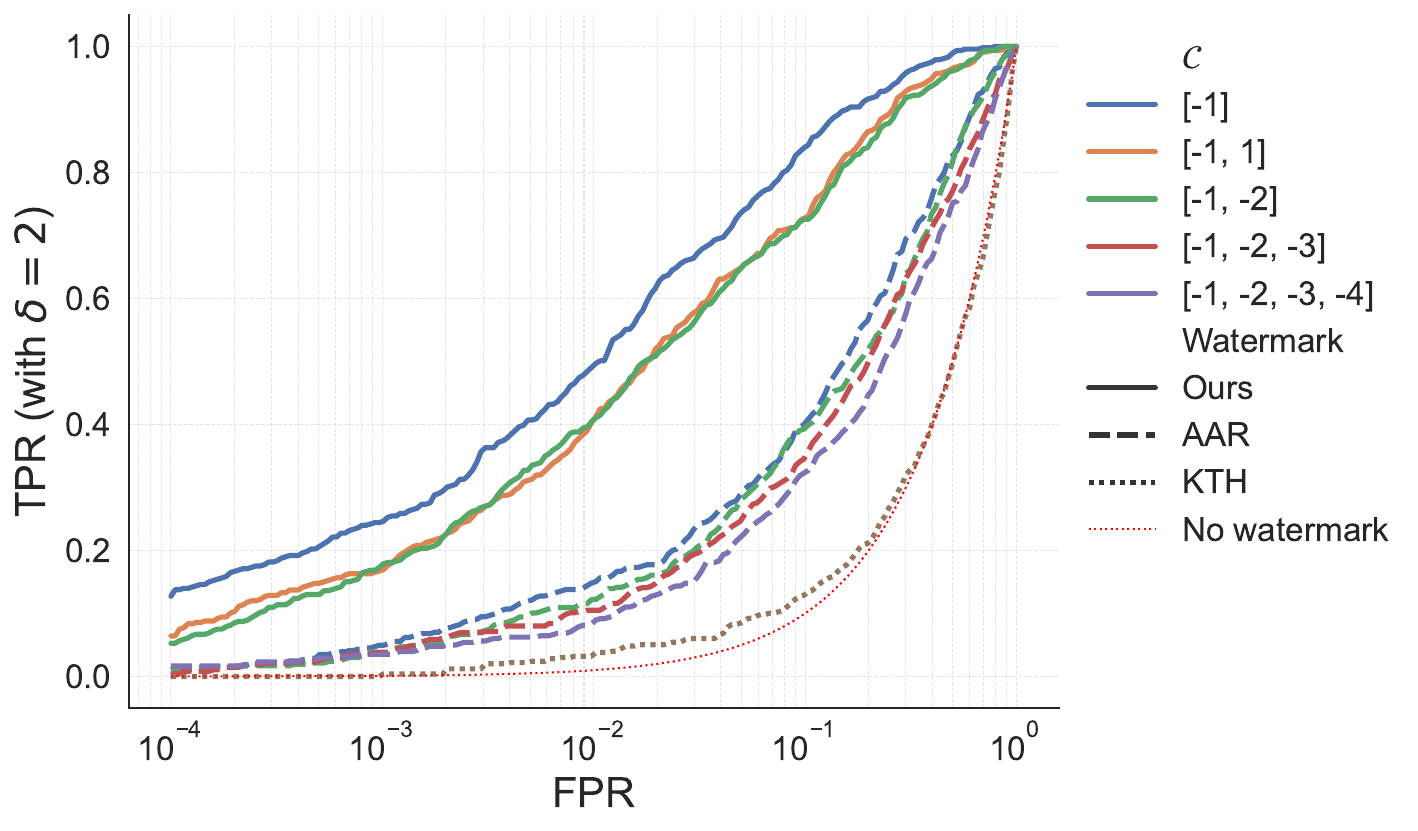}
    \caption{\textbf{Comparison of Watermark Detectability between Our Watermark and Other Baselines} ROC curves of our watermark with $\delta =2$ compared to the AAR and the KTH baselines.
    Responses are generated with \llada at temperature $0.5$, metrics are computed over 600 samples.}
    \label{fig:roc_aar_kth}
    \vspace{-0.15in}
\end{figure}

\paragraph{AAR watermark}
The AAR watermark follows a pipeline similar to that of the Red-Green watermark. 
Let $\omega_t \in \Sigma$ denote the token generated by the LM at step $t$, $l_t$ the next-token logits distribution, and $k$ the number of tokens in the context. 
Using a hash of the context $H(\omega_{t-k:t})$ and $\xi$, each token $v$ in the vocabulary is assigned a pseudo-random score sampled from a Gumbel distribution $g_t(v)$.
The next token is then chosen as $\argmax_{v\in\Sigma} l_t(v) + g_t(v)$.
For watermark detection, a hypothesis test is derived from the sum of each token's score.
Our adaptation of AAR is similar to that of Red-Green; we apply the watermark only if the tokens in the context (\ie the context set $\mathcal{C}$) have already been sampled. 
Otherwise, we sample according to the unwatermarked distribution.

As AAR is distortion-free, \ie it does not modify the model's next-token probability distribution on average over the watermarking key, we compare it to our approach in the low-distortion regime with $\delta = 2$.
In \cref{fig:roc_aar_kth}, we see that our approach outperforms the AAR baseline even in the low-distortion setting (at 1\% FPR, a +30\% TPR with most $\mathcal{C}$)—a result similar to the Red-Green baseline.
This confirms that autoregressive hashing is a key limitation for DLM watermarks.

\paragraph{KTH watermark}
The KTH watermark~\citep{kth} is significantly different from both AAR and Red-Green watermarks, as it does not rely on the standard hashing pipeline.
Instead, the watermark is instantiated with a key $\xi \in [0,1]^{\Sigma \times L}$, where $L \in \mathbb{N}$ is a fixed key length sampled from i.i.d. uniform distributions.
When sampling a token at position $t$, given a next-token probability distribution $p_t$, the next token is chosen as $\argmax_{v\in \Sigma} (\xi_t(v))^{1/p_t(v)}$.
Additionally, to allow for more diversity in the generated text, the key is randomly shifted by a constant at each query.
For detection, each token is assigned a score computed using the private key $\xi$ and an edit-distance cost to account for potential text distortion.
We refer the reader to~\citet{kth} for more information on detection.
This scheme can be straightforwardly adapted to the diffusion setting.
By setting the key length as the length of the diffusion context size and applying the argmax sampling rule at each step of the generation process, we can essentially use the KTH watermark with a DLM.

In \cref{fig:roc_aar_kth}, we see that despite being seemingly fitted for the DLMs setting, KTH watermark is significantly outperformed by our watermark. 
This is because DLMs operate at a too low temperature ($T=0.5$) for the watermark to be strong enough.

\subsection{Comparison with Weights-Based Watermarks}
\label{app:subsec:gaussmark}

In this part, we compare our watermark to an adaptation of an open-source watermark (\ie a watermark embedded directly into the weights of the model): GaussMark~\citep{block2025gaussmark}.
We find that our watermark provides a better detectability/quality trade-off.

\begin{figure}[t]
    \centering
    \includegraphics[width=0.48\textwidth]{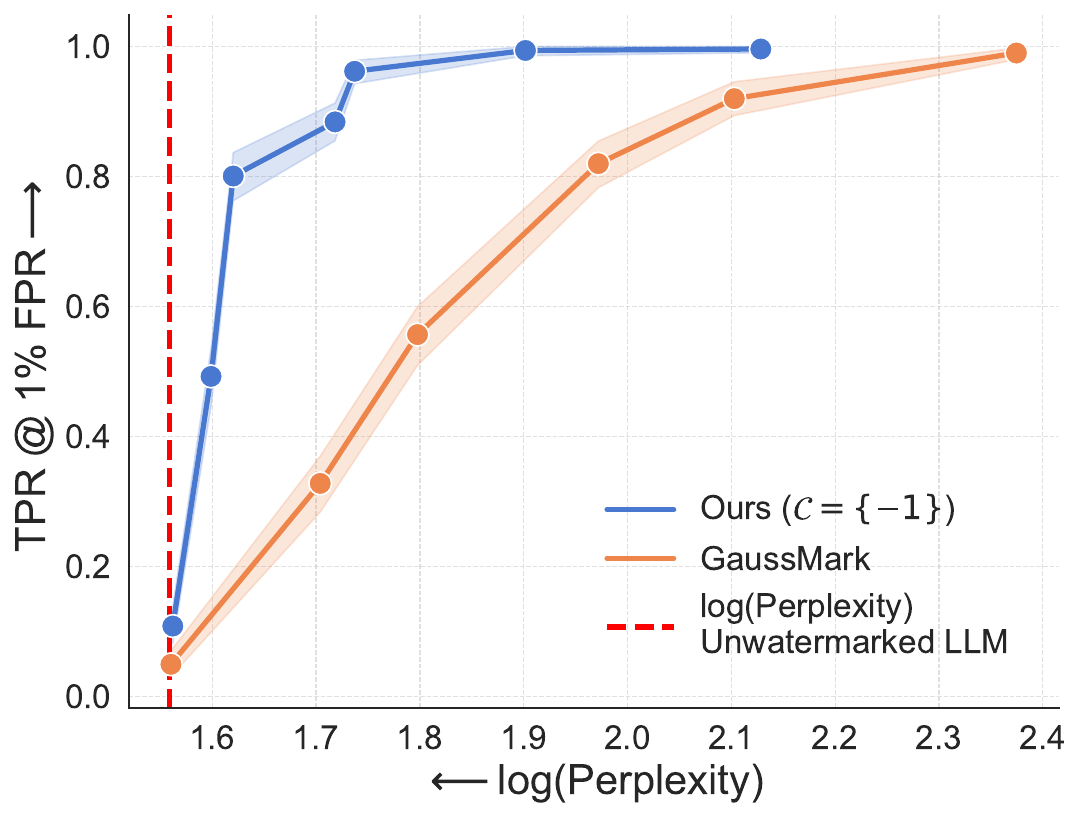}
    \caption{\textbf{Comparison of Watermark Detectability between Our Watermark and GaussMark}
    We compare the trade-off between detectability (TPR@1) and text quality (log PPL) between our approach using SumHash with $\mathcal{C}=\{-1\}$ and various $\delta$ and compare it to GaussMark with varying $\sigma$.
    Responses are generated with \llada at temperature $0.7$ and metrics averaged over $600$ samples.}
    \label{fig:tpr_ppl_gaussmark}

\end{figure}

\paragraph{GaussMark}
The GaussMark watermark works by perturbing a (small) subset of the model weights with Gaussian noise. 
Specifically, given a subset of weights $\theta$ of dimension $d \in \mathbb{N}$ and some Gaussian noise $\epsilon \sim \mathcal{N}(0, \sigma^2 I)$, where $\sigma > 0$ is the standard deviation, the watermarked model is given by $\theta + \epsilon$, with the rest of the weights unchanged. 
For detection, let $\omega \in \Sigma^*$ be the sentence we want to test.
Let $p_\theta(\omega)$ be the factorized probability distribution given by the DLM after a single forward pass on $\omega$.
We adapt the GaussMark detector by computing
\begin{equation}
    \label{eq:gaussmark_detection}
    \psi(\omega) = \frac{\epsilon \cdot \nabla_\theta \log p_\theta(\omega)}{\sigma \|\log p_\theta(\omega)\|_2}.
\end{equation}
Under the null hypothesis, $\psi(\omega) \sim \mathcal{N}(0,1)$.
Therefore, we use a z-test to compute the corresponding p-value.
In the original detector from~\citet{block2025gaussmark} for ARLMs, they use the log-likelihood of the sequence $\omega$. 
With DLMs, computing the log-likelihood is intractable and only an upper bound can be estimated with Monte Carlo sampling~\citep{llada}.
Therefore, in \cref{eq:gaussmark_detection}, for practical concern, we use $\log p_\theta(\omega)$ as a proxy for the likelihood.

\paragraph{Experimental Setup}
In practice, we apply GaussMark on \llada and, because \llada shares the same architecture as the \textsc{Llama3.2-8B} used in~\citet{block2025gaussmark}, we also select as $\theta_d$ the up-projection matrix of the 28th transformer block.
To evaluate the GaussMark detectability/quality trade-off, we run the same experiment as in~\cref{ssec:eval:main}.

\paragraph{Our Approach Outperforms GaussMark}
\cref{fig:tpr_ppl_gaussmark} shows that our approach provides better detectability than GaussMark given the same impact on quality.
Moreover, while the GaussMark overhead at generation time is essentially null, the detection is prohibitively expensive.
Indeed, to compute the test statistic in \cref{eq:gaussmark_detection}, both a full forward pass and a partial backward pass are needed.
Therefore, our watermark also outperforms GaussMark with most practical considerations.

\subsection{Additional Quality Evaluation}
\label{app:subsec:benchmarks}

In this part, we evaluate the influence of our watermark on model utility, measured through LLM benchmark accuracy, and compare it to that of the unwatermarked model.

\begin{table}[t]
    \caption{\textbf{Extended Benchmark Accuracy} We compare the benchmark accuracies between the unwatermarked model, the baselines, and our watermark for \llada and \dream.
    The last column shows the average accuracy, as reported in \cref{tab:main_results}.}
    \label{tab:benchmark_accuracy_main}
    \centering
    \resizebox{\textwidth}{!}{
    \begin{tabular}{lllccccccc}
    \toprule
    Model & Watermark & Parameters & MMLU & HeSw & ARC-C & GPQA & MBPP & GSM8K & AVG \\
    \midrule
    \multirow{7}{*}{\llada} & Unwatermarked & & 65.01 & 75.02 & 84.41 & 31.31 & 23.80 & 76.95 & 59.42 \\
    \cmidrule{2-10}
    & \multirow{3}{*}{Baselines} &  $\mathcal{C}=\{-1\}, \delta=4$ & 57.43 & 67.76 & 83.39 & 29.80 & 20.60 & 74.22 & 55.53 \\
    &&  $\mathcal{C}=\{-1,1\}, \delta=5$ & 49.52 & 67.13 & 87.46 & 29.80 & 24.80 & 75.59 & 55.72 \\
    && $\mathcal{C}=\{-2,-1\}, \delta=5$ & 64.30 & 74.90 & 84.41 & 28.28 & 24.00 & 75.13 & 58.50 \\
    \cmidrule{2-10}
    & \multirow{3}{*}{Ours} &  $\mathcal{C}=\{-1\}, \delta=4$ & 55.13 & 72.14 & 83.05 & 29.29 & 23.20 & 73.01 & 55.97 \\
    &&  $\mathcal{C}=\{-1,1\}, \delta=5$ & 52.57 & 68.52 & 82.71 & 30.30 & 15.80 & 72.02 & 53.65 \\
    && $\mathcal{C}=\{-2,-1\}, \delta=5$ & 55.32 & 74.89 & 84.41 & 23.23 & 16.60 & 72.25 & 54.45 \\
    \midrule
    \multirow{7}{*}{\dream} & Unwatermarked & & 69.83 & 44.79 & 88.99 & 31.25 & 29.40 & 38.06 & 50.39 \\
    \cmidrule{2-10}
    & \multirow{3}{*}{Baselines} &  $\mathcal{C}=\{-1\}, \delta=4$ & 54.66 & 44.24 & 64.33 & 29.91 & 8.00 & 13.50 & 35.77 \\
    &&  $\mathcal{C}=\{-1,1\}, \delta=5$ & 60.97 & 43.59 & 61.09 & 31.03 & 8.80 & 7.81 & 35.55 \\
    && $\mathcal{C}=\{-2,-1\}, \delta=5$ &  48.92 & 43.74 & 59.47 & 27.23 & 9.40 & 8.42 & 32.86 \\
    \cmidrule{2-10}
    & \multirow{3}{*}{Ours} &  $\mathcal{C}=\{-1\}, \delta=4$ & 70.19 & 44.87 & 88.99 & 31.25 & 28.80 & 36.77 & 50.15 \\
    &&  $\mathcal{C}=\{-1,1\}, \delta=5$ & 69.82 & 44.69 & 86.86 & 31.25 & 26.00 & 23.65 & 47.05 \\
    && $\mathcal{C}=\{-2,-1\}, \delta=5$ & 69.78 & 44.71 & 88.99 & 31.25 & 26.20 & 24.56 & 47.58 \\
    \bottomrule
    \end{tabular}}
\end{table}

\begin{table}[t]
    \caption{\textbf{Benchmark Accuracy with Entropy Remasking} We compare the benchmark accuracy on \llada for our watermark using the recommended hyperparameters (achieving a TPR@1 of $1.0$ with the entropy remasking strategy) to that of the unwatermarked model.}
    \label{tab:benchmark_accuracy}
    \centering
    \resizebox{0.85\textwidth}{!}{
    \begin{tabular}{lccccccc}
    \toprule
    Watermark & MMLU & HeSw & ARC-C & GPQA & MBPP & GSM8K \\
    \midrule
    Unwatermarked & 65.26 & 75.28 & 88.81 & 30.81 & 39.60 & 80.14 \\
    \cmidrule{2-7}
    $\mathcal{C}=\{-1\}, \delta=3$ & 55.82 & 72.61 & 88.14 & 29.29 & 35.40 & 75.36 \\
    \bottomrule
    \end{tabular}}
\end{table}

\paragraph{Experimental Setup}
We evaluate the benchmark accuracy of \llada{} and \dream{}, without a watermark, with the baselines, or with our watermark.
For both \llada{} and \dream{}, we use the same watermark hyperparameters as in \cref{tab:main_results}.
Additionally, for \llada{}, we also evaluate the benchmark accuracy with the entropy-remasking strategy to follow the recommended evaluation setting from \citet{llada}.
For entropy-remasking, the watermark hyperparameters are chosen such that TPR@1 on 300 tokens is around $1.0$ (both $\mathcal{C}=\{-1\}, \delta=3$).
For the benchmark, we follow the setup from~\citet{llada} and \citet{dream}. With \llada{}, we use the OpenCompass library~\citep{2023opencompass}, and with \dream{} the Evaluation Harness library~\citep{eval-harness}.
We evaluate the models on MMLU~\citep{hendryckstest2021}, HellaSwag~\citep{zellers2019hellaswag} (HeSw), ARC-C~\citep{Clark2018ThinkYH}, GPQA~\citep{rein2024gpqa}, MBPP~\citep{austin2021program}, and GSM8K~\citep{cobbe2021gsm8k}.
Lastly, as per~\citet{llada} and \citep{dream}, we use low temperature ($T=0.1$) to compute the benchmark accuracy.
In particular, this is a regime in which evaluating a watermark is difficult because the probability distributions are highly spiked and hence, to carry a signal, the watermark is highly distortionary.

\paragraph{Benchmark With Random Remasking}
\cref{tab:benchmark_accuracy_main} shows that the impact of the watermark on benchmark accuracy varies greatly between models and benchmarks, with no clear tendencies between models and watermark hyperparameters.
On average, we see that although the watermark induces a drop in accuracy, it is relatively small, and most of the time our approach outperforms the baseline.
For \dream{} in particular, we find that the baseline's performance catastrophically decreases on ARC-C, MBPP and GSM8K, while this is not the case for our watermark.

\paragraph{Benchmark With Entropy Remasking}
\cref{tab:benchmark_accuracy} shows that, similarly to the random remasking case, the watermark induces a small utility drop on some benchmarks.
We find that, as in \cref{tab:benchmark_accuracy_main}, the benchmarks with the highest accuracy drops are MMLU, MBPP, and GSM8K.

\section{Watermarking Infilling Tasks}
\label{app:wm_infilling}

In this section, we study the effectiveness of our watermark for infilling tasks, rather than instruction tasks as in \cref{sec:eval}. 

\paragraph{DreamOn Model}
For this specific task, we consider the \dreamon~\citep{dreamon} model. 
\dreamon is an extension of the \dream model with a novel discrete diffusion algorithm that allows for variable-length generation, an important property for infilling tasks.
The vocabulary is augmented with two extra tokens, \emph{expand} and \emph{delete}.
During the diffusion process, if the expand token is sampled, it is replaced in the next iterations by two mask tokens.
If the delete token is sampled, it is removed from the sequence in the next iterations.

\begin{figure}[t]
    \centering
    \includegraphics[width=0.48\textwidth]{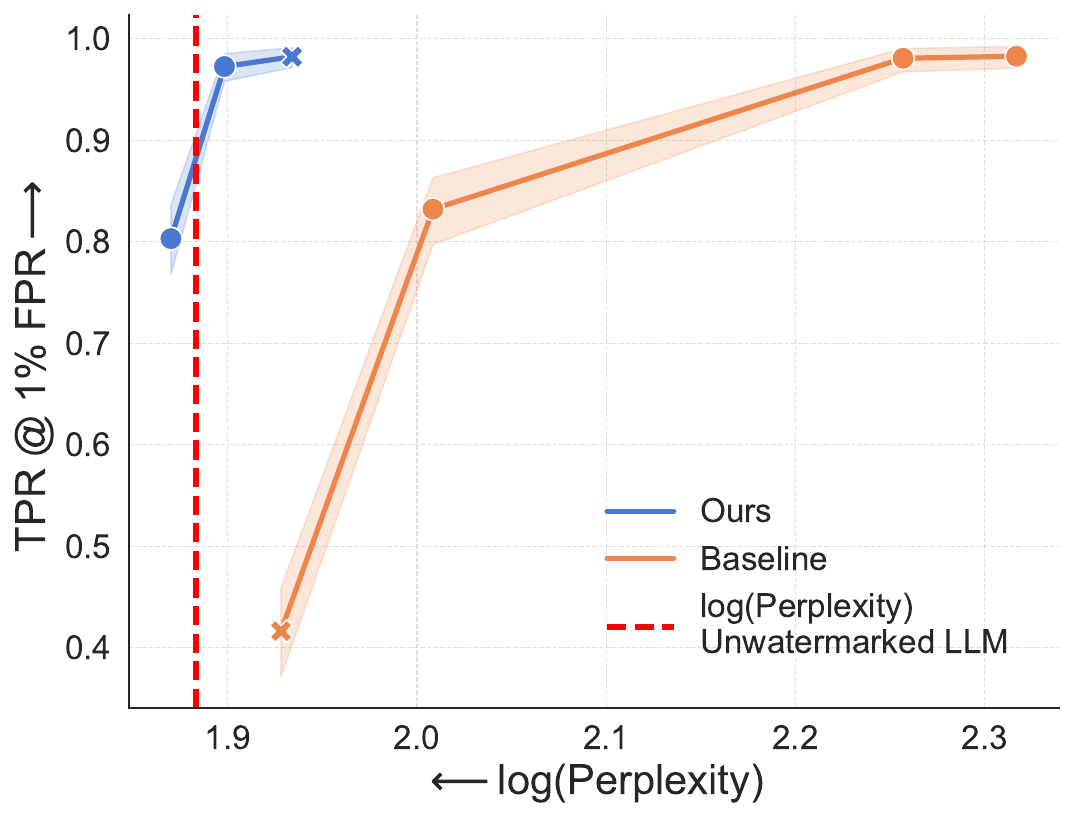}
    \includegraphics[width=0.48\textwidth]{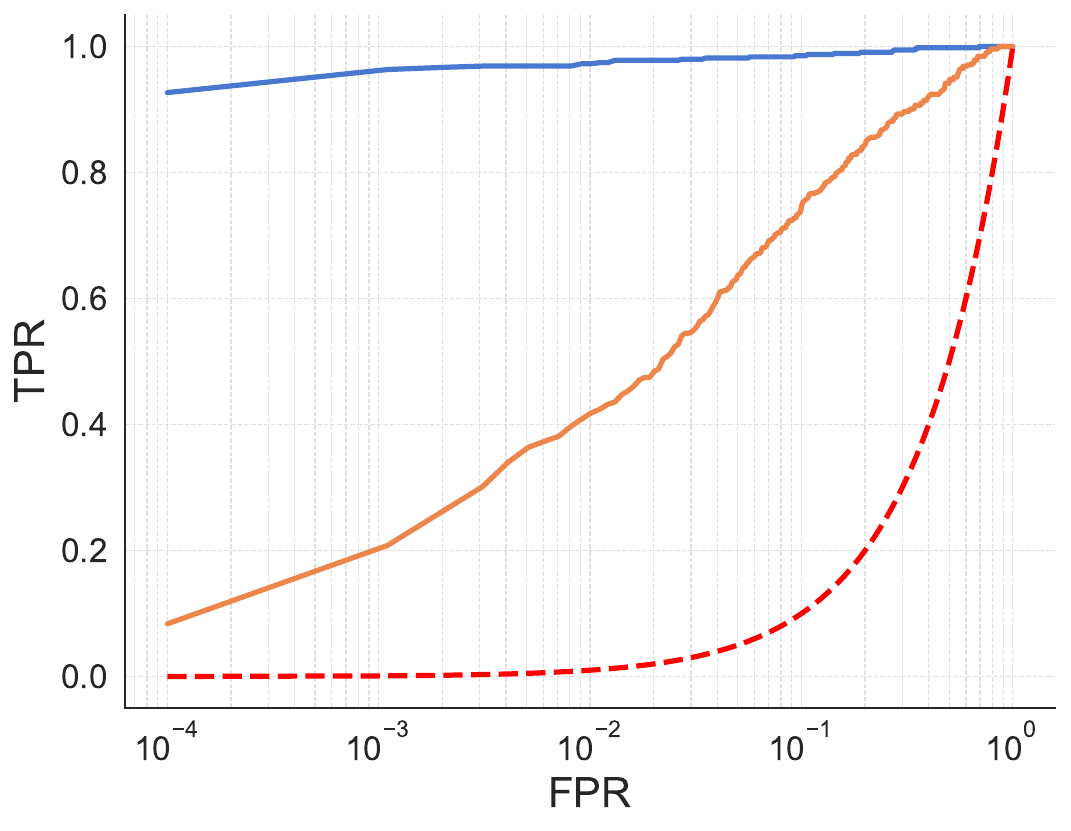}
    \caption{\textbf{Detection Performance on Infilling Tasks} (\emph{Left}) We compare the trade-off between watermark detectability (TPR@1) and text quality (log PPL) of our approach and the baseline for different values of the watermark strength parameter $\delta$ and sequences of, on average, 205 tokens.
    (\emph{Right}) ROC curves of our watermark and the baseline at $\log(PPL) \approx 1.94$.
    Responses are generated with \dreamon at temperature $0.8$, metrics are computed over 600 samples and we use the previous token as context (\ie $\mathcal{C} = \{-1\}$).
    The crosses on the left figure correspond to the same watermark hyperparameters as the right figure.
    }
    \label{fig:tpr_ppl_dreamon}
\end{figure}

\paragraph{Experimental Setup}
To evaluate the watermark strength, we use the realnewslike split of the \textsc{C4} dataset~\citep{c4}, where for each entry we keep a prefix of $100$ tokens, mask the next $200$ tokens, and leave a suffix of $100$ tokens.
We then generate infilling with our model for sequences between $100$ and $300$ tokens long.
For each generation, we run the watermark detection and compute the corresponding p-value.
In total, we generate $600$ sequences.
To measure the impact of the watermark on model quality, as in \cref{sec:eval}, we measure the text perplexity using \pplmodel.

For the watermark, we use the same hyperparameters as in \cref{sec:eval}: our green list $G$ is generated by sampling i.i.d. Bernoulli random variables that are green with probability $\gamma = 0.25$, we use the SumHash hashing scheme, $\delta$-parameterization, a single iteration for fixed-point convergence, and a top-k of $25$.
For the generation setting, we set the temperature to $T=0.8$ (unlike \cref{sec:eval}, where the temperature is set to $T=0.5$), use the entropy-based remasking strategy (see \cref{app:additional_experiments}), and allow the model to extend the generated sequences up to $300$ tokens.

\paragraph{Reliable Infilling Watermarking}
\cref{fig:tpr_ppl_dreamon} shows the strength of our watermark and the baseline with respect to the text quality.
We see that, unlike the ARLM Red-Green watermark, our watermark achieves a strong watermark with virtually no impact on perplexity.
These results mean that our watermark can be directly applied to infilling tasks, and even works despite variations in the discrete diffusion process.
This is not true for the baseline that particularly struggles in this setting.

\section{Link to Red-Green ARLM Watermarks}
\label{app:link_to_RG}

In this part, we show that Red-Green Watermarks from~\citet{kgw} are actually a solution to the optimization problem derived in \cref{ssec:method_math}, but restricted to the ARLM case.

\paragraph{Optimization Problem for ARLM}
We keep the same notation as in \cref{ssec:method_math}, but adjust it for the ARLM case.
Let $\omega \in \Sigma^*$ be a sequence of tokens, $H: \Sigma^* \rightarrow \mathcal{H}$ the hash function, and $G\in \{0,1\}^{\mathcal{H} \times \Sigma}$ the green list matrix. The green ratio function $\hat{\gamma}$ is defined as
\begin{equation}
    \hat{\gamma}(\omega) = \frac{1}{|\omega|} \sum_{t=1}^{|\omega|} G_{H(\omega_{<t}), \omega_t} := \frac{1}{|\omega|} \sum_{t=1}^{|\omega|} \hat{\gamma}_t(\omega_{\le t}).
\end{equation}
Let $t \in \mathbb{N}$.
Given $\omega_{<t}$, the ARLM returns a next-token probability distribution $p_t\in \Delta(\Sigma)$.
The goal of the watermarking algorithm is to distort the distribution $p_t$ into a distribution $q_t$ that maximizes the expected green ratio, which we formalize as
\begin{align}
\text{maximize}_{q_t\in\Delta(\Sigma)} &\; \mathbb{E}_{\Omega \sim q_t}[\hat{\gamma}_t(\omega_{<t}\circ\Omega)], \label{eq:watermark_optim_problem_ARLM}\\
\text{subject to} &\;
\begin{alignedat}[t]{0}
\text{KL}(q_t,p_t) \le \varepsilon
\end{alignedat}\notag
\end{align} 
where $\circ$ is the concatenation operator.
Given that $H(\omega_{<t})$ is a constant, unrolling the expectation is significantly easier than in \cref{ssec:method_math}.
We simply get
\begin{equation} \label{eq:energy_function_ARLM}
    \forall q \in \Delta(\Sigma),  \mathbb{E}_{\Omega \sim q}[\hat{\gamma}_t(\omega_{<t}\circ\Omega)] = \sum_{u \in \Sigma} G_{H(\omega_{<t}),u}  q(u) =: J_t(q).
\end{equation} 

\paragraph{Link to Red-Green ARLM Watermarks}
Similarly to \cref{theorem:optimal_watermark_solution}, there exists a unique $\delta > 0$ such that the optimal solution is given by
\begin{equation}
    \exists c \in \mathbb{R}, \forall u \in \Sigma, \log q_t^*(u) = \log p_t(u) + \delta G(H(\omega_{<t}), u) + c.
\end{equation}
This is exactly the formulation of the Red-Green watermark from~\citet{kgw}, where only the logits of the green tokens are boosted by a constant $\delta$.

\section{Experimenting with Different Green List Distributions}
\label{app:red_green_wm}

In this section, we present the technical details of the Red-Green watermark scheme.
We focus on how the choice of a correlated green list—that is, fixing its size to exactly $\gamma |\Sigma|$—affects detection. 
Prior work enforced this constraint by using correlated Bernoulli variables, whereas we instead draw the green list with i.i.d. Bernoulli variables. 
This change, originally introduced for technical reasons, enables an exact test for watermark detection, while previous approaches rely only on approximations.

\begin{figure}[t]
    \centering
    \includegraphics[width=0.48\textwidth]{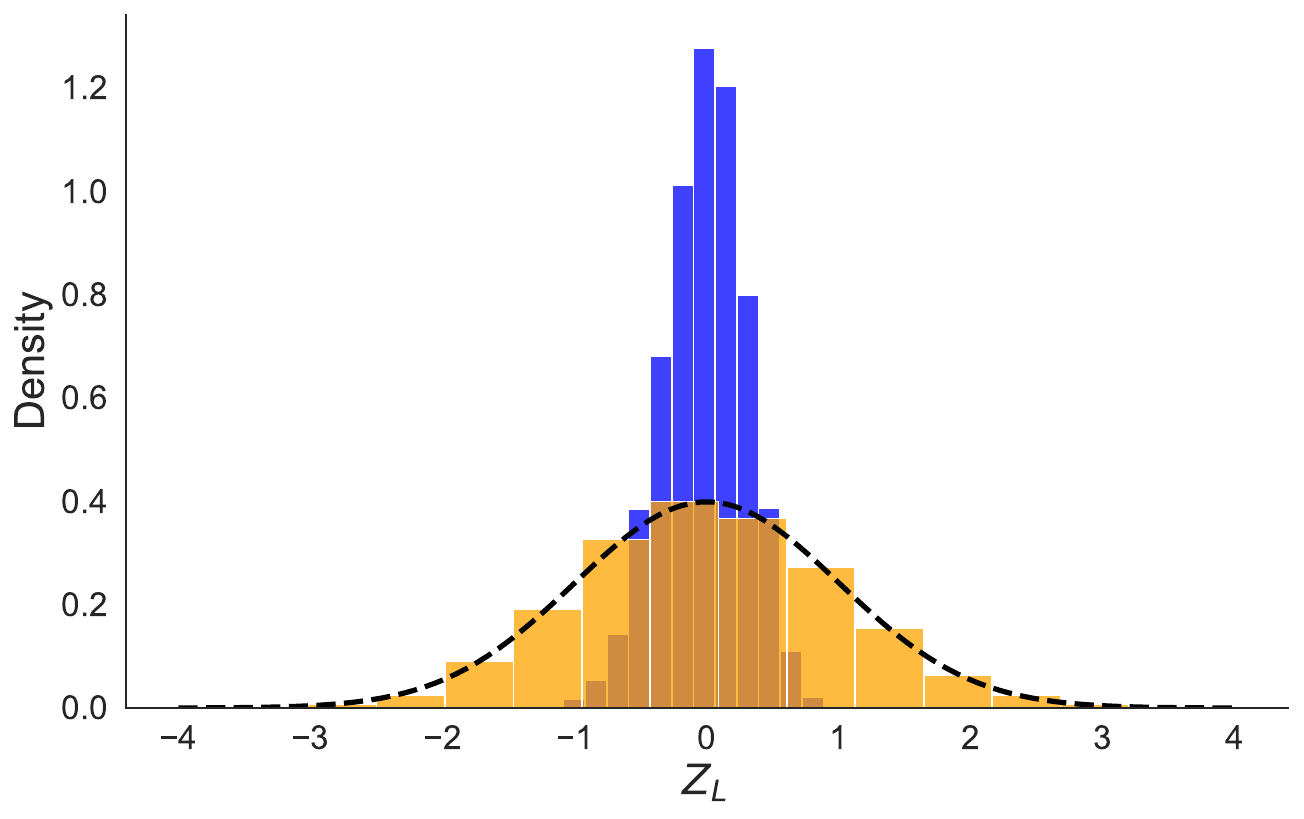}
    \includegraphics[width=0.48\textwidth]{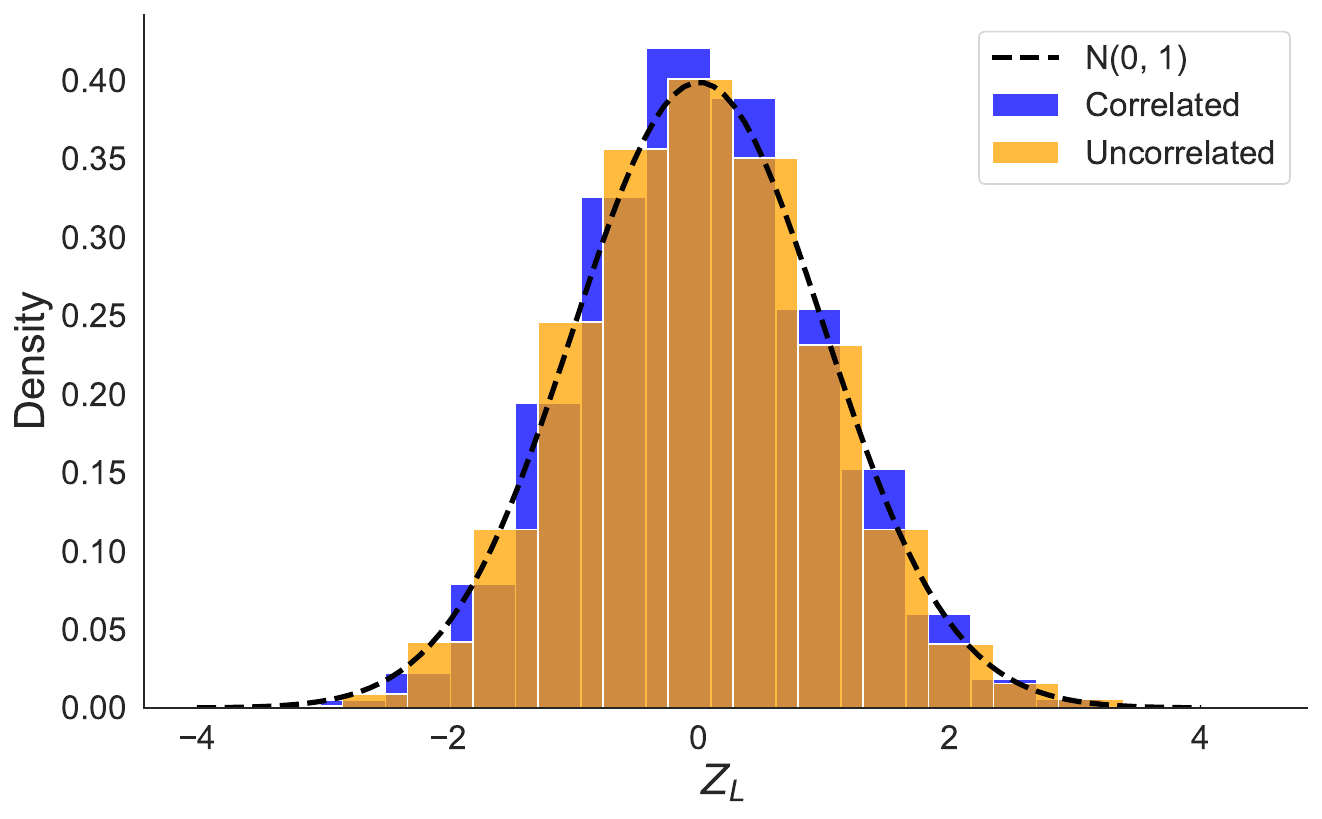}
    \caption{\textbf{Z-score Simulation} Simulation of $Z_L$ using a uniform token sequence in $\Sigma$, with either the correlated or independent green list. The black dotted line is the standard normal pdf. (\emph{Left}) We use $|\Sigma| = 15$ and $L=200$. (\emph{Right}) We use $|\Sigma| = 50$ and $L=200$. }
    \label{fig:correlated_greenlist}
    \vspace{-0.15in}
\end{figure}

\paragraph{Green List Distribution} 
In the original work by \citet{kgw}, they suggest that, given a fixed context, the Red-Green list should partition the vocabulary $\Sigma$ with exactly $\gamma |\Sigma|$ green tokens and the rest red tokens.
Let $t \in \Sigma$, $s\in \mathcal{H}$ be a hash, and $G_{s,t}$ the random variable corresponding to the color of token $t$ with the context hash being $s$.
This means that ($\perp$ means independent),
\begin{align}
    &\forall s \in \mathcal{H}, \forall t \in \Sigma, G_{s,t} \sim \mathcal{B}(\gamma),\\
    &\forall t \neq t' \in \Sigma, \text{Cov}(G_{s,t},G_{s,t'}) = - \frac{\gamma ( 1- \gamma )}{|\Sigma| - 1},\\
    &\forall s \neq s' \in \mathcal{H}, \forall t, t' \in \Sigma, G_{s,t} \perp G_{s',t'}.
\end{align}
On the contrary, in this work, we instead use i.i.d. Bernoulli for the green list, so we simply have
\begin{align}
    &\forall s \in \mathcal{H}, \forall t \in \Sigma, G_{s,t} \sim \mathcal{B}(\gamma)\\
    &\forall s,s' \in \mathcal{H}, \forall t \neq t' \in \Sigma, G_{s,t} \perp G_{s',t'}\\
    &\forall s \neq s' \in \mathcal{H}, \forall t \in \Sigma, G_{s,t} \perp G_{s',t}.
\end{align}

\paragraph{Z-score Detector}
In \citet{kgw}, given a sequence of tokens $t_1,\ldots,t_L$ and a corresponding hash sequence $s_1,\ldots,s_L$ such that there is no repetition of tuples $(t_i,s_i)$, they suggest using as a detector
\begin{equation}
    \label{eq:zscore_kgw}
    Z _L = \frac{1}{\sqrt{\gamma(1-\gamma) L}} \left(\sum_{i=1}^{L} G_{s_i,t_i} - \gamma L \right),
\end{equation}
and assume asymptotic standard normality, \ie $Z_L \rightarrow^{\mathcal{D}} \mathcal{N}(0,1)$.
However, because of potential hash repetition, the variance normalization may be inaccurate.
Indeed, let, for all $s \in \mathcal{H}$, $N_s := \{k\in [1,\ldots,L] : s_k = s\}$ which corresponds to the indices $k$ where the hash is equal to $s$. Hence, $|N_s|$ counts the repetitions of the hash $s$ in the sequence.
Then, we have
\begin{equation}
    \label{eq:variance_corelated}
    \sigma^2 := \text{Var}\left(\sum_{i=1}^{L} G_{s_i,t_i}\right) = \gamma(1-\gamma) L \left[ 1 - \frac{1}{L(|\Sigma|-1)} \sum_{s\in \mathcal{H}}(|N_s|^2 - |N_s|) \right].
\end{equation}
This means that a corrected Z-score would be
\begin{equation}
    Z _L = \frac{1}{\sigma} \left(\sum_{i=1}^{L} G_{s_i,t_i} - \gamma L \right).
\end{equation}
In our work with an i.i.d. Bernoulli green list, we can use for detection \cref{eq:zscore_kgw} directly without modifying the variance.

In \cref{fig:correlated_greenlist}, we simulate $Z_L$ with both the correlated and i.i.d. green lists using $\mathcal{H} = \Sigma$, with $|\Sigma| \in \{15,20\}$ and $L=100$.
When $\Sigma$ is small and the sequence is long (left), we notice that in the correlated case \cref{eq:zscore_kgw} deviates from the standard normal distribution, whereas this is not the case for the independent one.
However, when $\Sigma$ is large enough compared to the sequence length (right), there is almost no deviation from the normal distribution—which is expected given the variance formulation in \cref{eq:variance_corelated}.
Therefore, in most practical cases, the slight deviation from standard normality has almost no impact on the reported p-values by the watermark detection as we operate in the $\Sigma >> L$ regime.
But this derivation and these experiments justify our claim in \cref{sec:eval} that using i.i.d. Bernoulli with the suggested detection is more principled.

\begin{figure}[t]
    \centering
    \includegraphics[width=0.48\textwidth]{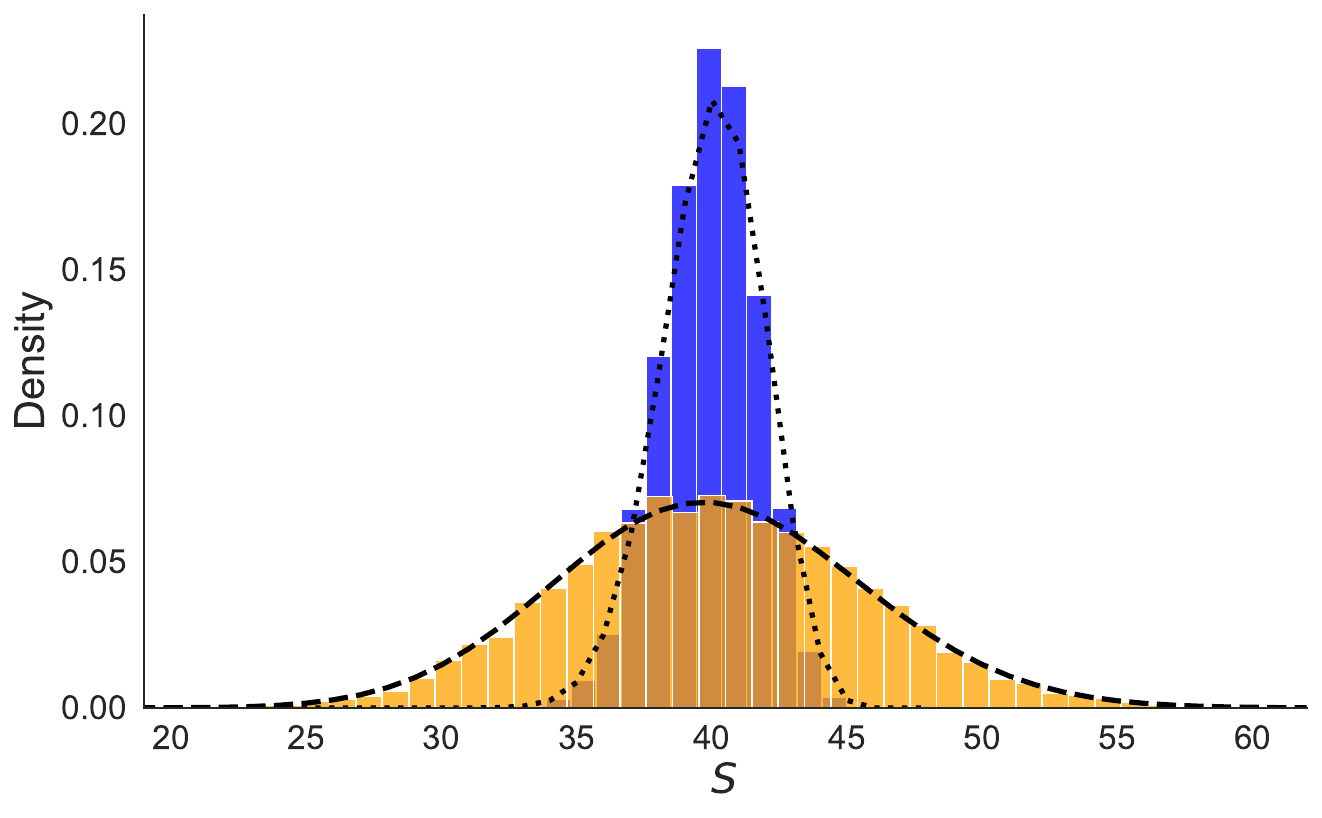}
    \includegraphics[width=0.48\textwidth]{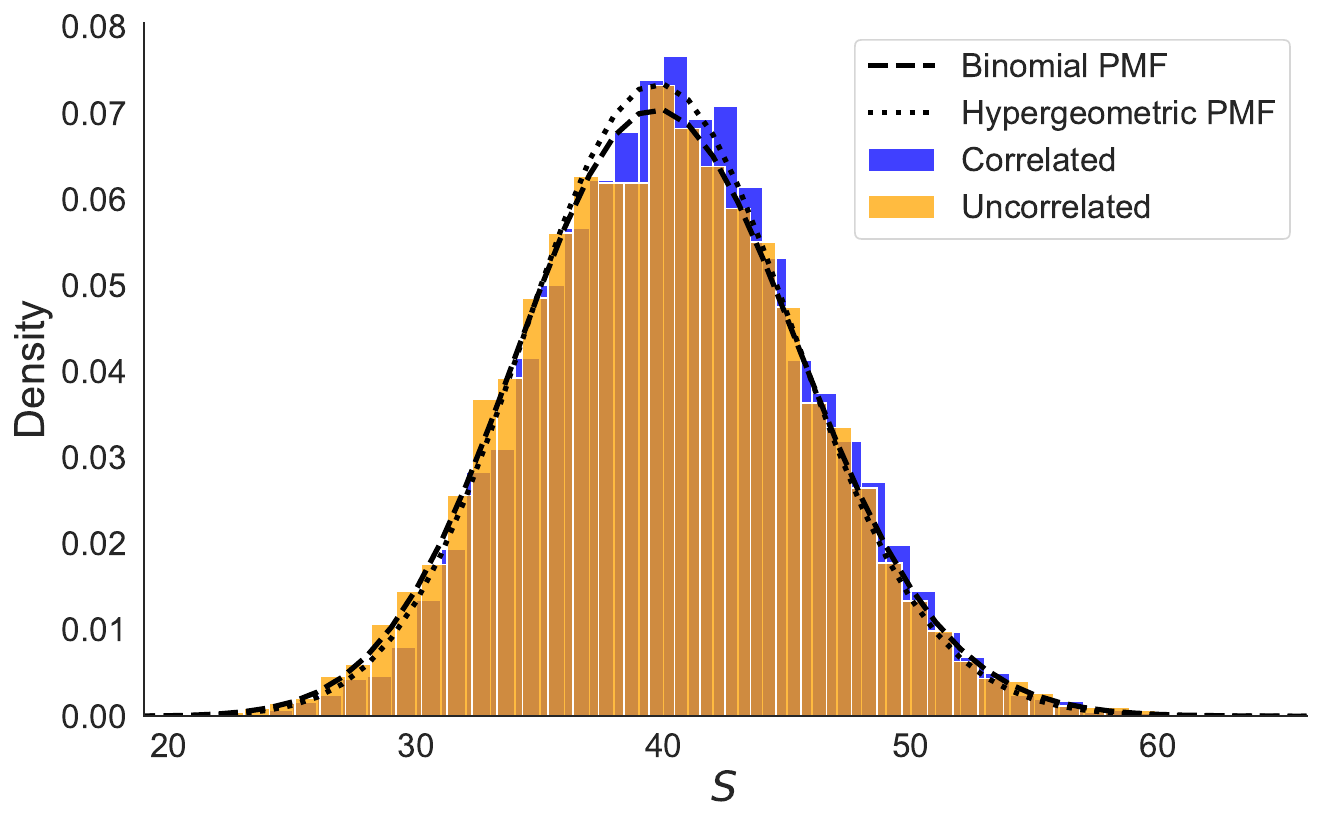}
    \caption{\textbf{Binomial Score Simulation} Simulation of $S$ using a uniform token sequence in $\Sigma$, with either the correlated or independent green list. The black dotted line is the standard normal pdf. (\emph{Left}) We use $|\Sigma| = 15$ and $L=200$. (\emph{Right}) We use $|\Sigma| = 50$ and $L=200$. }
    \label{fig:correlated_greenlist_discrete}
    \vspace{-0.15in}
\end{figure}

\paragraph{Binomial Detector}
As explained in~\citet{three_bricks}, using a z-score for detection is imprecise and leads to higher empirical FPR than expected.
Therefore, in this work, we use a binomial test instead.
Yet, similarly as with the z-score detection, the binomial test is rigorously valid only when using i.i.d. Bernoulli green list and not the correlated one.

Let $t_1,\ldots,t_L$ be a sequence of tokens and $s_1,\ldots,s_L$ the corresponding sequence of hashes such that there is no repetition of tuples $(t_i,s_i)$.
The binomial detector is based on
\begin{equation}
    S = \sum_{i=1}^{L} G_{s_i,t_i}.
\end{equation}
In the i.i.d. case, we know that for all $i \neq j$, $G_{s_i,t_i} \perp G_{s_j,t_j}$ and $G_{s_i,t_i} \sim \mathcal{B}(\gamma)$. 
Hence, $S$ is the sum of $L$ i.i.d. Bernoulli random variables: it follows exactly a binomial distribution of parameter $(L, \gamma)$.

In the correlated case, we introduce for all $s \in \mathcal{H}$, $N_s := \{k\in [1,\ldots,L] : s_k = s\}$ which corresponds to the indices $k$ where the hash is equal to $s$.
Then, we have
\begin{equation}
    S = \sum_{s\in\mathcal{H}} \left( \sum_{i \in N_s} G_{s, t_i} \right) := \sum_{s\in\mathcal{H}} X_{N_s}.
\end{equation}
We have that $X_{N_s} | N_s$ follows a hypergeometric distribution of parameter $(|\Sigma|, \gamma |\Sigma|, N_s)$. 
This means that the distribution of $S$ depends on the distribution of $N_s$ which itself depends on the distribution of the LLM.
Hence, we can't provide a closed-form distribution for $S$.
Yet, if we add the assumption that the tuples $(t_i,s_i)$ are sampled uniformly without replacement, we show that $S$ follows a hypergeometric distribution of parameter $(|\mathcal{H}||\Sigma|, \gamma |\mathcal{H}||\Sigma|, L)$. 

\begin{proof}
Let $\mathcal{I} = \{i \in \{0,\ldots,L\}^{|\mathcal{H}|}, \sum_{s \in \mathcal{H}} i_s = L\}$.
First, we have
\begin{equation}
    \forall i \in \mathcal{I}, \mathbb{P}[\forall s\in\mathcal{H},N_s = i_s] = \frac{\prod_{s=1}^{|\mathcal{H}|} \binom{|\Sigma|}{i_s}}{\binom{|\mathcal{H}||\Sigma|}{L}}.
\end{equation}
We now develop $S$ by conditioning on $N_s$, thus we have for all $k \in \{0,\ldots,L\}$,
\begin{align}
    \mathbb{P}[S=k] &= \sum_{i \in \mathcal{I}} \mathbb{P}[\forall s, N_s = i_s] \sum_{\substack{k_1+\ldots+k_{|\mathcal{H}|} = k\\0\le k_s \le i_s}} \prod_{s\in\mathcal{H}} \mathbb{P}[X_{N_s} = k_s | N_s = i_s] \\
    &= \frac{1}{\binom{|\mathcal{H}||\Sigma|}{L}} \sum_{i \in\mathcal{I}} \sum_{\substack{k_1+\ldots+k_{|\mathcal{H}|} = k\\ k_s \ge 0}} \prod_{s \in \mathcal{H}} \binom{\gamma|\Sigma|}{k_s} \binom{|\Sigma| - \gamma|\Sigma|}{i_s - k_s}.
\end{align}
Using Vandermonde's identity we have that
\begin{align}
    &\sum_{i \in \mathcal{I}}  \prod_{s \in \mathcal{H}} \binom{|\Sigma| - \gamma|\Sigma|}{i_s - k_s} = \binom{|\mathcal{H}||\Sigma|(1 - \gamma)}{L - k}, \\
    &\sum_{\substack{k_1+\ldots+k_{|\mathcal{H}|} = k\\ k_s \ge 0}} \prod_{s \in \mathcal{H}} \binom{\gamma |\Sigma|}{k_s} = \binom{\gamma |\mathcal{H}||\Sigma|}{k}.
\end{align}
Hence if we combine the last three equations we have that
\begin{equation}
    \mathbb{P}[S=k] = \frac{1}{\binom{|\mathcal{H}||\Sigma|}{L}} \binom{|\mathcal{H}||\Sigma|(1 - \gamma)}{L - k} \binom{\gamma |\mathcal{H}||\Sigma|}{k},
\end{equation}
which is exactly the pmf of the hypergeometric distribution of parameter $(|\mathcal{H}||\Sigma|, \gamma |\mathcal{H}||\Sigma|, L)$.
\end{proof}

In \cref{fig:correlated_greenlist_discrete}, we simulate $S$ with both the correlated and i.i.d. green lists using $\mathcal{H} = \Sigma$, with $|\Sigma| \in \{15,20\}$ and $L=100$.
We see, in the left figure, that when $\Sigma$ is small (\ie $|\mathcal{H}||\Sigma| \approx L$) we are in a regime where the hypergeometric and binomial distributions are very different, and the two histograms differ significantly.
However, in the right figure, where $|\mathcal{H}||\Sigma| >> L$, we see that the hypergeometric and binomial distributions are similar, and so are the histograms.
Moreover, in both figures, we see that the empirical histograms match the corresponding theoretical distributions.
This experimental validation supports our derivation and the rigor of using a binomial test to compute watermark detection p-values when using an i.i.d. Bernoulli green list.

\section{Additional Implementation Details}

\subsection{Our Hash Implementation}
\label{app:hashes}

\begin{algorithm}[H]
    \caption{$\varepsilon$-parameterization bisection}
    \label{alg:bisection}
    \begin{algorithmic}[1]
    \Require DLM probabilities $p_t$, $\alpha_t$ derivative of the energy function, target $\varepsilon$.
    \State $\delta_{\text{low}} \gets 0$, $\delta_{\text{high}} \gets 2 \sqrt{\frac{2 \varepsilon}{\text{Var}_{p_t}(\alpha)}}$ \Comment{Bisection Bounds}
    \State $\delta_{\text{high}} \gets \min(\delta_{\text{high}}, 10^{4})$
    \For{$i=1$ to $16$} \Comment{16 iterations means a resolution of $0.15$}
        \State $\delta_{\text{mid}} \gets \tfrac{1}{2}(\delta_{\text{low}} + \delta_{\text{high}})$
        \State $q \gets p_t \exp(\alpha_t \delta_{\text{mid}})$
        \State $k \gets \sum q \log \tfrac{q}{p_t}$ \Comment{$\mathrm{KL}(q\|p_t)$}
        \State \textbf{if} $(k > \varepsilon)$ \textbf{then} $\delta_{\text{high}} \gets \delta_{\text{mid}}$ \textbf{else} $\delta_{\text{low}} \gets \delta_{\text{mid}}$
    \EndFor
    \State \Return $\delta_{\text{low}}$
    \end{algorithmic}
\end{algorithm}

In this part, we show that our instantiations of SumHash and MinHash are functionally equivalent to those of~\citet{kgw2}.

\paragraph{SumHash}
Our SumHash implementation corresponds to the Additive implementation from~\citet{kgw2}.
Indeed, given a sequence of tokens $\omega\in\Sigma^*$, Additive is defined by
\begin{equation}
    H^{Additive}(\omega)_t = P\left(s \times \sum_{i\in\mathcal{C}} \omega_{t+i}\right),
\end{equation}
where $s \in \mathbb{N}$ is a seed and $P: \mathbb{N} \rightarrow \mathcal{H}$ a PRF.
Hence, $H^{Additive}$ maps every unique sum of token IDs from the context to a unique value, which is then used to sample the green list.
In our case, $H^{SumHash}$ also maps every unique sum of token IDs from the context to a unique value, which is used to select a row, sampled randomly, from the green list.
In the end, the results are similar: every unique sum of token IDs gets associated with a random green list.

\paragraph{MinHash}
For MinHash, the reasoning is similar: both the approach from~\citet{kgw2} and our approach associate the minimum token id of the context (up to a permutation) with a unique green list. 
The main difference is that we represent our green list as a "pre-generated" matrix and thus do not need to further use a PRF function.

\subsection{$\varepsilon$-Parameterization Implementation}

In this part we explain the $\varepsilon$-parameterization implementation described in \cref{alg:bisection}.

For every token position $t$, given the DLM probabilities $p_t$, the derivative of the energy function $\alpha_t$, and the target $\varepsilon$, the goal is to find the unique $\delta^*$ such that $KL(q_t(\delta)||p_t) = \varepsilon := KL(\delta)$.
We find in \cref{theorem:bisection} that $KL(\delta)$ is strictly increasing in $\delta$.
Hence, for the bisection lower bound we use $\delta_{\text{low}} = 0$ (line 1), and for the upper bound, using the approximation from \cref{theorem:bisection}, we use $\delta_{\text{high}} = 2 \sqrt{\frac{2 \varepsilon}{\text{Var}_{p_t}(\alpha_t)}}$ (line 1).
For practical purposes, we cap the upper bound at $10^4$ (line 2).
This means that in some cases we might reach a lower KL-divergence than $\varepsilon$ but will never exceed it.
We then compute the midpoint (line 4) and compute the KL-divergence at the midpoint (line 6).
If it is smaller than $\varepsilon$, it means that $\delta^*$ is higher than the midpoint and we increase the lower bound; otherwise, we decrease the upper bound (line 7).
We return the lower bound (line 8) at the end of the iteration to never exceed the KL-threshold.
In practice, we find that $16$ iterations are enough to systematically obtain a KL-divergence nearly equal to $\varepsilon$.

The complexity of each iteration of \cref{alg:bisection} is $O(|\Sigma|)$ (for computing the KL divergence). 
If using top-$k$, it reduces to $O(1)$. 
Because the number of iterations is constant, the complexity is $O(1)$.

\begin{restatable}[]{thm}{bisection} 
\label{theorem:bisection}
    The function $KL(\delta)$ is strictly increasing in $\delta$. Moreover, to second order in $\delta$,
    \begin{equation}
        KL(\delta) = \frac{1}{2} \text{Var}_{p_t}(\alpha_t) \delta^2 + O(\delta^3).
    \end{equation}
\end{restatable}

\begin{proof}
    Let
    \begin{equation}
        Z(\delta) = \sum_{u \in \Sigma} p_t (u)\exp(\delta \alpha_t(u)).
    \end{equation}
    Note that
    \begin{equation}
        KL(\delta) = \sum_{u \in \Sigma} p_t(u) \exp(\delta \alpha_t(u)) [\delta \alpha_t(u) - \log Z(\delta)].
    \end{equation}
    By taking the derivative with respect to $\delta$, we find that
    \begin{equation}
        KL'(\delta) = \delta \, \text{Var}_{p_t\exp(\delta \alpha_t)}[\alpha_t] > 0.
    \end{equation}
    Hence, $KL(\delta)$ is strictly increasing in $\delta$.
    By Taylor expansion, we have that
    \begin{equation}
        \log Z(\delta) = \delta \mathbb{E}_{p_t\exp(\delta \alpha_t)}[\alpha_t] + \frac{1}{2} \delta^2 \text{Var}_{p_t}[\alpha_t] + O(\delta^3).
    \end{equation}
    Hence,
    \begin{equation}
        KL(\delta) = \frac{1}{2} \text{Var}_{p_t}(\alpha_t) \delta^2 + O(\delta^3).
    \end{equation}
\end{proof}

\section{On the No Self-Hash Assumption}
\label{app:no_selfhash}

In this section, we discuss the no self-hash assumption introduced in \cref{sec:method}, and evaluate a simple self-hashing scheme (corresponding to SumHash with $\mathcal{C}=\{0\}$).

\begin{figure}[t]
    \centering
    \includegraphics[width=0.48\textwidth]{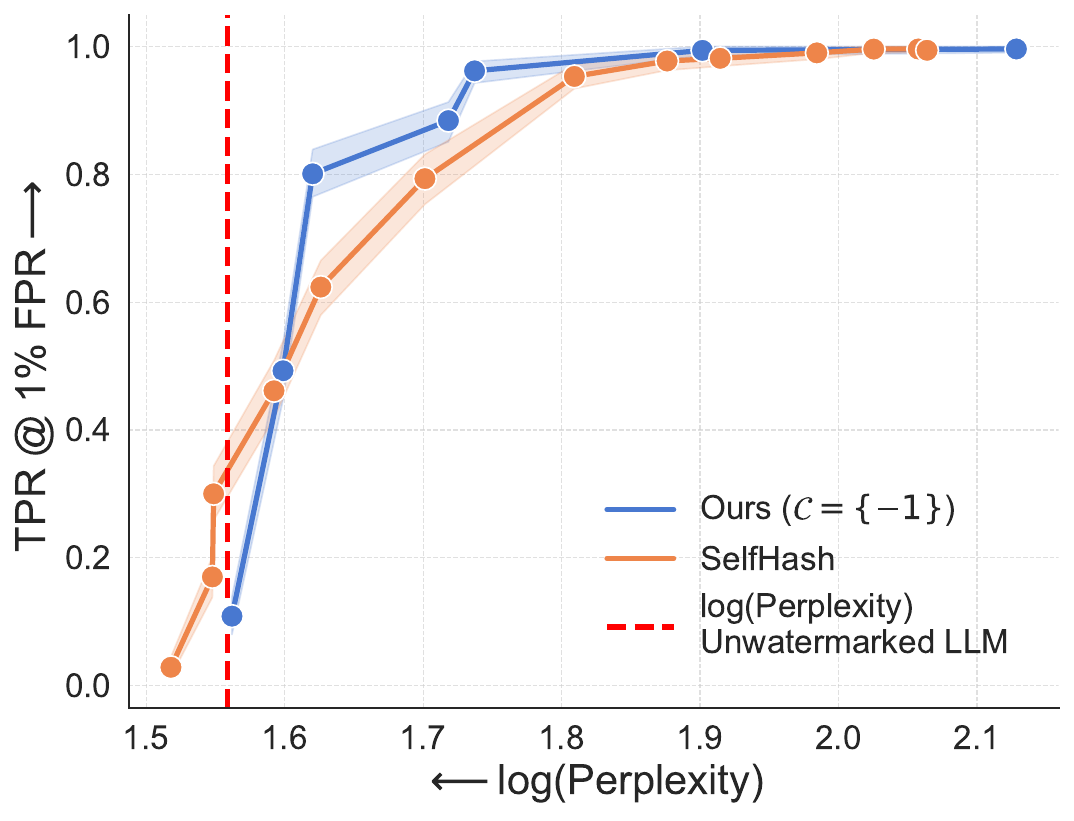}
    \caption{\textbf{Evaluation of Our SelfHash Scheme}
    We study the trade-off between detectability (TPR@1) and text quality (log PPL) of our approach using the SelfHash scheme; and compare it to SumHash with $\mathcal{C}=\{-1\}$.
    Responses are generated with \llada at temperature $0.7$ and metrics averaged over $600$ samples.
    }
    \label{fig:tpr_ppl_selfhash}

\end{figure}

\paragraph{Derivation Without the Assumption}
We recall the notation from \cref{sec:method}.
Let $p \in \Delta(\Sigma)^L$ be the factorized probability distribution over the set of sequences, let $\Omega \sim p$ be a random sequence, and $H_t: \Sigma^L \rightarrow \mathcal{H}$ the function that returns the hash of the context at position $t$ 
(without the no self-hashing assumption).
We denote by $\mathbb{P}_q$ the probability under the distribution $q \in \Delta(\Sigma)^L$.
Then, \cref{eq:energy_function} becomes
\begin{equation}
    \forall q \in \Delta(\Sigma)^L,  \mathbb{E}_{\Omega \sim q}[\hat{\gamma}(\Omega)]  = \frac{1}{L} \sum_{t=1}^L  \sum_{h \in \mathcal{H}} \sum_{u \in \Sigma} G_{h,u} \mathbb{P}_q(H_t=h, \Omega_t =u).
\end{equation}
With the no self-hash assumption, we could further simplify
\begin{equation}
    \forall q \in \Delta(\Sigma)^L, \forall t, \forall h \in \mathcal{H}, \forall u \in \Sigma, \mathbb{P}_q(H_t=h, \Omega_t =u) = \mathbb{P}_q(H_t=h)  \mathbb{P}_q(\Omega_t =u).
\end{equation}
Yet, without the no self-hash assumption, by letting
\begin{equation}
    \forall q \in \Delta(\Sigma)^L, J(q) = \frac{1}{L} \sum_{t=1}^L  \sum_{h \in \mathcal{H}} \sum_{u \in \Sigma} G_{h,u} \mathbb{P}_q(H_t=h, \Omega_t =u),
\end{equation}
then \cref{alg:diffusion_lm_watermark} still holds. The issue is simply how to compute $J(q)$ in practice.

\paragraph{A Simple Self-Hash Hashing Scheme}
We consider a simple self-hash scheme, $\forall t, \forall \omega \in \Sigma^L, H_t(\omega) = \omega_t$. 
This scheme corresponds to SumHash with $\mathcal{C} = \{0\}$.
With this scheme, we have
\begin{equation}
    \forall q \in \Delta(\Sigma)^L, \forall t, \forall h \in \mathcal{H}, \forall u \in \Sigma, \mathbb{P}_q(H_t=h, \Omega_t =u) = 1\{h=u\} q_t(u).
\end{equation}
Hence, we have
\begin{equation} 
    \forall q \in \Delta(\Sigma)^L, J(q) = \frac{1}{L} \sum_{t=1}^L  \text{diag}(G) \cdot q_t.
\end{equation}
Given this $J$, we can simply apply \cref{alg:diffusion_lm_watermark} to watermark a DLM with SelfHash.
Note that the underlying scheme is very simple; we have
\begin{equation}
    \forall t, q^*_t \propto p_t \exp(\delta_t \text{diag}(G)). 
\end{equation}
This means that the watermark simply consists of increasing the probabilities of tokens whose hash makes them green.

\cref{fig:tpr_ppl_selfhash} shows the detectability/quality trade-off of our watermark using the SelfHash scheme described above. 
We find that, compared to SumHash with $\mathcal{C}=\{-1\}$, this trade-off is relatively similar.

\section{Resources}

\subsection{Computational Resources}
\label{app:ressources}

\begin{figure}[t]
    \centering
    \includegraphics[width=0.7\textwidth]{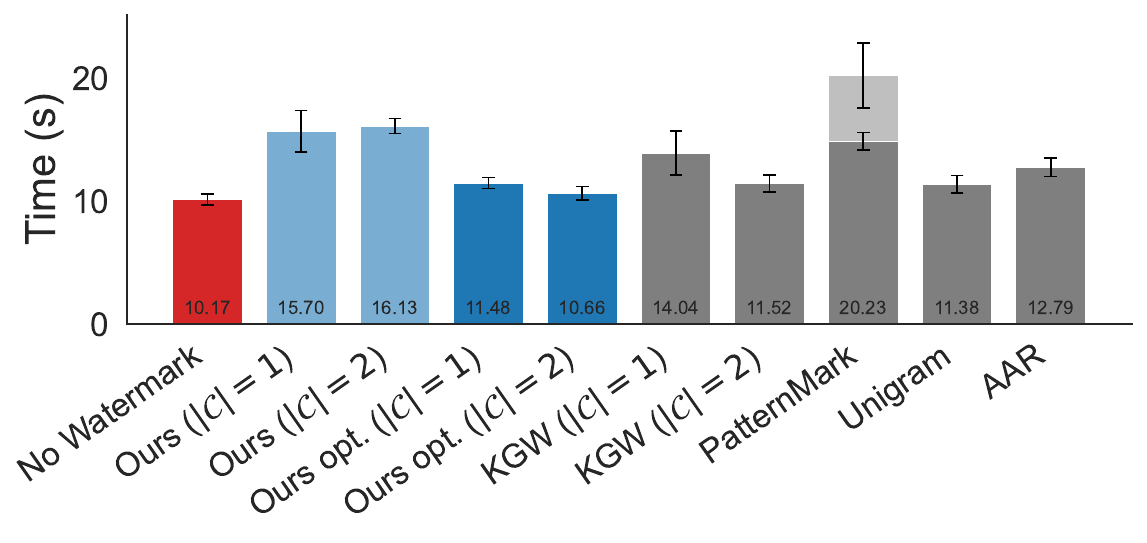}
    \caption{\textbf{Implementation Speed of Our Watermark} We compare the average time, over $5\times300$ repetitions, to generate a $300$-token-long sequence with $300$ diffusion steps on \llada with different watermarks applied and the corresponding average detection time (in lighter colors).
    \emph{This plot shows implementation speed on a given hardware, and variations due to the hardware (\eg temperature) are not properly controlled.}
    For most watermarks, the detection time is negligible compared to the generation time and thus not visible.
    We highlight our methods in light blue and an optimized implementation in blue, and the base model without watermark in red.}
    \label{fig:speed_comparison}
    \vspace{-0.15in}
\end{figure}

All experiments presented in this work were conducted on either a single H100 (24 vCPU) GPU node with 80GB of memory (hosted by Lambda Labs) or a single A100 (24 vCPU) with 40GB of memory.

\paragraph{Runtime Analysis}
\cref{fig:speed_comparison} shows the average time, over $5 \times 300$ repetitions, in seconds to generate a $300$-token-long sequence with $300$ diffusion steps on \llada with different watermarks.
The numbers only reflect the average speed required to run the experiments given specific hardware under specific conditions (\eg temperature).
Importantly, across watermarks, we could not control such external factors, which may significantly impact the observed runtime.
In particular, for each watermark we run the experiment ($300$ repetitions) $5$ times non-consecutively to account for such factors.
KGW corresponds to the baseline presented in \cref{sec:eval}, PatternMark and Unigram are prior order-agnostic watermarks presented in detail in \cref{app:oa_watermarks}, and AAR is a baseline adaptation of the watermark from~\citet{aar} introduced in \cref{app:additional_experiments}.
For our watermark, we use the default hyperparameters from \cref{sec:eval}.
We observe that the overhead introduced by our watermark is minimal (only $+9$ms/token with $\mathcal{C}=\{-1\}$), even though our implementation is not optimized for speed.
With an optimized implementation (Ours opt.), the speed becomes similar to that of the baselines.
Importantly, unlike PatternMark, our detection time is negligible.
Because detection is applied indiscriminately to any text, it is crucial for it to be fast.

\begin{figure}[t]
    \centering
    \includegraphics[width=0.48\textwidth]{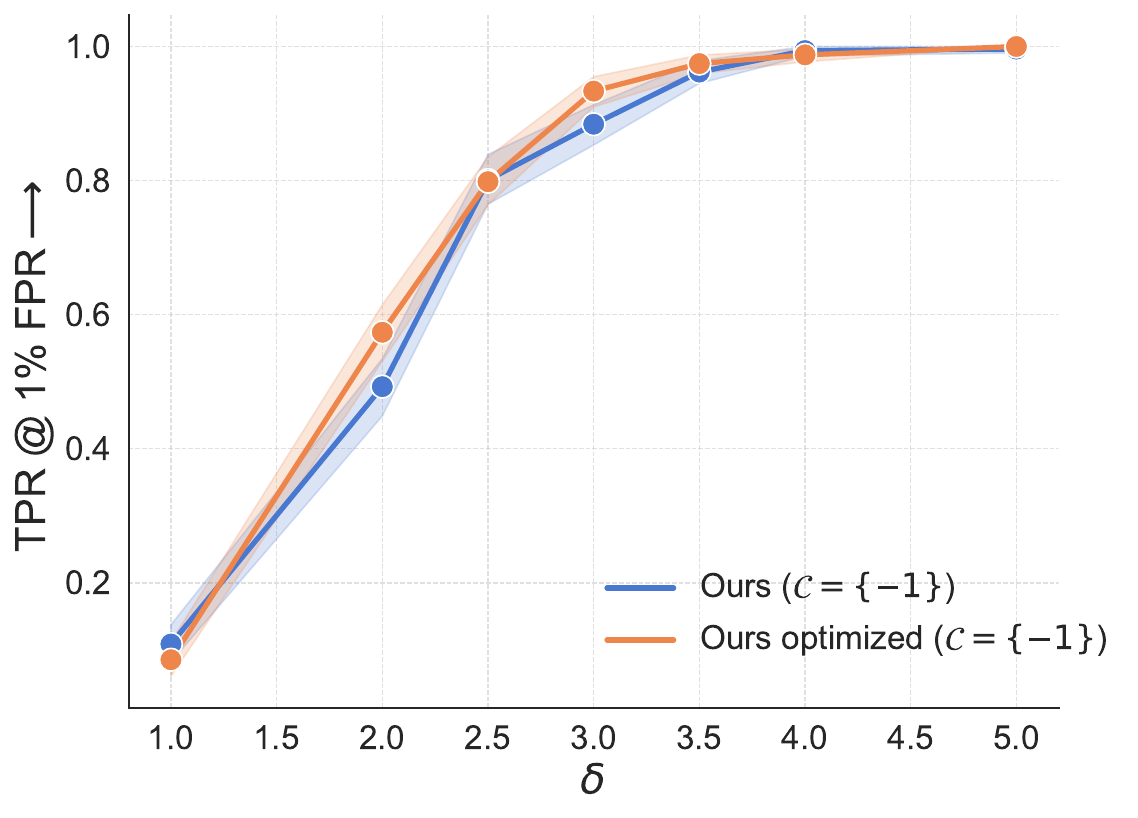}
    \includegraphics[width=0.48\textwidth]{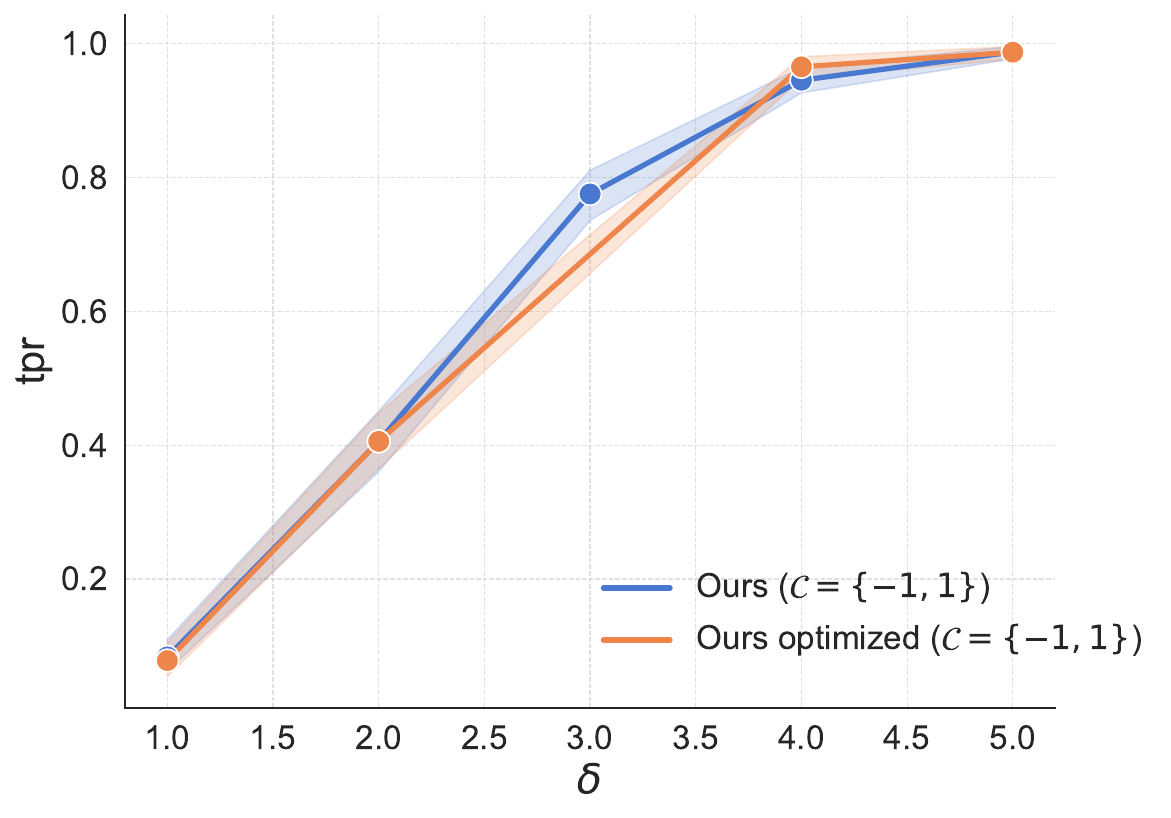}
    \caption{\textbf{Comparison of the Main and Optimized Watermark Implementation} We compare the watermark detectability (TPR@1) with respect to the watermark strength parameter $\delta$ of our main implementation and the optimized one, averaged over $600$ sequences of on average 275 tokens.
    We find that both implementations are equivalent in terms of watermark detectability.}
    \label{fig:fast_implementation}
    \vspace{-0.15in}
\end{figure}

\paragraph{Optimized Implementation}
While our main contribution is the watermark algorithm \cref{alg:diffusion_lm_watermark}, we tested whether we could implement it in a fast and efficient manner.
Unlike the main implementation used for all our experiments (\cref{sec:eval}), which is very general and can be easily adapted to new DLM implementations, the fast implementation is designed specifically for the \llada implementation with random remasking.
Specifically, we tightly integrated the watermark into the remasking procedure.
In the main implementation, we compute the watermarked distribution for every token position at every step (even for already unmasked tokens).
With the optimized version, we compute the watermarked distribution only for the tokens to be unmasked.
Hence, both implementations are functionally equivalent.
To experimentally validate their equivalence, we run the same experiments as in \cref{ssec:eval:main} with the fast implementation and compare them to the main implementation.
\cref{fig:fast_implementation} shows that both implementations yield the same results, validating their equivalence.

\subsection{LLM Usage}

In this work, we use LLMs as coding assistants and to make minor grammatical and stylistic changes to the paper.
Importantly, no content in this paper was generated by LLMs, except for the watermarked text examples in \cref{app:ssec:text_samples}.

\section{Proofs} \label{app:proofs}

\subsection{Solution of the Optimization Problem}

We recall \cref{theorem:optimal_watermark_solution}:

\optimalWatermarkSolution*

\begin{proof}
We first recall the optimization problem from \cref{eq:watermark_optim_problem},
\begin{align}
\text{maximize}_{q\in\Delta(\Sigma)^L}   &\; J(q) = \sum_{t=1}^{L} h_t(q)^T \cdot G \cdot q_t \\
\text{subject to} &\;
\begin{alignedat}[t]{0}
\forall t \in [1,\ldots,L], \text{KL}(q_t,p_t) \le \varepsilon
\end{alignedat}
\end{align}
where for all $t\in[1,\ldots,L]$, $h_t \in \Delta(H)$, $p_t \in \Delta(\Sigma)$ and $G \in \{0,1\}^{|H|\times|\Sigma|}$, and $H,\Sigma$ are finite non-empty sets. 
We recall the definition of the KL-divergence, 
\begin{equation}
    \forall t \in [1,\ldots,L], KL(q_t,p_t) = \sum_{u \in \Sigma} q_t(u)(\log(q_t(u)) - \log(p_t(u))).
\end{equation}
Let $\lambda \in \mathbb{R}^L, \mu \in \mathbb{R}^L$, we introduce the Lagrangian
\begin{equation}
    \mathcal{L}(q,\lambda, \mu) = J(q) - \sum_{t=1}^{L} \lambda_t (KL(q_t,p_t) - \varepsilon) - \sum_{t=1}^L \mu_t (\sum_{u \in \Sigma} q_t(u) - 1).
\end{equation}
Because $J$ is continuous and the set of constraints is compact, $J$ attains a maximum $J^*$ that is reached on the set of constraints. 
Furthermore, Slater's conditions are verified because $p$ is within the set of constraints and satisfies for all $t$, $KL(p_t,p_t) \le \varepsilon$.
Let $q^*$ be a point that reaches $J^*$.
We know that $q^*$ satisfies the Karush-Kuhn-Tucker (KKT) condition. Hence,
\begin{equation}
    \forall t \in [1,\ldots,L], \forall u \in \Sigma, \alpha_t(q^*)(u) - \lambda_t \left(1 + \log(\frac{q^*_t(u)}{p_t(u)}) \right) - \mu_t = 0
\end{equation}
where $\alpha_t(q) = \nabla_{q_t}J(q)$. Thus, with $\delta = 1/\lambda$, we find \cref{eq:optimal_watermark_solution},
\begin{equation}
    \forall t \in [1,\ldots,L], q^*_t \propto p_t \exp(\delta_t \alpha_t(q^*)).
\end{equation}

Furthermore, we know thanks to the KKT condition that assuming the constraint is active 
\begin{equation}
    \label{eq:kl_unicity}
    \forall t \in [1,\ldots,L], KL(q^*_t,p_t) = \varepsilon.
\end{equation}
Because for all $t\in[1,\ldots,L]$ the KL-divergence is monotone with respect to $\delta_t$, this guarantees uniqueness of $\delta_t$, the solution of \cref{eq:kl_unicity}.

\end{proof}

\subsection{Definition of \booster and \greenify terms}
\label{app:proofs:booster_and_greenify}

In \cref{ssec:method:interpretation}, we show that, in the case of SumHash with $\mathcal{C} = \{-1\}$, our watermark algorithm can be split into two terms: the \booster\ term, similar to the Red-Green watermark boost, and the \greenify\ term that favors sampling a token whose hash makes the most likely next tokens more likely to be green.
In this part, we derive the computations to define the \booster\ and \greenify\ terms in the general setting.

We recall that $h_t: \Delta(\Sigma)^L \rightarrow \Delta(\mathcal{H})$ is the function that maps a factorized token probability vector to a corresponding hash probability vector at position $t$, and that $p \in \Delta(\Sigma)^L$ is the factorized probability vector over the sequence given by our DLM. 
Lastly, we introduce the factorized hash probability distribution $h \in \Delta(\mathcal{H})^L$ defined as $h = (h_1,\ldots,h_L)$.

With \cref{theorem:optimal_watermark_solution}, we have that, at the first order
\begin{equation}
    \forall t \in [1,\ldots,L], q^*_t \propto p_t \exp(\delta \alpha_t (p)).
\end{equation}
Yet, if we distribute the derivative in $\alpha_t = \nabla_{p_t} J(p)$ (with $J$ defined in \cref{eq:energy_function}), we get that
\begin{equation}
    \alpha_t(p) =  G^\top h_t(p) + G \nabla_{p_t}h(p).
\end{equation}
$G^\top h_t(p)$ corresponds to the \booster term: it can be interpreted as applying the Red-Green watermark boost in expectation over the distribution of the context hashes.
Therefore, $G \nabla_{p_t}h(p)$ is the \greenify term, but it is not as easily interpretable without explicitly deriving the gradient.

Under the self-hashing assumption, \ie $0 \not\in \mathcal{C}$, we have that $\nabla_{p_t} h_t(p) = 0$.
We also assume that the hash distribution $h_t$ is given by
\begin{equation}
    \forall h \in \mathcal{H}, \; h_t(h) = \sum_{\substack{u_1,\ldots,u_L \in \Sigma^L\\H_t(u_1,...,u_L) = h}} p_1(u_1)\ldots p_L(u_L)
\end{equation}
Given these assumption, we have that
\begin{equation}
    G \nabla_{p_t}h(p) = \sum_{s\neq t \in [1,\ldots,L]} (\nabla_{p_t}h_s(p))^\top G p_s.
\end{equation}
Here, the assumption $\nabla_{p_t} h_t(p) = 0$ removed the self-feedback term $(\nabla_{p_t}h_t(p))^\top G p_t$. We have, given $u\in \Sigma$ and $h \in \mathcal{H}$, and for $s\neq t \in [1,\ldots,L]$,
\begin{equation}
    \frac{\partial h_s(p)_h}{\partial p_t(u)} = \sum_{u_{-t} \in \Sigma^{L-1}} \textbf{1}\{H_s(u,u_{-t}) =h\} \prod_{i \neq t} p_i (u_i) =: \mathbb{P}[H_s(\Omega) =h | \Omega_t = u],
\end{equation}
with $\Omega \sim p$ the random variable representing sequences of tokens distributed according to the factorized probability distribution $p$.
Hence, by distributing the sums we get that for all $t \in [1,\ldots,L]$ and $u \in \Sigma$,
\begin{equation}
    \alpha_t(p)_u = \underbrace{\sum_{h\in \mathcal{H}} G_{h,u} h_t(p)_h}_{\text{\booster}} + \underbrace{\sum_{s\neq t} \sum_{h \in \mathcal{H}} \mathbb{P}[H_s(\Omega) =h | \Omega_t = u] (G p_s)_h}_{\text{\greenify}}.
\end{equation}
We see here that the \greenify term promotes tokens $u$ at position $t$ whose induced hash distribution on positions $s\neq t$ makes the most probable tokens at $s$ green.

\section{Extended Results}
\label{app:extended_results}

In this section, we provide extended results for the evaluation from \cref{sec:eval}. 
Specifically, we include the ROC curves for most of the experiments, as well as some text examples of unwatermarked and watermarked text.

\subsection{ROC Curves}

In this part, we show the ROC curves for the experiment from \cref{sec:eval}.
Such curves may guide practitioners in deciding which hyperparameters best suit their needs.

\begin{figure}[t]
    \centering
    \includegraphics[width=0.43\textwidth]{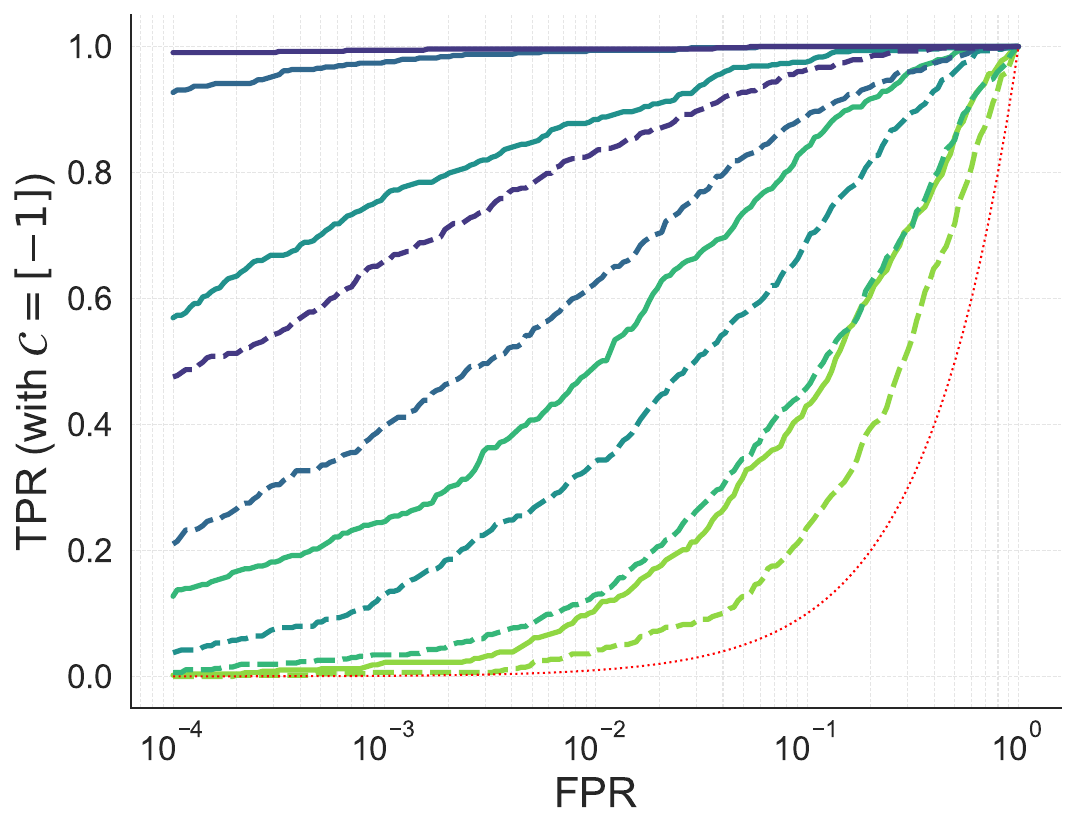}
    \includegraphics[width=0.55\textwidth]{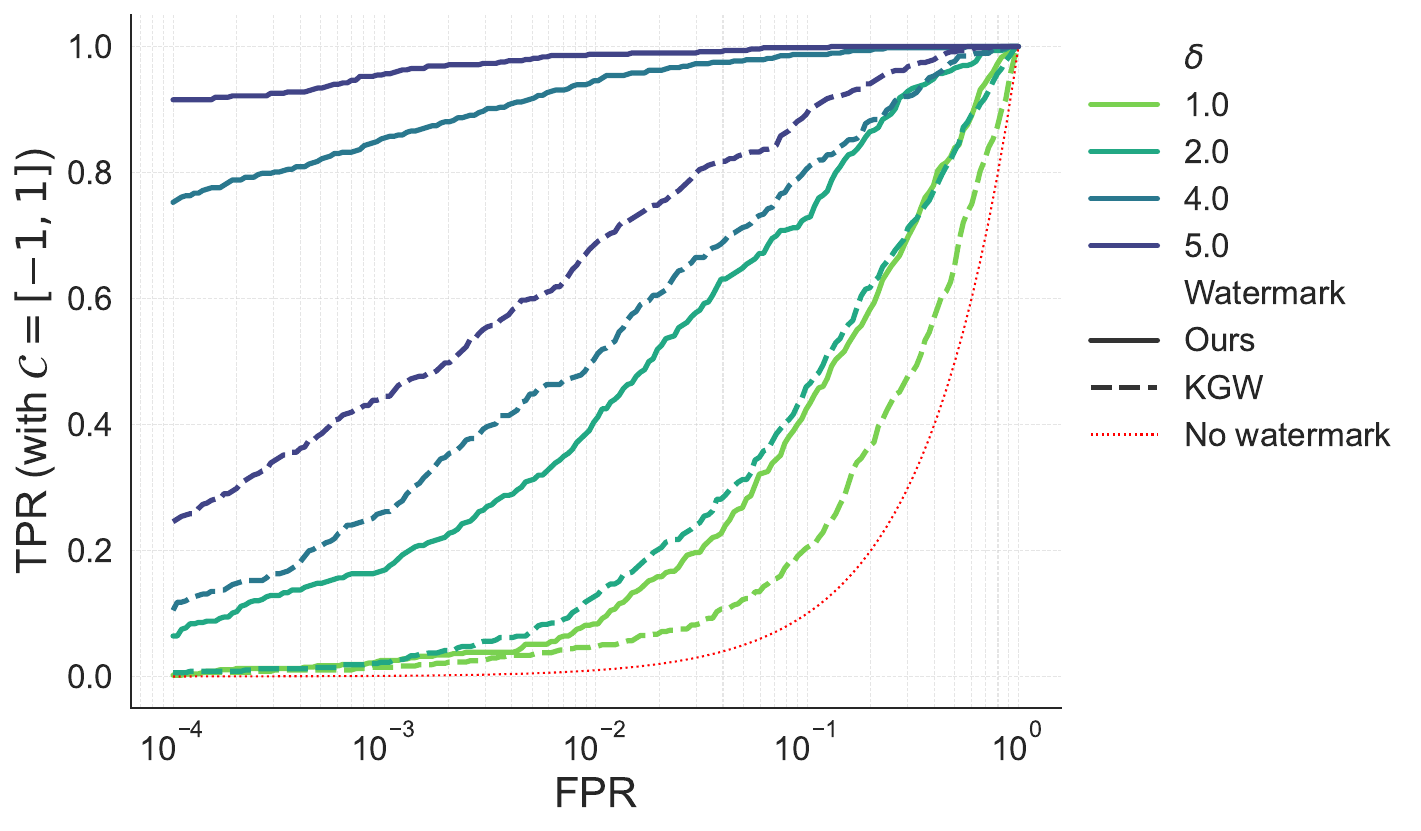}
    \includegraphics[width=0.43\textwidth]{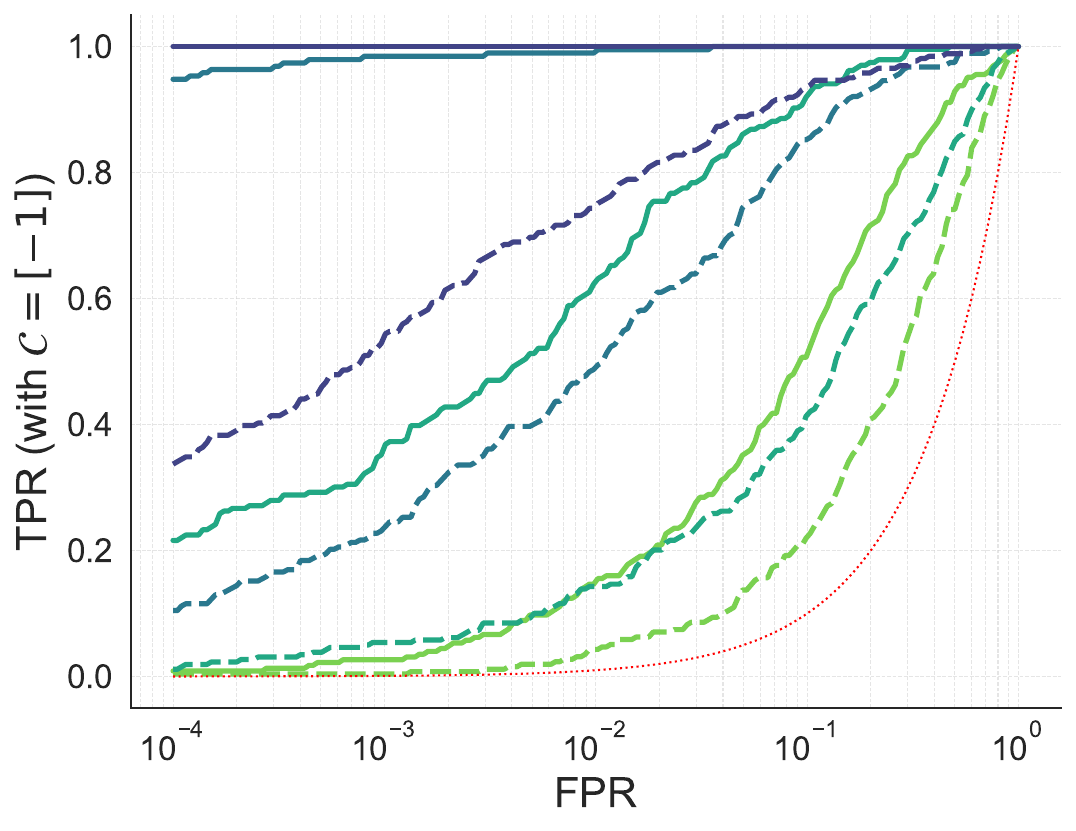}
    \includegraphics[width=0.55\textwidth]{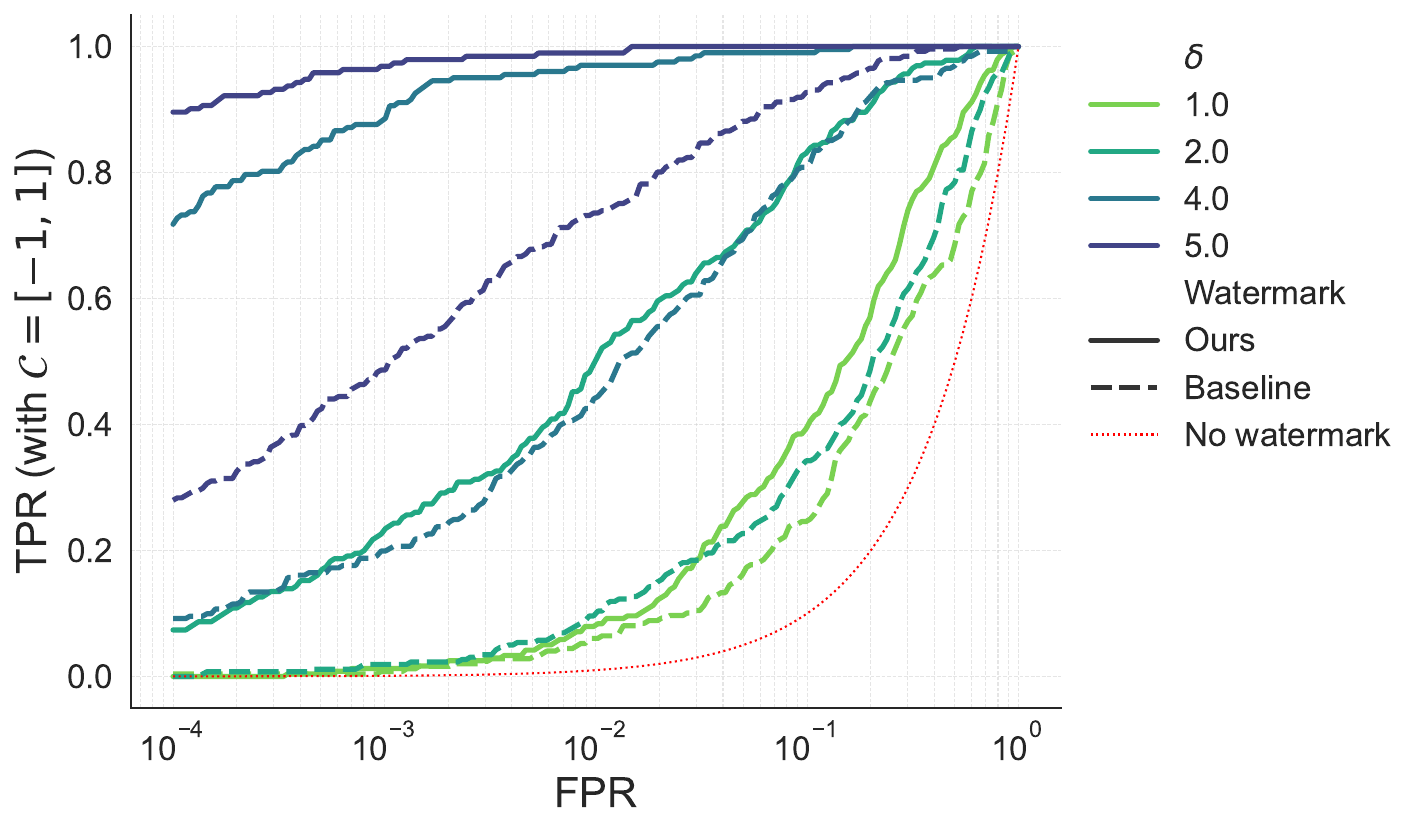}
    \caption{\textbf{Watermark Performance} ROC curves (log scaled) of KGW and our watermark for both \llada (top) and \dream (bottom), and different values of $\delta$ using $\mathcal{C}=\{-1\}$ (left) or $\mathcal{C}=\{-1,1\}$ (right).}
    \label{fig:roc_llada_main}
\end{figure}

\paragraph{Main experiments}
In \cref{fig:roc_llada_main}, we show the ROC curves from the experiments in \cref{sec:eval}.
We observe that for $\delta \ge 4$, we achieve a TPR@1 of around $99\%$ with our watermark, while the baseline remains at most $80\%$.
Hence, for a strong watermark at low temperature (here $T=0.5$) and in the instruction setting, we recommend that practitioners use $\delta = 4$.

\begin{figure}[t]
    \centering
    \includegraphics[width=0.43\textwidth]{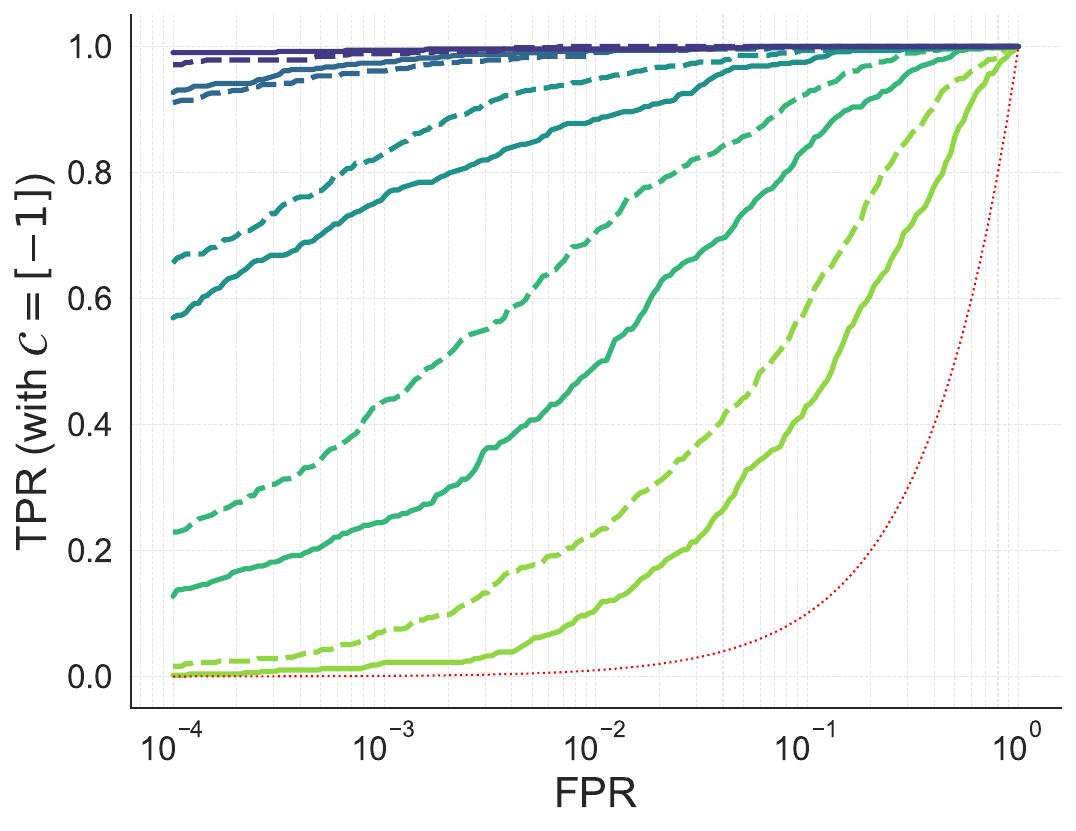}
    \includegraphics[width=0.55\textwidth]{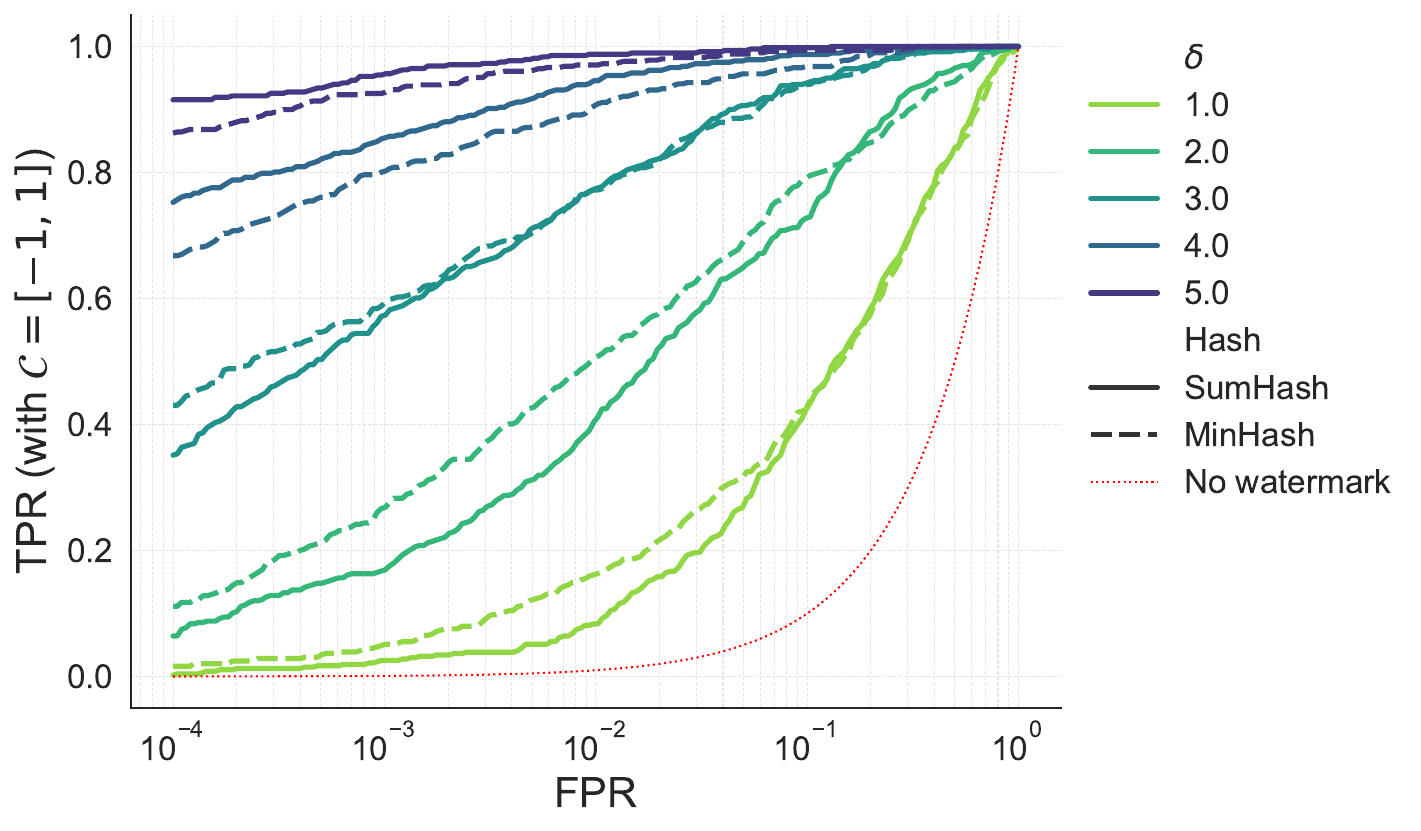}
    \caption{\textbf{Ablation on the Hashing Scheme} ROC curves (log scaled) of our watermark with different hashing schemes for \llada, and different values of $\delta$ using $\mathcal{C}=\{-1\}$ (left) or $\mathcal{C}=\{-1,1\}$ (right).}
    \label{fig:roc_llada_hash}
\end{figure}

\paragraph{Hashing Scheme}
\cref{fig:roc_llada_hash} shows the ROC curves for both SumHash and MinHash. 
We see that the watermark strength is not significantly influenced by the choice of hash.
In fact, the choice of hash should be dictated by other concerns such as watermark security or robustness, as explored in prior works~\citep{ws, kgw2}.

\begin{figure}[t]
    \centering
    \includegraphics[width=0.43\textwidth]{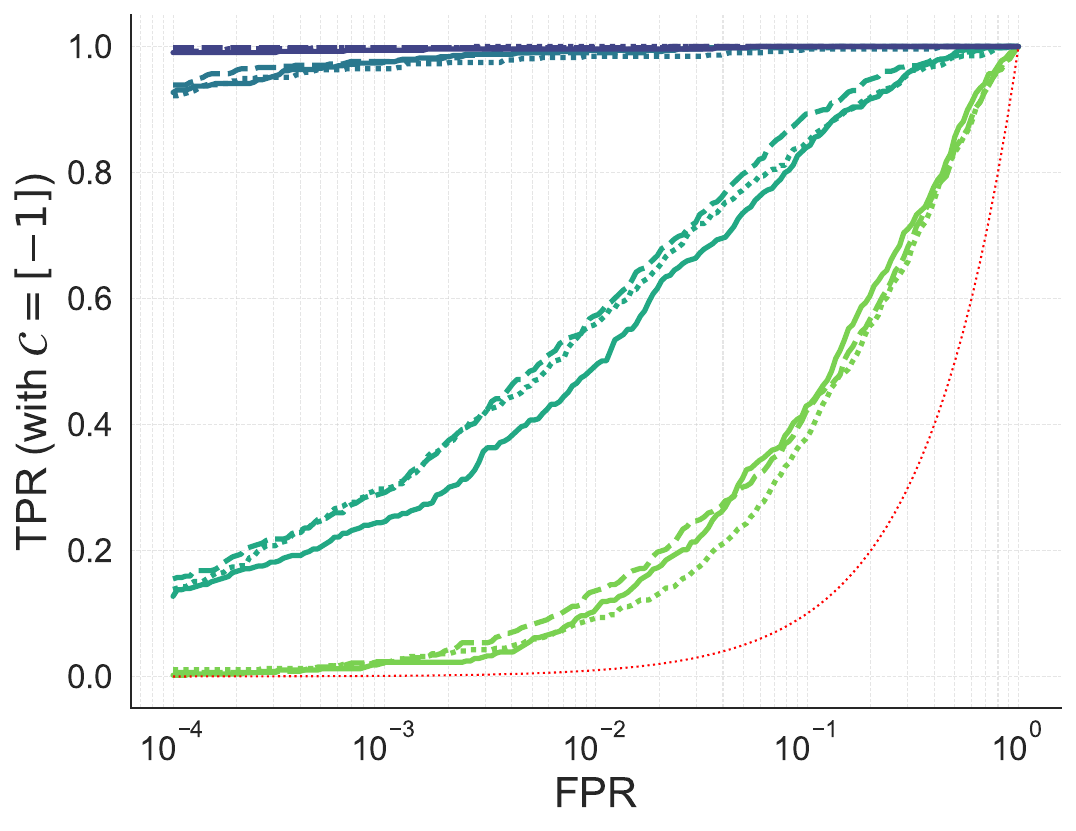}
    \includegraphics[width=0.55\textwidth]{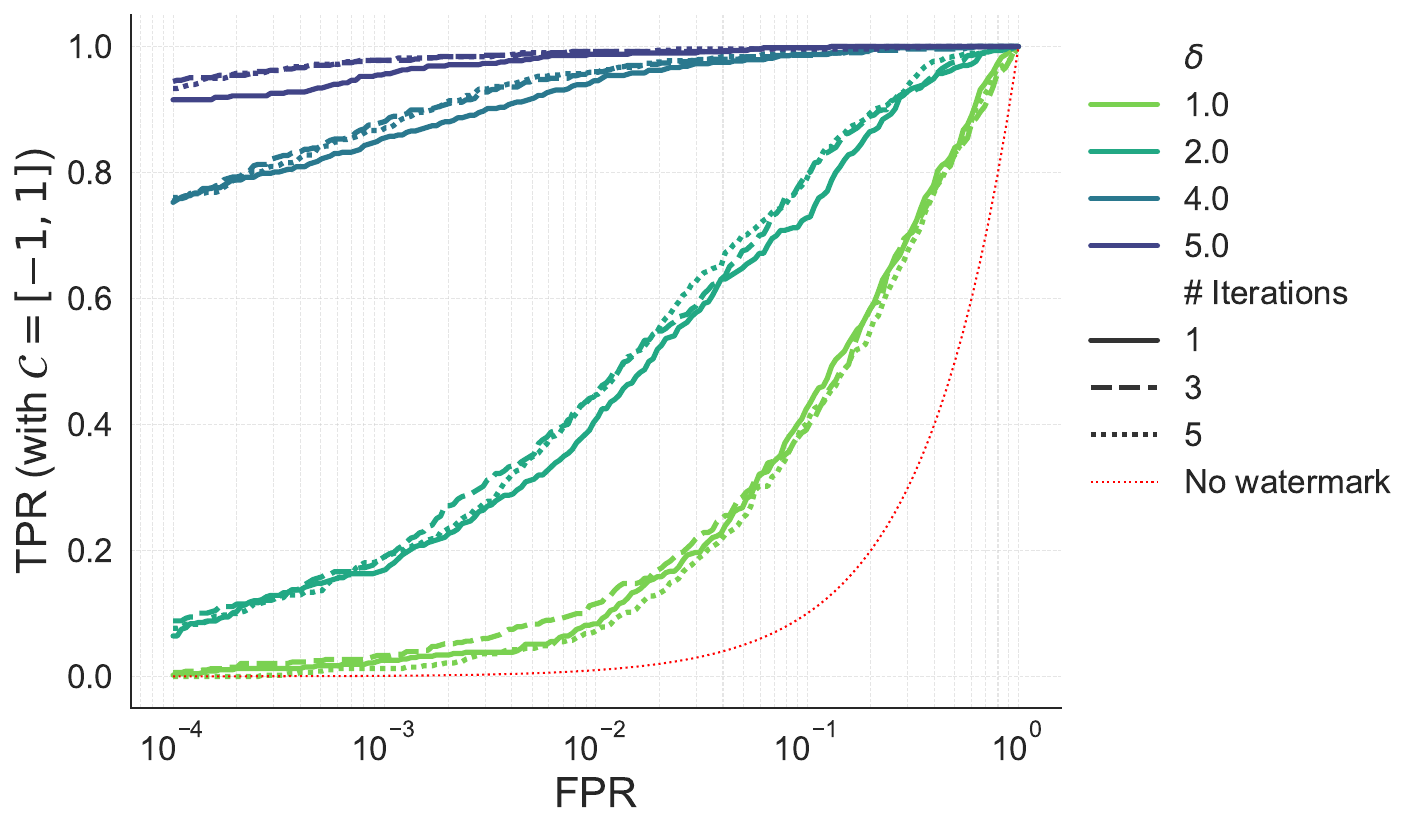}
    \caption{\textbf{Ablation on the Fixed-Point Iteration} ROC curves (log scaled) of our watermark with different number of fixed-point iterations for \llada, and different values of $\delta$ using $\mathcal{C}=\{-1\}$ (left) or $\mathcal{C}=\{-1,1\}$ (right).}
    \label{fig:roc_llada_iter}
\end{figure}

\paragraph{Fixed-point Iteration}
\cref{fig:roc_llada_iter} shows the ROC curves with \llada for different numbers of fixed-point iterations.
The conclusion is similar to \cref{ssec:eval:ablation}: increasing the number of iterations only marginally increases the watermark strength, and the increase plateaus quickly.
Thus, we advise practitioners to use only a single iteration.

\subsection{Text Samples}
\label{app:ssec:text_samples}

In this part, we show examples of prompts and model answers for both unwatermarked text and text generated using our watermark (with $\delta = 4$ and $\gamma = 0.25$).
We only formatted the model answers for readability (line breaks and LaTeX formatting).

\begin{tcolorbox}[breakable,colback=outerbg,colframe=outerframe,boxsep=6pt,arc=2mm]

    \begin{promptbox}[]
    Please tell me about the romantic relationship between Julius Caesar and Cleopatra
    \end{promptbox}

    \begin{modelbox}[\texttt{\llada (No Watermark)}\hfill]{outerframe}{outerframe}   
        Julius Caesar and Cleopatra were both historical figures, but they did not have a romantic relationship with each other. Julius Caesar was a famous Roman general, and Cleopatra was a powerful queen of Egypt. However, they did have a close relationship that is often interpreted as romantic.\\\\
        According to history, Caesar met Cleopatra while visiting her kingdom during his campaign in 48 BCE. He was impressed by her beauty, and they began a close friendship. However, it's unlikely that Caesar and Cleopatra had a romantic relationship during this early period.\\\\
        In 43 BCE, Julius Caesar and Cleopatra allegedly engaged in a romantic relationship after Caesar was assassinated by a mob in Rome. Caesar's death led to a brief civil war, and Cleopatra traveled to Rome in support of her brother, Antony, who wanted to secure Caesar's legacy and restore Rome.\\\\
        According to historical accounts, Cleopatra's presence in Rome led to two children with Antony, Caesarion and Philopator. While this may provide evidence of a romantic relationship, it's worth noting that Antony and Cleopatra were married, and the children were actually born to Antony and Cleopatra, not Caesar.\\\\
        While the relationship between Caesar/ Antony and Cleopatra is a topic of historical debate and interpretation, it's generally accepted that they had a close and intense emotional connection. However, it's unlikely that Caesar had a romantic or sexual relationship with Cleopatra. 
    \end{modelbox}

    \begin{modelbox}[\texttt{\llada} with $\mathcal{C}=\{-1\}$ \hfill \textit{p-value: $0.00$}]{modelBback}{modelBframe}
    Yes, I can provide the information.\\\\
    The romantic relationship between Julius Caesar and Cleopatra is undoubtedly one of the most well-known romantic relationships in ancient history. Here are some of the details about their relationship:\\\\
    Julius Caesar, the founder of the Roman Republic, and Cleopatra, the Queen of Egypt, following the death of her father, met in 48 BCE when Cleopatra VII arrived to tutor Caesar. Caesar, from the start, was drawn to Cleopatra, and their relationship evolved into a deep fall in love.\\\\
    It is often said that Cleopatra was one of Caesar’s first lovers. Their relationship ended abruptly during Caesar’s return to Rome in 46 BCE. Later, Caesar continued to show interest in Cleopatra until 42 BCE.\\\\
    Following Caesar’s assassination in 44 BCE, Cleopatra arrived to Rome to join in war against one of Caesar’s assassins and participated in the conspiracy that killed Caesar. After Caesar’s assassination, Cleopatra returned to Rome, where she continued to live as Caesar’s lover, using her position as Caesar’s lover to install herself as Caesar.\\\\
    Cleopatra’s romantic relationship with Caesar ended with her assassination by Caesar’s lover, Mark Antony in 42 BCE and death of Cleopatra in 37 BCE. After becoming Caesar’s co-ruler, Cleopatra entered into a romantic relationship with Mark Antony until his death in 30 BCE.
    \end{modelbox}

    \begin{modelbox}[\texttt{\llada} with $\mathcal{C}=\{-1,1\}$ \hfill \textit{p-value: $7.7$e$-3$}]{modelAback}{modelAframe}
    I was unable to find any information, ancient historians did not confirm a romantic relationship between Julius Caesar and Cleopatra. Julius Caesar was known to have had a close relationship with Cleopatra VII of Egypt, but their relationship was seen as a political alliance rather than a romantic partnership.
    \end{modelbox}

\end{tcolorbox}

\begin{tcolorbox}[breakable,colback=outerbg,colframe=outerframe,boxsep=6pt,arc=2mm]

    \begin{promptbox}[]
    The Manhattan Project I understand that it resulted in the first atom bombs , but why is it held as such an important event in history ? And why are there so many conspiracies surrounding it ?
    \end{promptbox}

    \begin{modelbox}[\texttt{\llada (No Watermark)}\hfill]{outerframe}{outerframe}   
        The Manhattan Project was a massive, top-secret effort to create the first atomic bombs during World War II. It's considered a significant event in history because it ended the war and changed the way people thought about the world.\\\\
        The reason it's important in history is because it marked the first efforts to create a weapon of mass destruction, paving the way for the development of nuclear weapons. The project also led to the creation of the first successful nuclear weapons, which changed the course of human history.\\\\
        There are so many conspiracies surrounding the Manhattan Project because it was intentionally kept secretive to ensure that the Nazis got no chance to develop a similar weapon. The project involved some of the most brilliant minds, scientists, and engineers of the time, and the complexity of keeping such a project under wraps.\\\\
        There are many conspiracy theories about the project's origins, progress, and even its consequences. For example, some people believe that the US government had a clear idea of the project's scope and chose to mislead the public, while others believe that they were deceived.\\\\
        Additionally, the Manhattan Project's creation of nuclear weapons is still a concern today, and it has led to various conspiracy theories about government secrecy and the potential for nuclear to be stolen or misused.\\\\
        Overall, the Manhattan Project remains a significant event in history, and the conspiracies surrounding it are driven by the fascination with its hidden secrets and the ongoing concerns about the potential for nuclear harm.
    \end{modelbox}

    \begin{modelbox}[\texttt{\llada} with $\mathcal{C}=\{-1\}$ \hfill \textit{p-value: $2.7$e$-9$}]{modelBback}{modelBframe}
        The Manhattan Project was a secret effort to build the first-ever atomic weapons, spearheaded by the United States during World War II. It's the project responsible for the creation of the atom bombs dropped on Japan in Hiroshima and Nagasaki in 1945. It was a crucial event in history as it led Japan to surrender before Germany, effectively ending World War II in Europe. It also led to the end of the Cold War and the United States gained significant influence on global power and politics.\\\\
        As for the conspiracies, many aspects of the project were classified, and there was a culture of secrecy around the technology developed by the government at the time. Some people suspected that the information about the project was either too slow or too accurate to be made public, and it led to speculation about certain aspects of the project, including the involvement of key scientists and the potential consequences of the spread of the technology.
    \end{modelbox}

    \begin{modelbox}[\texttt{\llada} with $\mathcal{C}=\{-1,1\}$ \hfill \textit{p-value: $0.00$}]{modelAback}{modelAframe}
        The Manhattan Project is held as such an important event in history because it resulted in the first atom bombs, which changed the world forever. During the war against Nazi Germany, they wanted to hurt a lot of people in one place. Now, scientists had a really big weapon that could also hurt many people in one place, which made the end of the war easier.\\\\
        Now, as for the many conspiracies surrounding it. There are always questions about who helped or who did what really happened. Some people believe the Allies might not have made the bombs, but they think it was a way to help them. There are also theories about how much information was given to the U.S. government or the U.S. officials and scientists.\\\\
        Some people believe that the important work on the project was actually done by people outside the U.S. government, and the U.S. government was not given the proper credit for the work. They also believe that the people involved in the project, such as the British intelligence, nuclear Fermi, and the Czech nuclear scientists, were more involved than what we thought they did.\\\\
        Lastly, some people believe that the U.S. government didn't know about the bomb, and that the U.S. government didn't know about it either. In this theory, some people believe that the U.S. government knew about the bomb and they didn't want to use them; they just thought the Germans would eventually find a way to use them.
    \end{modelbox}

\end{tcolorbox}

\begin{tcolorbox}[breakable,colback=outerbg,colframe=outerframe,boxsep=6pt,arc=2mm]

    \begin{promptbox}[]
    I want to talk about "Smooth",  a song by Rob Thomas.
    \end{promptbox}

    \begin{modelbox}[\texttt{\llada (No Watermark)}\hfill]{outerframe}{outerframe}   
        "Smooth" is a song by American country music singer Rob Thomas, released digitally on August 24, 2016, as the second single from his debut studio album, "Rob Thomas". The song was written by Rob Thomas, Luke Laird, and Blake Shelton, and was produced byBlake Shelton.\\\\
        "Smooth" is a mid-tempo country music song with a catchy and upbeat vibe. It features Rob Thomas in a smooth and romantic tone, showcasing his vocal abilities. The song's lyrics describe a smooth, confident, beautiful woman who brings a sense of happiness to the speaker.\\\\
        The song received generally positive reviews from music critics and fans alike. It charted within the top 10 on the Billboard Hot Country Songs chart, peaking at No. 7. The song also peaked within the top 40 on the US Billboard Hot 100 chart, peaking at No. 35.\\\\
        "Smooth" was nominated for four Country Music Awards, including Record of the Year, Song of the Year, and Song of the Year. The song won for Song of the Year and Record of the Year in 2018.\\\\
        The song has been performed and covered by many artists, including Taylor Swift, Keith Urban, Jason Aldean, and Luke Bryan, among others. Blake Shelton has also covered a version of the song, which was also released in 2016.
    \end{modelbox}

    \begin{modelbox}[\texttt{\llada} with $\mathcal{C}=\{-1\}$ \hfill \textit{p-value: $3.9$e$-6$}]{modelBback}{modelBframe}
        "Smooth" is a song by British singer and musician Rob Thomas, released in 1999 as the lead single from his self-titled second studio album. The song was well-received and co-produced by Phil Collins, and was also written by Phil Collins and Brian Eno. The song became a major commercial success, reaching number one in the UK, Australia, and New Zealand, as well as number 11 on the US Hot 100. The song's lyrics were inspired by the experiences and relationship between Rob Thomas and his wife, and it revolves around themes of love, relationships, and nostalgia. The song features a smooth rock sound with its soulful lyrics and catchy melody, and has become a popular song in the genre.
    \end{modelbox}

    \begin{modelbox}[\texttt{\llada} with $\mathcal{C}=\{-1,1\}$ \hfill \textit{p-value: $2.2$e$-6$}]{modelAback}{modelAframe}
        "Smooth" is a song by American singer-songwriter Rob Thomas. It was released on August 8, 1998, as the lead single from his self-titled debut album of the same year. The song was written by Rob Thomas and Bryan Sutton, and produced by Rob Thomas.\\\\
        "Smooth" is a country rock ballad, characterized by a smooth melody and Thomas's distinctive vocal style. The song's lyrics describe a romantic tone and a love story, making it a popular and beloved track in both country and pop audiences.\\\\
        The song had a significant impact on the Billboard charts, reaching number one on the Adult Contemporary chart and number six on the Hot 100 chart. It also reached number one on the Hot Country Singles chart, and spent 21 weeks on the charts.\\\\
        "Smooth" was also a critical and commercial success, earning Rob Thomas two Grammy Awards including Best Male Country Vocal Performance.\\\\
        In the 2000 film "The Wedding Singer", "Smooth" was featured as one of the main soundtracks of the movie.\\\\
        In 2000, Rob Thomas performed in a reality show, which was a live-action version of the movie, "The Wedding Singer" and as part of the show, he performed "Smooth"
    \end{modelbox}

\end{tcolorbox}

\fi

\end{document}